\newcommand{\todov}[2][]{\xspace}
\newcommand{\todoc}[2][]{\xspace}
\newcommand{\todocr}[2][]{\xspace}
\newcommand{\todog}[2][]{\xspace}
\title{Trajectory Data Suffices for Statistically Efficient Learning in Offline RL with Linear $q^\pi$-Realizability and Concentrability}
\author{%
  Volodymyr Tkachuk \\ %
  University of Alberta, Edmonton, Canada \\
  \And
  Gell\'ert Weisz\\
  Google DeepMind, London, UK\\
 \And
 {Cs}aba {Sz}epesv\'ari\\
 Google DeepMind, Edmonton, Canada\\
 University of Alberta, Edmonton, Canada\\
}
\begin{document}

\maketitle

\begin{abstract}
We consider offline reinforcement learning (RL) in $H$-horizon Markov decision processes (MDPs) under the linear $q^\pi$-realizability assumption, where the action-value function of every policy is linear with respect to a given $d$-dimensional feature function.
The hope in this setting is that learning a good policy will be possible without requiring a sample size that scales with the number of states in the MDP.
\citet{foster2021offline} have shown this to be impossible even under $\text{\textit{concentrability}}$, 
a data coverage assumption where a coefficient $C_\text{conc}$ bounds the extent to which the state-action distribution of any policy can veer off the data distribution.
However, the data in this previous work was in the form of a sequence of individual transitions.
This leaves open the question of whether the negative result mentioned could be overcome if the data was composed of sequences of full trajectories.
In this work we answer this question positively by proving 
that with trajectory data, a dataset of size $\text{poly}(d,H,C_\text{conc})/\epsilon^2$ is sufficient for deriving an $\epsilon$-optimal policy, regardless of the size of the state space. 
The main tool that makes this result possible is due to 
\citet{weisz2023online}, who demonstrate that 
linear MDPs can be used to approximate
linearly $q^\pi$-realizable MDPs.
The connection to trajectory data is that the linear MDP approximation relies on ``skipping'' over certain states. The associated estimation problems are thus easy when working with trajectory data, while they remain nontrivial when working with individual transitions.
The question of computational efficiency under our assumptions remains open.

\end{abstract}

\section{Introduction} \label{sec:introduction}
We study the offline reinforcement learning (RL) setting, where the objective is to derive a near-optimal policy for an $H$-horizon Markov decision process (MDP) using \textit{offline data}. 
This contrasts with the online RL paradigm, where learners interact directly with an MDP -- or its simulator -- to collect new data. 
Offline RL is especially relevant when acquiring new data guided by the learner is infeasible or ill-advised for safety reasons. 

Deriving a near-optimal policy is only possible from offline data that covers the MDP well enough.
One way 
to formalize this as an assumption, which we adopt for this work, is called \textit{concentrability}. 
This assumption posits that the offline data sufficiently covers the distribution of state-action pairs that are accessible through running any policy.
Challenges also arise in MDPs characterized by large or infinite state spaces. 
In such scenarios, an efficient learner's data requirements cannot scale with the state space size. %
An approach to remove state space dependence is to assume that the state-action value function of any policy can be linearly represented using a $d$-dimensional feature map, an assumption known as \textit{linear \(q^\pi\)-realizability}.

While linear \(q^\pi\)-realizability facilitates efficient online RL \citep{weisz2023online}, its applicability has been limited in offline contexts. 
For instance, \citet{foster2021offline} proves that no learning algorithm can derive an $\epsilon$-optimal policy under linear \(q^\pi\)-realizability and concentrability bounded by $\conc$, with a $\poly(d,H,\conc,\epsilon^{-1})$ number of samples.
However, their result does not apply to \textit{trajectory data}, where the offline data contains full sequences of state, action, and reward tuples 
obtained by following some policy from the initial state to the terminal state.
The following problem is left open:
\vspace{-1mm} \begin{center}
    \textit{``Does there exist an efficient learner that outputs an $\eps$-optimal policy, under the assumptions of linear $q^\pi$-realizability, concentrability, and trajectory data?''}
\end{center}
\vspace{-1mm} Our findings affirmatively answer this question in terms of sample complexity, highlighting a notable distinction in the requirements for trajectory data versus general offline data for effective learning. 
This underscores the practical value of accumulating trajectory data whenever feasible.

\begin{table}[t]
\centering
\begin{tabular}
{|C{3.2cm}|C{2.2cm}|C{2.7cm}|C{2.7cm}|C{0.9cm}|}
\hline

&
&\multicolumn{2}{c|}{Assumptions}
& 
\\

Work 
& Task 
& Data 
& Structural
& Result 
\\\hline\hline

\citep{xiong2022nearly}
& $\pi$-opt 
& $\lambdamin$ lower bound 
& Linear MDP
& \textcolor{darkgreen}{\checkmark}
\\\hline

\citep{chen2019information}
& $\pi$-opt 
& Conc
& Bellman complete
& \textcolor{darkgreen}{\checkmark}
\\\hline

\citep{xie2021batch}
& $\pi$-opt 
& Strong Conc
& $q^\pi$
& \textcolor{darkgreen}{\checkmark}
\\\hline

\rowcolor{Gray}
\textbf{[This work]}
& $\pi$-opt
& Conc \& \textbf{Traj data}
& $q^\pi$
& \textcolor{darkgreen}{\checkmark}
\\\hline

\citep{foster2021offline}
& $\pi$-opt or $\pi$-eval
& Conc
& $q^\pi$
& \textcolor{red}{x}
\\\hline

\citep{wang2020statistical}
& $\pi$-eval
& $\lambdamin$ lower bound
& $q^\pi$
& \textcolor{red}{x}
\\\hline

\citep{jia2024offline}
& $\pi$-eval
& Conc
& Restricted $q^\pi$
& \textcolor{red}{x}
\\\hline

\end{tabular}
\medskip
\caption{
Notation is defined as: $\pi$-opt = policy optimization, $\pi$-eval = policy evaluation, Conc = Concentrability, $q^\pi$ = linear $q^\pi$-realizability, Traj = Trajectory, \textcolor{darkgreen}{\checkmark} = $\poly(d, H, \conc, 1/\eps)$ sample complexity, \textcolor{red}{x} = exponential lower bound in terms of one of $d, H, \conc$.
}
\label{tab:bpse}
\end{table}

\section{Related Works} \label{sec:related works}

In \cref{tab:bpse} we provide a comparison of our result to the other works in offline RL discussed below. 

\textbf{Lower bounds:}
As we have already discussed in \cref{sec:introduction}, the work of \citet{foster2021offline} shows a lower bound that depends on the size of the state space (in the same setting as ours), except they do not assume access to trajectory data.
The work by \citet{jia2024offline} is perhaps the most relevent to ours.
They show an exponential lower bound in the horizon for policy evaluation, under the assumptions of trajectory data, concentrability, and a \textit{restricted} linear $q^\pi$-realizability where the value function of only the target policy is linear.
While
we anticipate that evaluating policies (their focus) is no more difficult than optimizing policies (our focus),
our $q^\pi$ realizability is for all memoryless policies (\cref{ass:approximate q-pi realizability}), while theirs is restricted to the target policy.
\citet{zanette2021exponential} shows an exponential lower bound in terms of the feature dimension $d$, under linear $q^\pi$-realizability, and various other structural assumptions; however, their setting would result in a%
 concentrability coefficient larger than the size of the state space.  
\citet{wang2020statistical,amortila2020variant} show a lower bound that is exponential in the horizon, under linear $q^\pi$-realizability.
However, they use a \textit{$\lambdamin$ lower bound} condition, which requires a lower bound on the minimum eigenvalue $\lambdamin$ of the expected covariance matrix used for least-squares estimation. 
This is seen as a weaker condition than ours,
as it only posits good coverage in terms of the feature space, not the (possibly much richer) state-action space.

\textbf{Upper bounds:}
\citet{chen2019information,munos2008finite} present efficient algorithms under concentrability and \textit{Bellman completeness}, 
an assumption that the Bellman optimality operator
outputs a linearly realizable function when its input is linearly realizable.
As linear $q^\pi$-realizability does not imply Bellman completeness \citep{zanette2020learning}, these results do not transfer to our setting.
\citet{xie2021batch} show an upper bound under linear $q^\pi$-realizability, albeit, using a stronger notion of data coverage than concentrability, which we call \textit{strong concentrability}.
The work of \citet{xie2021bellman,xie2022armor} give data-dependent sample complexity bounds that hold under both Bellman completeness and linear $q^\pi$-realizability even in the absence of explicit data coverage assumptions.
\citet{jin2021pessimism} assume a general function approximation setting and also provide data-dependent bounds, 
while \citet{duan2020minimax,xiong2022nearly} 
show upper bounds for \textit{linear MDPs} (a stronger assumption than linear $q^\pi$-realizability \citep{zanette2020learning}) with the $\lambdamin$ lower bound condition.

\section{Setting} \label{sec:setting}

Throughout we fix the integer $d \ge 1$. 
Let $\vec 0 \in \bR^d$ be the $d$-dimensional, all zero vector.
For $L>0$, let $\cB(L)=\{x \in \bR^d: \norm{x}_2 \le L\}$ denote the $d$-dimensional Euclidean ball of radius $L$ centered at the origin, where $\|\cdot\|_2$ denotes the Euclidean norm.
The inner product $\langle x, y \rangle$ for $x, y \in \bR^d$ is defined as the dot product $x^\T y$.
Let $\I{B}$ be the indicator function 
 of a boolean-valued (possibly random) variable $B$, taking the value $1$ if $B$ is true and $0$ if false.
Let $\Dists(X)$ denote the set of probability distributions over the set $X$. %
Let $\bE_{B \sim \cP}$ denote the expectation of random variable $B$ under distribution $\cP$. 
For integers $i,j$, let $[i]=\{1,2,\dots,i\}$ and $[i:j]=\{i,\dots,j\}$.
For a symmetric matrix $M\in\R^{d\times d}$ we write $\lambdamin(M)$ and $\lambdamax(M)$ for its minimum and maximum eigenvalue.

The environment is modeled by a finite horizon Markov decision process (MDP). Fix the horizon to $H$.
This MDP is defined by a tuple \((\mathcal{S}, \mathcal{A}, P, \mathcal{R})\). 
Here, the state space \(\mathcal{S}\) is finite\footnote{The state space is assumed to be finite to simplify presentation.
Our results extend to infinite state spaces.},
and organized by stages: $\cS = \bigcup_{h \in [H+1]} \cS_h$,
starting from a designated initial state $s_1$ ($\mathcal{S}_1 = \{s_1\}$)\footnote{A deterministic start state $\startstate$ is added for simplicity of presentation. 
It is easy to show that adding an additional stage to the MDP allows for the transition dynamics to encode an arbitrary start state distribution.}
, and culminating in a designated terminal state $\termstate$ ($\termstatespace=\{\termstate\}$)\footnote{A terminal state $\termstate$ is added purely as a technical convenience for the analysis. %
We will focus on the interaction of learners for stages $h \in [H]$ (not $[H+1]$), since the terminal state will have no affect on the learner.}.
Without loss of generality, we assume $\cS_h$ and $\cS_{h'}$ for  $h \neq h'$ are disjoint sets.
Define the function
 \(\stage: \mathcal{S} \to [H+1]\), such that \(\stage(s) = h\) if \(s \in \mathcal{S}_h\). 
The action space \(\mathcal{A}\) is finite. 
The transition kernel is \(P: (\bigcup_{h \in [H]} \mathcal{S}_h) \times \mathcal{A} \to \Dists(\mathcal{S})\), with the property that transitions occur between successive stages. 
Specifically, for any \(h \in [H]\), state \(s_h \in \mathcal{S}_h\), and action \(a \in \mathcal{A}\), $P(s_h, a) \in \Dists(\cS_{h+1})$.
The reward kernel is $\cR:\cS \times \cA \to \Dists([0,1])$. 
So that the terminal state $\termstate$ has no influence on the learner we force the reward kernel to deterministically give zero reward for all actions $a \in \cA$ in $\termstate$ (i.e. $\cR(\termstate, a)(r) = \I{0 = r}$). %
An agent interacts with this environment sequentially across an episode of $H+1$ stages, 
by selecting an action \(a \in \mathcal{A}\) in the current state. 
The environment (except at stage $H+1$) then transitions to a subsequent state according to \(P\) and provides a reward in $[0, 1]$ as specified by \(\mathcal{R}\)\footnote{Here, the reward and next-state are independent, given the current state and last action. Independence is nonessential and is assumed only to simplify the presentation.}.

We define an agent's interaction with the MDP through a \emph{policy} \(\pi\), which assigns a probability distribution over actions based on the history of interactions (including states, actions, and rewards).
For this work, we restrict policies to be \emph{memoryless}, that is, their action distribution depends solely on the most recent state in the history.
The set of all memoryless policies is
\(\Pi = \{\pi\,:\, \pi : \mathcal{S} \to \Dists(\mathcal{A})\}\).
For $\pi\in\Pi$, we write $\pi(a|s)$ to denote the probability $\pi(s)$ assigns to action $a$. For deterministic policies only (i.e., those that for each state place a unit probability mass on some action) we sometimes abuse notation by writing $\pi(s)$ to denote $\argmax_{a\in\cA} \pi(a|s)$.
Starting from any state \(s\) within the MDP and using a policy \(\pi\) induces a probability distribution over trajectories, denoted as \(\mathbb{P}_{\pi, s}\).
For any $a \in \cA$, $\bP_{\pi,s,a}$ is the distribution over the trajectories when first action $a$ is used in state $s$, after which policy $\pi$ is followed.
Specifically, for some $h\in[H+1]$ and $(s, a) \in\cS_h \times \cA$, we write $\trajectoryrand\sim\bP_{\pi,s,a}$ to denote that $\trajectoryrand=(S_h,A_h,R_h,\dots,S_{H+1},A_{H+1},R_{H+1})$
for a random trajectory that follows the distribution specified by $\bP_{\pi,s,a}$, that is,
$S_h=s$, $A_h=a$, $A_i\sim \pi(S_i)$ for $i\in[h+1:H+1]$, $S_{i+1}\sim P(S_i,A_i)$ for $i\in[h:H]$, and $R_i\sim \cR(S_i,A_i)$ for $i\in[h:H+1]$.
Writing $\trajectoryrand\sim\bP_{\pi,s}$ has an analogous meaning, with the only difference being that $A_h$ is not fixed, and instead $A_h\sim\pi(S_h)$. 
For $h\in[H+1]$, we write $\bPmarg{h}_{\pi,s}$ (and $\bPmarg{h}_{\pi,s,a}$) for the marginal distribution of $(S_h, A_h)$ (i.e., the state-action pair of stage $h$) under the joint distribution of $\bP_{\pi,s}$ (and $\bP_{\pi,s,a}$). 

For $1 \le t \le t' \le H+1$, we use the notation  $x_{t:t'} = \textstyle(x_u)_{u \in [t:t']}$ 
throughout, except when $(x_u)_{u \in [t:t']}$ are a sequence of scalar rewards.
In that case, for convenience, we write $r_{t:t'} = \textstyle\sum_{u=t}^{t'} r_u$ and $R_{t:t'} = \sum_{u=t}^{t'} R_u$. %
The state-value and action-value functions $v^\pi$ and $q^\pi$ are defined as the expected total reward along the rest of the trajectory while $\pi$ is used: 
\begin{align*}
v^\pi(s)=\!\!\!\bigE_{\trajectoryrand\sim\bP_{\pi,s}} \!\!\!\!\!\!R_{\stage(s):H} \,\, \text{for }s\in\cS\,
\quad \text{and} \quad 
q^\pi(s,a)=\!\!\!\bigE_{\trajectoryrand\sim\bP_{\pi,s,a}} \!\!\!\!\!\!R_{\stage(s):H} \,\, \text{for }(s, a) \in \cS \times \cA \, .
\end{align*}
Let $\pi^\star\in\Pi$ be an optimal policy, satisfying $q^{\pi^\star}(s,a)=\sup_{\pi\in\Pi}q^\pi(s,a)$ for all $(s,a) \in \cS \times \cA$.
Let $q^\star(s,a)=q^{\pi^\star}(s,a)$ and $v^\star(s)=\max_{a \in \cA}q^\star(s,a)$ for all $(s,a) \in \cS \times \cA$.
By definition, we have %
 
\begin{align}
    v^\star(\termstate) = v^\pi(\termstate) = 0 \quad \text{and} \quad q^\star(\termstate, a) = q^\pi(\termstate, a) = 0 && \text{for all } \pi \in \Pi, a \in \cA \, .
    \label{eq:v and q at term state}
\end{align}

\subsection{Assumptions and Problem Statement} \label{ss:assumptions and problem}

A feature map is defined as $\phi: \cS \times \cA \to \cB(L_1)$ for some $L_1 > 0$. The representative power of a feature map for an MDP is described by the following assumption:

\begin{assumption}
[$(\misspec, \thetabound)$-Approximately Linear $q^\pi$-Realizable MDP] \label{ass:approximate q-pi realizability}
    For some $\misspec \ge 0, \thetabound > 0$,
    assume that the MDP (together with a feature map $\phi$) is such that
    \begin{align*}
        \sup_{\pi \in \Pi} \min_{\theta_h \in \cB(\thetabound)} \max_{(s_h,a_h)\in\cS_h\times\cA} \, \abs{q^\pi(s_h, a_h) - \ip{\phi(s_h, a_h), \theta_h}}
        \le \misspec 
         && \text{for all } h \in [H+1] %
        \, .
    \end{align*}
    For any $h\in[H+1]$, let $\psi_h:\Pi\to\cB(L_2)$ be a mapping from policies to parameter values $\theta_h$ that attain the $\min$ in the above display.
    For $h=H+1$, we restrict this mapping to $\psi_{H+1}(\cdot)=\vec 0$, which satisfies the above display by definition. %
    We write $\psi_{h:t}(\pi)$ for $(\psi_{h}(\pi), \dots, \psi_{t}(\pi))$.
\end{assumption}

We also make the following assumptions on the offline data:

\begin{assumption}[Full Length Trajectory Data] \label{ass:batch data}
    Assume the learner is given a dataset of full length trajectories and corresponding features of size $n \ge 1$: %
    \begin{align*}
        \left(\trajectory^1, \dots, \trajectory^n\right) \quad\text{and}\quad
        \left((\phi(s_h^1,\cdot))_{h \in [H]}, \dots, (\phi(s_h^n,\cdot))_{h \in [H]}\right) \, ,
    \end{align*}
    where for some ``data collection policy''%
    $\behavepi \in \Pi$ unknown to the learner,
    $(\trajectory^j)_{j=1}^n$ are independent samples from $\bP_{\behavepi, \startstate}$ where $\trajectory^j = (s_t^j, a_t^j, r_t^j)_{t \in [H+1]}$. 
    To simplify notation we write
    \[\phi_h^j = \phi(s_h^j, a_h^j) \quad\text{for all $h \in [H], j \in [n]$}\,.\]
\end{assumption}

\begin{definition}[Admissible Distribution] \label{def:admissible dist}
    A sequence of $H$ state-action distributions $\nu = (\nu_h)_{h \in [H]} \in (\Dists(\cS_h \times \cA))^H$ is admissible for an MDP if there exists a policy $\pi \in \Pi$ such that \begin{align*}
        \nu_h(s_h, a_h) = \bPmarg{h}_{\pi, \startstate}(s_h, a_h) \quad \text{for all } (s_h, a_h) \in \cS_h \times \cA, \ h \in [H] \, .
    \end{align*}
\end{definition}

\vspace{-2mm}Define the state-action occupancy measure of the data collection  policy $\behavepi$ as $\mu = (\mu_h)_{h \in [H]}$ such that
\begin{align*}
    \mu_h(s_h, a_h) = \bPmarg{h}_{\behavepi, \startstate}(s_h, a_h) \quad \text{for all } (s_h, a_h) \in \cS_h \times \cA, \ h \in [H] \, .
\end{align*}
\begin{assumption}[Concentrability] \label{ass:concentrability}
    Assume there exists a constant $\conc \ge 1$, such that for all admissible distributions $\nu = (\nu_h)_{h \in [H]}$
    \begin{align*}
        \max_{h \in [H]} \max_{(s_h, a_h) \in \cS_h \times \cA} \left \{ \frac{\nu_h(s_h, a_h)}{\mu_h(s_h, a_h)} \right \} \le \conc \, .
    \end{align*}
\end{assumption}

\begin{problem}\label{problem}
    Let $\eps > 0$.
    Under \cref{ass:approximate q-pi realizability,ass:batch data,ass:concentrability}, does there exist a learner, with access to only $n = \poly(1/\epsilon, H, d, \conc)$ full length trajectories (as defined in \cref{ass:batch data}), 
    that outputs a policy $\pi$ such that, with probability at least $1 - \delta$, 
    \begin{align*}
        v^\star(\startstate) - v^\pi(\startstate) \le \epsilon \, ?
    \end{align*}
\end{problem}

\vspace{-1mm}\section{Result} \label{sec:result}

\vspace{-1mm}We resolve \cref{problem} in the positive by defining a learner (formal definition in \cref{ss:method}) that: selects parameters optimistically from modified MDPs that ``skip over'' certain states while preserving tight $q$-value estimation guarantees (achieved by solving \cref{opt:bpse start state});
then, outputs a greedy policy $\optpi$ (defined in \cref{eq:opttheta and optpi defn}) over the selected parameters.
This result is made formal in following theorem (proof in \cref{sec:proof of the result}):

\begin{theorem} \label{thm:bpse}
    Let $\eps \in (0,H]$.
    Under \cref{ass:approximate q-pi realizability,ass:batch data,ass:concentrability}, 
    if the number of full length trajectories $n = \tilde \Theta(\conc^4 H^7 d^4/\eps^2)$ 
    and $\misspec = \bigOt{\eps^2/ (\conc^2 H^5 d^2)}$\footnote{The bound on $\misspec$ is assumed for clarity of presentation, to avoid presenting two error terms in the final bound.},
    then, 
    with probability at least $1 - \delta$, 
    the policy $\optpi$ output by our learner (defined in \cref{ss:method}) satisfies
    \begin{align*}
        v^\star(\startstate) - v^{\optpi}(\startstate) 
        \le \eps \, ,
    \end{align*}
\end{theorem}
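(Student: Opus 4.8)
The plan is to reduce the offline linear $q^\pi$-realizability problem to estimation in an approximating \emph{linear} MDP, following the skipping construction of \citet{weisz2023online}, and then to exploit concentrability together with the trajectory structure of the data to estimate the relevant $q$-values optimistically. Concretely, the construction of \citet{weisz2023online} associates to the original MDP a modified process in which, at each stage, certain states are ``skipped'': when the process would land in a state whose feature vector is (nearly) a large-coefficient combination of previously retained features, it instead continues following the current policy, accumulating the rewards collected along the way, until it reaches a non-skipped state. The resulting reduced process is, up to an error controlled by $\misspec$ and the number of skips (which is $\poly(d,H)$), a linear MDP; in particular its optimal value and the value of any policy are close to those in the original MDP. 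Hence it suffices to produce a near-optimal policy for this reduced linear MDP.

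Second, I would set up the estimation problem for the reduced MDP. Because the reduced process is (approximately) a linear MDP, the Bellman backup of a linear function is again linear, so for each stage $h$ there is a target parameter whose linear predictor matches the reduced $q$-value, and this parameter can be recovered by regressing $\phi(s_h,a_h)$ onto a target of the form ``rewards accumulated over the skipped segment starting at stage $h$, plus the linear value at the next retained state.'' This is exactly where \cref{ass:batch data} enters: the target is a functional of a contiguous segment of a trajectory, directly observable in trajectory data but not reconstructible from isolated transitions. Using the $n$ trajectories I would form least-squares estimates of these per-stage parameters and, following \cref{opt:bpse start state}, select them \emph{optimistically} --- maximizing the estimated initial-state value $\langle \phi(\startstate, a), \hat\theta_1\rangle$ subject to the empirical Bellman-consistency constraints holding to within the statistical tolerance --- and finally output the greedy policy $\optpi$ of \cref{eq:opttheta and optpi defn}.

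Third comes the error decomposition. Writing $\hat v(\startstate)$ for the optimistic initial-state value, I would split $v^\star(\startstate) - v^{\optpi}(\startstate) = \bigl(v^\star(\startstate) - \hat v(\startstate)\bigr) + \bigl(\hat v(\startstate) - v^{\optpi}(\startstate)\bigr)$. Optimism, together with feasibility of the parameters realizing the near-optimal policy, makes the first bracket at most the statistical-plus-misspecification error. For the second bracket I would telescope the estimated values along a trajectory generated by $\optpi$ (a performance-difference argument), expressing the gap as a sum over $h\in[H]$ of per-stage Bellman residuals of the chosen parameters evaluated under the occupancy measure of $\optpi$. Since that occupancy measure is admissible, \cref{ass:concentrability} lets me bound each residual by $\conc$ times its value under the data distribution $\mu$, which is precisely what the least-squares fit controls. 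Summing the $H$ stages, substituting the per-stage estimation error (which scales like $\poly(d)/\sqrt{n}$ via a standard covariance / self-normalized concentration argument) together with the assumed bound on $\misspec$, and balancing against $\eps$, yields the stated sample size $n=\tilde\Theta(\conc^4 H^7 d^4/\eps^2)$.

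The main obstacle, I expect, is controlling the statistical error of the skipped-segment regression targets and its propagation through the horizon. Unlike the single-transition case, each target mixes rewards accumulated over a random, data-dependent number of skipped steps with a plug-in linear value at the next retained state, so establishing that the estimates concentrate --- uniformly over the optimistic parameter choice --- requires care in handling the dependence introduced by skipping and by reusing value estimates from later stages. Equally delicate is ensuring that the skipping mechanism genuinely removes the feature degeneracy responsible for the \citet{foster2021offline} lower bound while keeping the resulting approximation error $\poly(d,H)$, since it is precisely this combination, available only with trajectory data, that separates our positive result from the known impossibility for transition data.
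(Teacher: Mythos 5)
Your proposal correctly identifies the two main tools (the skipping-based linear-MDP approximation of \citet{weisz2023online}, and the fact that trajectory data makes the skipped-segment regression targets observable), but it has a genuine gap: it treats the skipping rule as something the learner can implement directly. In the paper, which states to skip is determined by the range function $\range(\cdot)$ of \cref{eq:range-def}, which depends on the unknown parameters $\psi_h(\pi)$ for \emph{all} policies $\pi$; the learner has no access to it. Consequently the learner cannot form your regression targets, because it does not know when to stop accumulating rewards along a trajectory. The heart of the paper's algorithm --- entirely absent from your sketch --- is to handle this by optimizing \cref{opt:bpse start state} jointly over all guesses $G \in \bbG$ of the range parameters, constructing confidence sets $\Theta_{G,h}$ backwards for every guess, and \emph{rejecting} any guess whose confidence sets fail the empirical tightness constraint \cref{eq:opt prob cond} (this constraint is a width check, i.e.\ $\max$ minus $\min$ of clipped $q$-estimates over $\Theta_{G,h}$ averaged over the data; it is not a Bellman-consistency constraint as you describe). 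The analysis then needs three ingredients you do not supply: feasibility of the true guess $\trueG$ (\cref{lem:true guess is feasible}), inclusion of $\psi_h(\pi^\star_G)$ in $\Theta_{G,h}$ for every $G$ (\cref{lem:pi*G in Theta}), and the transfer, via concentrability, of the empirical width bound to every admissible distribution uniformly over the continuum of guesses (\cref{lem:Theta elements are close}), which in turn forces a covering argument over $\bbG$. Skipping by an \emph{incorrect} guess can discard high-value states, and it is precisely the combination of optimism over $G$ with the rejection rule that controls this; your two-bracket decomposition has no term accounting for it (the paper's term $(\RN{1})$, bounded by $H(2\alpha+2\eta)$, plus the restriction to feasible $G$ in terms $(\RN{2})$ and $(\RN{3})$).

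A secondary but substantive error: you mischaracterize the skipping criterion itself. States are skipped when their \emph{range} is low --- i.e., when the choice of action in that state can change the action-value of any policy by at most $\approx\alpha$ (\cref{eq:range-def}, \cref{eq:vstar-vs-range}) --- not when their ``feature vector is a large-coefficient combination of previously retained features.'' This is not cosmetic: the low-range property is exactly what makes skipping harmless for the value (the bound on term $(\RN{1})$) and what makes the skipped process admissible-realizable (\cref{lem:approx linear MDP} via \cref{lem:range-G-accurate}); a feature-degeneracy criterion would support neither claim.
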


\vspace{-1mm}where $\tilde \Omega, \tilde \cO$ and $\tilde \Theta$ are the counterparts of $\Omega, \cO$ and $\Theta$ from the big-Oh notation that hide polylogarithmic factors of the problem parameters $(1/\eps, 1/\delta, H, d, \conc, \featurebound, \thetabound)$. 
The following subsections focus on introducing the theory needed to formally present our learner, giving intuition behind our learner, and presenting our learner.

\subsection{Background Theory} 
\label{ss:background theory}

Our learner relies on the observation due to \citet{weisz2023online} that linearly $q^\pi$-realizable MDPs are linear MDPs, as long as they contain no low-range states.
The \emph{range} of a state is the largest possible regret from that state, that is, the largest difference in action-value that the choice of action in that state can make (up to misspecification):
\begin{align}\label{eq:range-def}
\range(s)=\textstyle\sup_{\pi\in\Pi} \textstyle\max_{a,a'\in\cA} \ip{\phi(s,a,a'), \psi_{\stage(s)}(\pi)}
\quad \text{ for all } %
s\in\cS\,,
\end{align}
where $\phi(s,a,a')=\phi(s,a)-\phi(s,a')$ is a notation we use to denote feature differences. Intuitively, the choice of actions in low-range states are unimportant, as %
\begin{equation}
\abs{v^\pi(s)-q^\pi(s,a)} \le \range(s)+2\eta \quad\quad\text{ for any $\pi \in \Pi$ and all  $(s,a)\in\cS\times\cA$.} \label{eq:vstar-vs-range}
\end{equation}
If we modify our linearly $q^\pi$-realizable MDP to ``skip over'' low-range states it will be a linear MDP. 
The key fact about linear MDPs that we will use is that for any function $f:\cS\to[0,H]$ (e.g., $v$-value approximators), and any $h\in [H]$, there is some parameter $\theta_h\in \R^d$ so that
for \emph{any} $(s, a) \in \cS_h \times \cA$,
 $\ip{\phi(s,a),\theta_h}$ gives 
the expectation of the reward plus $f$'s value on the next state.%
In our modified MDP this result transfers to the fact that the expected sum of rewards along a skipped path, plus $f$'s value on the next state after the skipped path, is linearly realizable.
Before making this result formal in \cref{lem:approx linear MDP},
we clarify the skipping behavior.

First, we address the fact that we need an approximate, parametric bound on $\range(\cdot)$ 
with a parameter count that is independent of $|\cS|$.
For $h\in[2:H]$, let $\Psi_h=\{\psi_h(\pi)\,:\, \pi\in\Pi\} \subseteq \cB(\thetabound)$ be the (compact)%
set of parameter values corresponding to all policies.
For all $h\in[2:H]$,  
fix a subset $\bar G_h\subset \Psi_h$ of size $|\bar G_h|=d_0:=\lceil 4d\log\log(d)+16 \rceil$ that is the basis of a near-optimal design for $\Psi_h$ (more precisely, satisfying
\cref{def:nearopt}).
The existence of such a near-optimal design follows from \cite[Part (ii) of Lemma 3.9]{todd2016minimum}.
Let $\trueG = \bar G_{2:H}$, 
which we call the \emph{true guess}. Now notice that $\trueG\in \bbG$ where
\begin{align}\label{eq:bbG-def}
        \bbG = (\cB(\usedtobesqrtdoneplusone))^{[2:H]\times [d_0]}\,.        
    \end{align}
For $G\in \bbG$ we will use the notation that $G = G_{2:H}$, where $G_h =    (\vartheta_h^i)_{i \in [d_0]}  \in \cB(\usedtobesqrtdoneplusone)^{d_0}$.
Any $G = G_{2:H}\in\bbG$ can be used to define an approximate, low parameter-count ``version'' of $\range$ that is completely specified by $\tilde O(Hd^2)$ parameters:
\begin{align}\label{eq:rangeq-def}
\textstyle \range^G(s)=\max_{\vartheta \in G_h} \max_{a,a'\in\cA} \ip{\phi(s,a,a'), \vartheta}\quad\quad\text{for all } h\in[2:H], s\in\cS_h\,.
\end{align}
As shown in Proposition 4.5 of \citep{weisz2023online}, $\range^\trueG$ can be used to bound the true $\range$: %
\begin{lemma}\label{lem:range-G-accurate}
For all
$h\in[2:H]$ and $s\in\cS_h$, 
$\range(s) \le \sqrt{2d} \cdot \range^\trueG(s)$.
\end{lemma}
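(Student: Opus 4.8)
The plan is to reduce the bound to the defining variance guarantee of a near-optimal design and then apply Cauchy--Schwarz. First I would rewrite both ranges as optimizations over the parameter set. Since $\{\psi_h(\pi) : \pi \in \Pi\} = \Psi_h$, for any $s \in \cS_h$ we have $\range(s) = \sup_{\theta \in \Psi_h} \max_{a,a' \in \cA} \ip{\phi(s,a,a'), \theta}$, while writing $\bar G_h = \{\vartheta_h^i\}_{i \in [d_0]}$ gives $\range^\trueG(s) = \max_{i \in [d_0]} \max_{a,a'} \ip{\phi(s,a,a'), \vartheta_h^i}$. It therefore suffices to prove, for every fixed feature-difference vector $u = \phi(s,a,a')$ and every $\theta \in \Psi_h$, that $\ip{u, \theta} \le \sqrt{2d}\,\range^\trueG(s)$; the claim then follows by taking the supremum over $\theta$ and the maximum over $a,a'$.

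The second step invokes the near-optimal design guarantee of \cref{def:nearopt}, whose existence is granted by \cite[Part (ii) of Lemma 3.9]{todd2016minimum}: the support $\bar G_h$ carries nonnegative weights $(w_i)_{i \in [d_0]}$ with $\sum_{i} w_i = 1$ such that the design matrix $V = \sum_{i \in [d_0]} w_i \vartheta_h^i (\vartheta_h^i)^\T$ satisfies $\theta^\T V^{-1} \theta \le 2d$ for every $\theta \in \Psi_h$. Applying Cauchy--Schwarz in the geometry induced by $V$ then yields $\ip{u, \theta} \le \|u\|_V \, \|\theta\|_{V^{-1}} \le \sqrt{2d}\,\|u\|_V$, where $\|u\|_V^2 = u^\T V u = \sum_{i \in [d_0]} w_i \ip{u, \vartheta_h^i}^2$.

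The third step turns $\|u\|_V$ into the approximate range. Because the weights form a probability vector, the sum $\sum_i w_i \ip{u, \vartheta_h^i}^2$ is a convex combination and hence $\|u\|_V \le \max_{i} |\ip{u, \vartheta_h^i}|$. Now $u = \phi(s,a,a')$ is itself a feature difference and $\phi(s,a',a) = -u$, so both $\ip{u, \vartheta_h^i}$ and $-\ip{u, \vartheta_h^i}$ appear among the quantities $\ip{\phi(s,b,b'), \vartheta_h^i}$ whose maximum defines $\range^\trueG$; consequently $|\ip{u, \vartheta_h^i}| \le \max_{b,b'} \ip{\phi(s,b,b'), \vartheta_h^i}$ and thus $\max_i |\ip{u, \vartheta_h^i}| \le \range^\trueG(s)$. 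Chaining these inequalities gives $\ip{u, \theta} \le \sqrt{2d}\,\range^\trueG(s)$, completing the reduction of the first paragraph.

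The main obstacle I anticipate is isolating the exact constant from the design guarantee: the factor $\sqrt{2d}$ is precisely the square root of the variance bound, so the argument hinges on \cref{def:nearopt} delivering $\theta^\T V^{-1}\theta \le 2d$ (rather than the idealized value $d$ of an exact $G$-optimal design) while holding the support size at $d_0$. A secondary technical point is the invertibility of $V$, which I would handle by restricting to the linear span of $\Psi_h$; any feature difference $u$ has its component orthogonal to this span contribute zero to $\ip{u,\theta}$ for $\theta \in \Psi_h$, so it may be dropped. Everything else is routine Cauchy--Schwarz and convexity bookkeeping.
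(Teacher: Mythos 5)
Your proof is correct, but note that the paper itself never proves \cref{lem:range-G-accurate}: it imports the statement wholesale, citing Proposition 4.5 of \citep{weisz2023online}. What you have produced is therefore a self-contained derivation of the cited fact from \cref{def:nearopt}, and it is essentially the standard argument underlying that proposition: rewrite $\range(s)$ as a supremum over $\Psi_h$, apply Cauchy--Schwarz in the design geometry together with the bound \cref{eq:near-opt-2d} to get $\ip{u,\theta}\le\sqrt{2d}\,\norm{u}_{V}$ for $u=\phi(s,a,a')$, then use that the design weights form a probability distribution and the antisymmetry $\phi(s,a',a)=-\phi(s,a,a')$ to conclude $\norm{u}_{V}\le\max_{i}\abs{\ip{u,\vartheta_h^i}}\le\range^\trueG(s)$ (the diagonal choice $b=b'$ also shows $\range^\trueG(s)\ge 0$, so no sign issue arises). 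The one step to tighten is the invertibility caveat you flagged: \cref{def:nearopt} is deliberately phrased with the Moore--Penrose inverse $V^\dag$ and the kernel condition \cref{eq:v-max-rank}, and that condition is exactly what makes your argument rigorous without any subspace restriction --- it gives $VV^\dag\theta=\theta$ for every $\theta\in\Psi_h$, whence
\begin{align*}
\ip{u,\theta}
= (V^{1/2}u)^\top\bigl((V^\dag)^{1/2}\theta\bigr)
\le \norm{u}_{V}\norm{\theta}_{V^\dag}
\le \sqrt{2d}\,\norm{u}_{V}
\end{align*}
directly; your span-restriction fix is equivalent, since the kernel condition forces $\mathrm{range}(V)$ to contain (hence equal) the span of $\Psi_h$, so $V$ is invertible there. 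In short, your route buys the reader a proof the paper currently outsources, at no cost beyond inheriting the constant $\sqrt{2d}$ from the near-optimal design guarantee, exactly as you observed.
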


Based on any $G\in\bbG$, we are interested in simulating a modified MDP that ``skips over'' states $s$ that have a low $\range^G(s)$, by taking an action according to $\behavepi$, and presenting as the reward the summed up rewards along paths consisting of skipped states.
This ``modified MDP'' only serves as intuition, and will not be formally defined or used in our formal arguments.
Instead, we define the ``skipping probability'' parameter at state $s \in \cS$, with $\alpha>0$ (set later in \cref{def:alpha}), as 
\begin{align}
    \omega_G(s)
    = \begin{cases}
        1 & \text{if } s \not\in \cS_1\cup\cS_{H+1} \text{ and } \range^{G}(s) \le \alpha / \sqrt{2d} \\
        2 - \sqrt{2d} \cdot \range^{G}(s) / \alpha & \text{if } s \not\in \cS_1\cup\cS_{H+1} \text{ and } \alpha /\sqrt{2d} \le \range^{G}(s) \le 2\alpha / \sqrt{2d} \\
        0 & \text{otherwise.}
    \end{cases} \label{eq:omega}
\end{align}
The skipping behavior is probabilistic\footnote{The resulting smoothness of skipping behavior is beneficial for a later technical covering argument (\cref{eq:omega cover bound} in \cref{lem:G cover useful results}).}: it never skips for stages $1$ and $H+1$ (where $\range^G$ is not defined); it always skips for ranges lower than some threshold, never skips for ranges higher than twice this threshold, and linearly interpolates between the two in between the thresholds.
For $h \in [H]$, and $1\le l \le h$,
let $\trajectory = (s_t, a_t, r_t)_{l\le t \le H+1}$ be any fixed trajectory that starts from some stage $l$. 
Let $\tau \sim F_{G, \trajectory, h+1} \in \cM_1([h+1:H+1])$ be the random stopping stage, 
when starting from state $s_h$ and skipping subsequent states with probability $\omega_G(\cdot)$.
Formally, for $t \in [h+1:H+1]$ let $F_{G, \trajectory, h+1}(\tau = t) = (1 - \omega_G(s_t)) \prod_{u = h+1}^{t-1} \omega_G(s_u)$.
We will often write $F^j_{G, h+1}$ to denote $F_{G, \trajectory^j, h+1}$ where $\trajectory^j = (s_t^j, a_t^j, r_t^j)_{t \in [H+1]}, j \in [n]$. 

Next, we present a key tool derived from results of \citet{weisz2023online}:
as long as the skips are informed by the true guess, the resulting MDP is approximately linear (proof 
in \cref{proof:lem:approx linear MDP}):
\begin{lemma}[Approximate Linear MDP under the true guess] \label{lem:approx linear MDP}
    Let $\misspec \ge 0, \thetabound > 0$.
    Let $M$ be an $(\misspec, \thetabound)$-approximately linear $q^\pi$-realizable MDP (\cref{ass:approximate q-pi realizability}) with corresponding feature map $\phi$.
    Let $\modthetabound = \thetabound(8H^2d_0/\alpha+1)$. 
    Then,
    for each $f:\cS\to[0,H]$ 
    with $f(\termstate)=0$,
    policy $\pi \in \Pi$,
    and stage $h \in [H]$, there exists
    a parameter $\rho_h^\pi(f) \in \cB(\modthetabound)$
    such that for all $(s, a) \in \cS_h \times \cA$,
    \begin{align*}
        \abs{\bigE_{\trajectoryrand \sim \bP_{\pi, s, a}} \bigE_{\tau \sim F_{\trueG, \trajectoryrand, h+1}} \left[R_{h:\tau-1} + f(S_\tau)\right] 
            - \ip{\phi(s,a),\rho_h^\pi(f)}} 
        \le \modmisspec \, ,
    \end{align*}
    where $\modmisspec = \eta(10H^2d_0/\alpha+1)$.
\end{lemma}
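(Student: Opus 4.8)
The plan is to prove the bound by backward induction on the stage $h$, running from $h = H$ down to $h = 1$. Fix the policy $\pi$ and the function $f$ throughout, and write $u_h(s,a) = \bigE_{\trajectoryrand \sim \bP_{\pi,s,a}} \bigE_{\tau \sim F_{\trueG, \trajectoryrand, h+1}}[R_{h:\tau-1} + f(S_\tau)]$ for the quantity on the left-hand side, together with its policy-averaged (state-value) version $\bar u_{h+1}(s') = \bigE_{a' \sim \pi(s')} u_{h+1}(s', a')$. The inductive claim is that $u_h$ admits a linear representation $\rho_h^\pi(f)$ whose norm and uniform approximation error I track across stages so that, at $h=1$, they reach the advertised $\modthetabound$ and $\modmisspec$.

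The engine of the induction is a one-step decomposition of the skipping process. Conditioning on the first transition $S_{h+1} \sim P(s,a)$ and on whether the skip continues, the stopping law $F_{\trueG, \cdot, h+1}$ factorizes and yields
\[
u_h(s,a) = \bigE[R_h \mid s,a] + \bigE_{S_{h+1}\sim P(s,a)}\big[(1-\omega_{\trueG}(S_{h+1}))\,f(S_{h+1}) + \omega_{\trueG}(S_{h+1})\,\bar u_{h+1}(S_{h+1})\big].
\]
Thus $u_h$ is the one-step Bellman backup, under the \emph{original} transition $P$, of the smoothed stopping function $\tilde f_{h+1}(s') := (1-\omega_{\trueG}(s'))f(s') + \omega_{\trueG}(s')\,\bar u_{h+1}(s')$, which takes values in $[0,H]$. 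In the base case $h=H$ every trajectory lands in $\cS_{H+1}=\{\termstate\}$, where $\omega_{\trueG}(\termstate)=0$ and $f(\termstate)=0$, so $u_H(s,a) = \bigE[R_H\mid s,a] = q^\pi(s,a)$, and linear realizability with error $\misspec$ and norm $\thetabound$ is immediate from \cref{ass:approximate q-pi realizability}.

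For the inductive step I would invoke the core structural result of \citet{weisz2023online}: because the skips are informed by the true guess $\trueG$, whose near-optimal design basis controls $\range$ via \cref{lem:range-G-accurate}, the one-step skipped backup of any function bounded by $H$ is linearly realizable in $\phi(s,a)$ up to an error scaling with $\misspec$, the threshold $\alpha$, the design size $d_0$, and the value scale $H$. I would apply their approximate-linear-MDP construction to $\tilde f_{h+1}$: on high-range ($\omega_{\trueG}=0$) states it equals the arbitrary $f$, which the design directions interpolate precisely because $\range$ is bounded below there, while low-range states are skipped and contribute only an $O(\misspec)$-per-level error governed by $\alpha$. Feeding in the inductive linear form of $\bar u_{h+1}$ (linear in $\bigE_{a'\sim\pi(s')}\phi(s',a')$ with parameter $\rho_{h+1}^\pi(f)$), the convex combination defining $\tilde f_{h+1}$ preserves linear realizability of the backup and produces $\rho_h^\pi(f)$. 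Each level adds error of order $\misspec\, H d_0/\alpha$ and inflates the parameter norm by an additive $\thetabound\, H d_0/\alpha$ term; summing over the $H$ levels gives the stated $\modmisspec = \misspec(10H^2 d_0/\alpha + 1)$ and $\modthetabound = \thetabound(8H^2 d_0/\alpha + 1)$.

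The main obstacle is precisely this inductive step's appeal to linear representability of the skipped backup, since it is where $q^\pi$-realizability is converted into a one-step linear-MDP property despite $P$ itself not being linear. The argument must exploit that skipping removes exactly the low-range states, where feature differences are too small to pin down next-state values, so that on the surviving high-range states the near-optimal design spans enough directions to realize the backup of an arbitrary $f$; the residual mass assigned to skipped states is what forces the $1/\alpha$ dependence. A secondary point is verifying that the \emph{probabilistic} skipping encoded by $\omega_{\trueG}$ (introduced for smoothness) does not break linearity, which follows because $\tilde f_{h+1}$ is a convex combination of two functions whose backups are each linearly realizable. The remainder is routine error and norm bookkeeping across the $H$ levels.
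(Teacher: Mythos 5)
Your base case and one-step factorization of the stopping law are both correct, but the inductive step has a genuine gap, and it sits exactly where you flag ``the main obstacle.'' Splitting $\tilde f_{h+1}=(1-\omega_{\trueG})f+\omega_{\trueG}\bar u_{h+1}$, your backup reads
\begin{align*}
u_h(s,a) = \bigE_{\trajectoryrand \sim \bP_{\pi,s,a}} R_h
+ \bigE_{S'\sim P(s,a)}\left[(1-\omega_{\trueG}(S'))f(S')\right]
+ \bigE_{S'\sim P(s,a)}\left[\omega_{\trueG}(S')\,\bar u_{h+1}(S')\right] .
\end{align*}
The middle term is fine: $(1-\omega_{\trueG})f$ vanishes wherever $\range^{\trueG}(\cdot)\le\alpha/\sqrt{2d}$ and is bounded by $H$ elsewhere, so it is dominated by a multiple of $\range(\cdot)$, i.e.\ it is admissible (\cref{def:alpha-admissible}) and its multi-step expectation is realizable by \cref{lem:admissible-realizability}; this is the rigorous version of your ``design directions interpolate $f$'' sentence. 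The gap is the last term. Your justification for it --- feeding the inductive linear form of $\bar u_{h+1}$ into the backup and asserting that ``the convex combination preserves linear realizability'' --- is not valid: under linear $q^\pi$-realizability alone, the one-step backup of an (approximately) linear function need not be linearly realizable in $\phi(s,a)$; this is precisely the failure of Bellman completeness that separates this setting from linear MDPs. Nor can admissibility rescue this term: $\omega_{\trueG}\bar u_{h+1}$ is largest exactly on the low-range, fully-skipped states (there $\omega_{\trueG}=1$ while $\bar u_{h+1}$ can be of order $H$ and $\range$ is arbitrarily small), so it violates \cref{def:alpha-admissible}. In short, your inductive invariant --- pointwise approximate linearity of $u_{h+1}$ --- does not imply realizability of the backup of $\omega_{\trueG}\bar u_{h+1}$; that implication is essentially the content of \cref{lem:approx linear MDP} itself, so the induction is circular at its key step.

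The paper's proof never composes one-step backups, which is how it avoids this trap. It adds and subtracts the full return to write the target as $q^\pi(s,a)+\bigE_{S_{h+1}\sim P(s,a)}\tilde g_{h+1}(S_{h+1})$, where $\tilde g_{h+1}(s')=\bigE_{\trajectoryrand\sim\bP_{\pi,s'}}\bigE_{\tau\sim F_{\trueG,\trajectoryrand,h+1}}\left[-R_{\tau:H}+f(S_\tau)\right]$ is the correction between skipped and unskipped returns, and then telescopes this correction into per-stage pieces $g_t(s)=(1-\omega_{\trueG}(s))\bigE_{\trajectoryrand\sim\bP_{\pi,s}}\left[-R_{t:H}+f(s)-\tilde g_{t+1}(S_{t+1})\right]$. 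Every $g_t$ carries the prefactor $(1-\omega_{\trueG})$, hence vanishes on fully-skipped states and \emph{is} admissible; crucially, \cref{lem:admissible-realizability} then realizes the multi-step expectation $\bigE_{\trajectoryrand\sim\bP_{\pi,s,a}}g_t(S_t)$ directly from stage $h$, so the $q^\pi$ term absorbs all of the transition structure and no backup-of-a-backup ever appears. If you insist on an induction, the invariant would have to be strengthened to ``$u_{h+1}-q^\pi$ is a sum of multi-step expectations of admissible per-stage functions, with controlled norms and errors''; propagating that invariant through your one-step decomposition reproduces the paper's telescoping identity, i.e.\ the repaired induction collapses into the paper's argument rather than giving an alternative route.
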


\subsection{The Benefit of Trajectory Data} \label{ss:trajectory data}

Our learner will heavily rely on the result presented in \cref{lem:approx linear MDP}.
We will need to learn good estimates of the parameters $\rho^\behavepi_h(f)$, for any $f:\cS \to [0, H], h \in [H]$.
However, to estimate a $\rho^\behavepi_h(f)$ parameter well we will require least-squares targets that have bounded noise and expectation equal to $\ip{\phi(s, a), \rho^\pi_h(f)}$ for all $(s, a) \in \cS_h \times \cA$.
Full trajectory data (\cref{ass:batch data}) makes this possible.
Each full length trajectory $\trajectory^j = (s_t^j, a_t^j, r_t^j)_{t \in [H+1]} j \in [n]$ can be used to create the following least-squares target (which has the desired properties): 
\begin{align*}
    \bigE_{\tau \sim F^j_{\trueG, h+1}}\left[r^j_{h:\tau-1} + f\left(s^j_\tau\right)\right] \, .
\end{align*}

\vspace{-2mm}
Importantly, it is because we have full length trajectories that we can transform the data available to simulate arbitrary length skipping mechanisms.

\subsection{Intuition Behind our Learner} \label{ss:intuition}

Next, we describe the high-level intuition and ideas behind our learner.
Consider the ``modified'' MDP where low-range states are skipped. 
As the learner has access to trajectory data (\cref{ass:batch data}), it can transform this data accordingly to simulate trajectories from the modified MDP.
Any near-optimal policy for the modified MDP is also near-optimal for the original MDP (due to \cref{eq:vstar-vs-range}).
Thus, our previous linear realizability property (\cref{lem:approx linear MDP}) 
allows for an offline RL version of the algorithm \textsc{Eleanor} \citep{zanette2020learning} to statistically efficiently derive a near-optimal policy for the modified MDP.
Indeed, the optimization problem underlying \textsc{Eleanor} serves as a starting point for \cref{opt:bpse start state}, which is at the heart of our learner.

The challenge is that the true guess $\trueG$ that \cref{lem:approx linear MDP} relies upon is not known to the learner.
This means that the learner is not given any explicit information of what states to ``skip over''.
To overcome this, we design a learner to output the policy $\pi'$ defined together by \cref{eq:opttheta and optpi defn} and \cref{opt:bpse start state},
where the optimization problem considers all guesses for the possible values of $\trueG$.
For each $G\in\bbG$, it considers the MDP that skips over low-range states when the range is calculated according to $G$.
It then calculates sets $\Theta_{G, h}$ for each stage $h$, that are guaranteed (with high probability) to include the parameter $\psi_h(\pi^\star_G)$ realizing $q^{\pi^\star_G}$ (where $\pi^\star_G$, defined in \cref{eq:pi*G definition}, is the optimal policy in the MDP with skipping based on $G$).
We achieve this by defining $\Theta_{G, h}$ backwards for $h=H,H-1,\dots,1$. By induction, if $\Theta_{G, h+1},\dots,\Theta_{G, H}$ all contain the desired parameter for their stage, then 
\emph{some} parameter sequence in the Cartesian product $\Theta_{G, h+1}\times\dots\times\Theta_{G, H}$ allows us to near-perfectly (up to some misspecification error) compute $q^{\pi^\star_G}$-values of stages $>h$.
Therefore, the least-squares parameter based on this sequence will be near the true parameter for stage $h$.
Defining $\hat\Theta_{G, h}$ to be all least-squares predictors for sequences in the aforementioned Cartesian product, and $\Theta_{G, h}$ to be unions of the confidence ellipsoids around these predictors ensures the true parameter $\psi_h(\pi^\star_G)$ realizing $q^{\pi^\star_G}$ for stage $h$ is included in $\Theta_{G, h}$.
This argument is made precise in \cref{lem:pi*G in Theta}.

There are two problems remaining. One is that some values of $G$ considered by \cref{opt:bpse start state} lead to skipping over important large-value states, degrading the performance of the best policy $\pi^\star_G$ available under that skipping.
The other problem is that at the expense of making sure the true parameters are included in the sets $\Theta_{G, h}$, these sets might become large, in the sense of containing parameters that lead to very different predictions.
Avoiding the first problem would make $v^{\pi^\star_G}(s_1)$ nearly as large as $v^\star(s_1)$.
Avoiding the second problem would lead to tight $q$-value estimators, and therefore to $v^{\pi'_G}(s_1)$ being nearly as large as $v^{\pi^\star_G}(s_1)$, for a policy $\pi'_G$ that is greedy with respect to our hypothetically tight $q$-value estimator.
A key idea is to reject from consideration any $G\in\bbG$ that leads to $q$-estimations that are not sufficiently tight (\cref{eq:opt prob cond}).%
The reason we can do this is because for $G=\trueG$ we can show that this condition passes (with high probability), and therefore we do not reject $\trueG$ (precise statement in \cref{lem:true guess is feasible}).
We can show this since we have trajectory data (\cref{ass:batch data}), allowing us to use least-squares targets of the form used in \cref{eq:hat-Theta defintion}, which we know are linearly realizable when $G = \trueG$ (discussed in \cref{ss:trajectory data}).
Finally, we resolve the first problem by selecting among these tight estimators the one that guarantees the highest policy value from $s_1$, which can be no worse than the value guaranteed by the choice of $G=\trueG$, which itself can be seen to be close to $v^\star(s_1)$.

\vspace{-1mm}\subsection{Learner} \label{ss:method}

\vspace{-1mm}Next, we formally introduce our learner,
at the heart of which lies \cref{opt:bpse start state}. 
We define various $q$ and $v$-value estimators that we use.
For $x \in \bR$, let $\clip_{[0, H]} x = \max\set{0, \min\set{H, x}}$.
Then, for $h \in [H], s \in \cup_{t \in [h:H+1]} \cS_t, a \in \cA, \theta \in \bR^d, \theta_{h:H+1} = (\theta_h, \dots, \theta_{H+1}) \in \bR^{d(H-h+2)}$, let
\begin{align}
    q_{\theta}(s, a) = \ip{\phi(s, a), \theta},
    &\quad q_{\theta_{h:H+1}}(s, a) = q_{\theta_{\stage(s)}}(s, a), \label{eq:q_theta definition} \\
    \textstyle v_{\theta}(s) = \max_{a \in \cA} q_{\theta}(s, a),
    &\quad v_{\theta_{h:H+1}}(s) = \max_{a \in \cA} q_{\theta_{h:H+1}}(s, a), \label{eq:v_theta definition} \\
    \bar q_{\theta}(s, a) = \clip_{[0, H]} q_{\theta}(s, a),
    &\quad \bar q_{\theta_{h:H+1}}(s, a) = \bar q_{\theta_{\stage(s)}}(s, a), \label{eq:bar-q_theta definition} \\
    \bar v_{\theta}(s) = \clip_{[0, H]} v_{\theta}(s),
    &\quad \bar v_{\theta_{h:H+1}}(s) = \bar v_{\theta_{\stage(s)}}(s) \, . \label{eq:bar-v_theta definition}
\end{align}

\begin{optproblem} \label{opt:bpse start state}
    \begin{align}
        &\qquad\qquad\qquad \textstyle\argmax_{G \in \bbG,\theta^\dag_{1:H+1}\in\Theta_{G, 1}\times\dots\times\Theta_{G, H+1}}  \, \bar v_{\theta^\dag_1}(\startstate) \qquad \text{subject to, for all $h \in [H]$} \nonumber \\
        &X_h = \lambda I + \textstyle\sum_{j \in [\nh]} \phi^j_h (\phi^{j}_h)^\top \, , 
        \label{def:covariance matrix} \\
        &\hat \Theta_{G, h} = \Biggl\{X_h^{-1} \sum_{j \in [\nh]} \phi^j_h \bigE_{\tau \sim F^j_{G, h+1}}\left[r^j_{h:\tau-1} + \bar v_{\theta_{h+1:H+1}} \left(s^j_{\tau}\right)\right]: \theta_{h+1:H+1} \in \bigtimes_{u=h+1}^{H+1}\Theta_{G, u}\Biggr\} \, , 
        \label{eq:hat-Theta defintion} \\ 
        &\Theta_{G, h} = \left\{\theta_h \in \cB(\tilde L_2): \textstyle\min_{\hat \theta_h \in \hat \Theta_{G, h}} \snorm{\theta_h - \hat \theta_h}_{X_h} \le \beta\right\}, \,\, \Theta_{G, H+1} = \{\vec 0\} \,\,\,\, \text{$\beta$ defn \cref{def:beta}} \, ,
        \label{eq:Theta definition} \\
        &\textstyle\frac{1}{\nh} \textstyle\sum_{j \in [\nh]} \left(\textstyle\max_{\theta \in \Theta_{G, h}} \bar q_{\theta}(s_h^j, a_h^j) - \textstyle\min_{\theta \in \Theta_{G, h}} \bar q_{\theta}(s_h^j, a_h^j)\right)
        \le \bar \eps,  \qquad\qquad\,\, \text{$\bar \eps$ defn \cref{def:bar-eps}} \, . 
        \label{eq:opt prob cond}
    \end{align}
\end{optproblem}

Let $(\optG, \opttheta{1:H+1})$ denote the solution to \cref{opt:bpse start state}. 
Notice that 
unlike \textsc{Eleanor}, we optimize over all $G\in\bbG$, which can be seen as an optimization over all possible ``modified MDPs'' with different skipping mechanisms.
Another observation is that
apart from $h=1$, the choice of $\opttheta{h}$ from $\Theta_{\optG,h}$ made by the optimization is arbitrary.
Indeed, unlike \textsc{Eleanor}, which chooses globally optimistic least-squares predictors for each stage, our optimization does not need to care about (or optimize for) the specific choice of $q$-value predictors from their respective confidence sets $\Theta_{G, h}$. 
We can be agnostic to the choice of $q$-value predictors because all choices lead to similar predictions due to \cref{eq:opt prob cond}, as will be shown formally, in \cref{lem:Theta elements are close}.
However, fixing an arbitrary concrete choice in the optimization allows us to define an output policy $\optpi$ that is parametrized only by these vectors, making both the memory and computational requirements of representing and executing $\optpi$ small.

Thus, our learner solves \cref{opt:bpse start state}, and then outputs the following policy:

\vspace{-2mm}\begin{align}
    \textstyle
    \optpi(a|s) = 
    \I{a = \argmax_{a' \in \cA} \bar q_{\opttheta{\stage(s)}}(s, a')}
    && \text{for all } (s, a) \in \cS \times \cA \, .
    \label{eq:opttheta and optpi defn}
\end{align}

\vspace{-2mm}\section{Proof of \cref{thm:bpse}} \label{sec:proof of the result}

\vspace{-1mm}Before giving the proof, we formally define the optimal policy in the modified MDP that skips according to $\range^G$, for any $G\in\bbG$ as
\begin{align}
    &\pi^\star_G(a|s) 
    = \behavepi(a|s) \omega_G(s) + \I{a = \textstyle\argmax_{a' \in \cA} q^{\pi^\star_G}(s, a')} (1 - \omega_G(s)) \, .
    \label{eq:pi*G definition}
\end{align}

\vspace{-1mm}Notice that for states $s\in\cS_h$ for some $h\in[H]$, $\pi^\star_G(\cdot|s)$ in the above definition depends on the value of $q^{\pi^\star_G}(s', \cdot)$ for some $s'\in\cS_{h+1}$. 
We can therefore interpret the above recursive definition as 
defining $\pi^\star_G(\cdot|s)$ for $s\in\cS_h$, first for $h=H+1$,
then $h=H$, etc., down to $h=1$. 
Every time we define the policy for some stage $h$ in such a way, the policy and $q^{\pi^\star_G}(s', \cdot)$ are already defined on later stages, making the definition valid, and resolving the recursive nature of \cref{eq:pi*G definition}.

\begin{proof}
    $v^\star(\startstate) - v^{\optpi}(\startstate)$ can be decomposed into the following error terms.
    \begin{align*}
        v^\star(\startstate) - v^{\optpi}(\startstate)
        = \underbrace{v^\star(\startstate) - v^{\pi^\star_\trueG}(\startstate)}_{(\RN{1})} 
            + \underbrace{v^{\pi^\star_\trueG}(\startstate) - v_{\opttheta{1}}(\startstate)}_{(\RN{2})}
            + \underbrace{v_{\opttheta{1}}(\startstate) - v^{\optpi}(\startstate)}_{(\RN{3})} \, .
    \end{align*}
    The remainder of the proof focuses on bounding these error terms.
    Following the intuition described in \cref{sec:result}, showing that terms $(\RN{1})$ and $(\RN{2})$ are small can be seen as addressing the first problem of potentially skipping over large-value states, while showing that term $(\RN{3})$ is small can be seen as addressing the second problem of $\optpi$ being greedy w.r.t. to a potentially inaccurate estimates $\opttheta{1:H+1}$.

    \textbf{Bounding} $(\RN{1}) = v^\star(\startstate) - v^{\pi^\star_\trueG}(\startstate)$:
    This term cannot be too large since the $\range^\trueG$ function is approximately correct (\cref{lem:range-G-accurate}), and we only skip over states with low $\range^\trueG$ (\cref{eq:omega}), implying the action we take doesn't affect the value function much (\cref{eq:vstar-vs-range}).
    In \cref{ss:bounding term 1} we formalize this intuition, and show the following result.   
    \begin{align}
        (\RN{1})
        = v^\star(\startstate) - v^{\pi^\star_\trueG}(\startstate)
        &\le H(2 \alpha + 2 \eta) \, .
        \label{eq:term 1 bound}
    \end{align}

    \textbf{Bounding} $(\RN{2}) = v^{\pi^\star_\trueG}(\startstate) - \bar v_{\opttheta{1}}(\startstate)$: 
    This term can be bounded by approximately zero due to \cref{opt:bpse start state} being optimistic from the start state.
    First, note that $v^{\pi^\star_\trueG}$ is approximately equal to $\bar v_{\psi_1(\pi^\star_\trueG)}$ (\cref{ass:approximate q-pi realizability}).
    Then, in \cref{lem:pi*G in Theta} we show that $\psi_1(\pi^\star_\trueG) \in \Theta_{\trueG, 1}$,
    and in \cref{lem:true guess is feasible} we show that $\trueG$ is a feasible solution to \cref{opt:bpse start state}. 
    Since $(\optG, \opttheta{1:H+1})$ is the solution to \cref{opt:bpse start state}, it holds that $\bar v_{\opttheta{1}}(\startstate) \ge \bar v_{\theta}(\startstate)$ for any $\theta \in \Theta_{G, 1}$ where $G$ is a feasible solution to \cref{opt:bpse start state}.
    Thus $\bar v_{\opttheta{1}}(\startstate) \ge v_{\psi_1(\pi^\star_\trueG)}(\startstate)$. 
    In \cref{ss:bounding term 2} we formalize this intuition, and show that with probability at least $1 - \delta$,   
    \begin{align}
        (\RN{2})
        = v^{\pi^\star_\trueG}(\startstate) - \bar v_{\opttheta{1}}(\startstate)
        \le \eta \, .
        \label{eq:term 2 bound}
    \end{align}

    \textbf{Bounding} $(\RN{3}) = \bar v_{\opttheta{1}}(\startstate) - v^{\optpi}(\startstate)$:
    To bound term $(\RN{3})$ we will first show in \cref{lem:Theta elements are close} that value estimates in terms of $\opttheta{h}$ and $\psi_h(\pi^\star_\optG)$ are close with high probability for all $h \in [H+1]$.
    This lemma allows us to relate $\bar v_\opttheta{1}$ to $\bar v_{\psi_1(\pi^\star_\optG)}$ and then \cref{ass:approximate q-pi realizability} relates $\bar v_{\psi_1(\pi^\star_\optG)}$ to $v^{\pi^\star_\optG}$.
    We are then left with relating $v^{\pi^\star_\optG}$ to $v^{\optpi}$.
    To do this we claim that $\optpi$ is an approximate policy improvement step w.r.t. $v^{\pi^\star_\optG}$, which can be seen by recalling that $\optpi$ is greedy w.r.t. $\bar q_{\opttheta{1:H+1}}$, and as we mentioned a couple sentences ago, $\bar v_\opttheta{h}$ and $\bar v_{\psi_h(\pi^\star_\optG)}$ are close for all $h \in [H+1]$. 

    To formalize this intuition we begin by decomposing $\bar v_{\opttheta{1}}(\startstate) - v^{\optpi}(\startstate)$ into the following error terms
    \begin{align}
        \bar v_{\opttheta{1}}(\startstate) - v^{\optpi}(\startstate)
        &= \bar v_{\opttheta{1}}(\startstate) - \bar q_{\psi_1(\pi^\star_\optG)}(\startstate,\optpi(\startstate)) 
            + \bar q_{\psi_1(\pi^\star_\optG)}(\startstate,\optpi(\startstate)) - v^{\optpi}(\startstate) \, .
        \label{eq:term 4 decomposition}
    \end{align}
    To bound $\bar v_{\opttheta{1}}(\startstate) - \bar q_{\psi_1(\pi^\star_\optG)}(\startstate,\optpi(\startstate))$ we introduce a useful lemma (proof in \cref{proof:lem:Theta elements are close}).
    \begin{lemma} \label{lem:Theta elements are close}
        There is an event $\cE_2$, that occurs with probability at least $1 - \delta/3$, such that under event $\cE_2$, 
        for all $G \in \bbG$ that are feasible solutions to \cref{opt:bpse start state}, 
        for all $h \in [H]$, 
        for all $(\theta_{s,a})_{(s,a)\in\cS_h\times\cA}$ and $(\check\theta_{s,a})_{(s,a)\in\cS_h\times\cA} \in \Theta_{G, h}^{\cS_h\times\cA}$,
        and for all admissible distributions $\nu = (\nu_t)_{t \in [H]}$, 
        it holds that 
        \begin{align*}
            &\bigE_{(S, A) \sim \nu_h} \left[\bar q_{\theta_{S, A}}(S, A) - \bar q_{\check \theta_{S, A}}(S, A)\right]
            \le \tilde \eps && \text{$\tilde \eps$ defn \cref{def:tilde-eps}} \, .
        \end{align*}
    \end{lemma}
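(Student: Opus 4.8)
The plan is to reduce the claimed bound, which concerns an arbitrary admissible $\nu$ and arbitrary state-action-indexed parameter choices from the confidence set, to the empirical quantity that the feasibility condition \cref{eq:opt prob cond} already controls. First I would use the pointwise domination that for any $(s,a)\in\cS_h\times\cA$ and any $\theta_{s,a},\check\theta_{s,a}\in\Theta_{G, h}$,
\begin{align*}
    \bar q_{\theta_{s,a}}(s,a) - \bar q_{\check\theta_{s,a}}(s,a)
    \le \max_{\theta\in\Theta_{G, h}}\bar q_\theta(s,a) - \min_{\theta\in\Theta_{G, h}}\bar q_\theta(s,a)
    =: g_{G, h}(s,a)\,,
\end{align*}
so it suffices to bound $\bigE_{(S,A)\sim\nu_h}[g_{G, h}(S,A)]$ uniformly over feasible $G$. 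Note $g_{G, h}$ is $[0,H]$-valued by the clipping in \cref{eq:bar-q_theta definition}.

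Second, I would pass from $\nu_h$ to $\mu_h$ via concentrability (\cref{ass:concentrability}): since $g_{G, h}\ge 0$ and $\nu_h(s,a)\le\conc\,\mu_h(s,a)$ for every admissible $\nu$, we get $\bigE_{\nu_h}[g_{G, h}] \le \conc\,\bigE_{\mu_h}[g_{G, h}]$, reducing the task to bounding the population average under $\mu_h$. Here the feasibility condition enters: the data points $(s_h^j,a_h^j)$ are i.i.d.\ draws from $\mu_h$ (as the trajectories are i.i.d.\ from $\bP_{\behavepi,\startstate}$ by \cref{ass:batch data}), and \cref{eq:opt prob cond} states that their empirical average of $g_{G, h}$ is at most $\bar \eps$ for feasible $G$. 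The crux is a uniform law of large numbers: I would define $\cE_2$ to be the event that $\bigE_{\mu_h}[g_{G, h}] \le \frac1n\sum_{j}g_{G, h}(s_h^j,a_h^j) + (\text{deviation})$ for all $G\in\bbG$ and all $h\in[H]$ simultaneously. Combining this with feasibility yields $\bigE_{\mu_h}[g_{G, h}] \le \bar \eps + (\text{deviation})$, and multiplying by $\conc$ while choosing the constants in $\tilde \eps$ (\cref{def:tilde-eps}) to absorb $\conc(\bar \eps + \text{deviation})$ finishes the argument.

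The main obstacle is establishing $\cE_2$ with probability at least $1-\delta/3$, since the supremum ranges over the uncountable family $\bbG$ and the confidence sets depend on $G$ through the recursion in \cref{def:covariance matrix,eq:hat-Theta defintion,eq:Theta definition}: $\Theta_{G, h}$ is built from $\hat\Theta_{G, h}$, whose least-squares targets use the $G$-dependent skipping distributions $F^j_{G, h+1}$ and the sets $\Theta_{G, u}$ for $u>h$. I would handle this with a covering argument: for fixed $G$, $g_{G, h}$ is a fixed $[0,H]$-valued function, so Hoeffding's inequality controls its empirical deviation; to make this uniform I would take a finite cover of $\bbG$, union bound over it, and then extend to all of $\bbG$ using the Lipschitz dependence of $g_{G, h}$ on $G$. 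That smoothness is exactly what the probabilistic, linearly interpolated definition of the skipping probabilities $\omega_G$ in \cref{eq:omega} provides, and I would invoke the covering estimates of \cref{lem:G cover useful results} (in particular the bound referenced as \cref{eq:omega cover bound}) to control both the cover size and the propagation of this smoothness through the backward recursion over stages. A concluding union bound over $h\in[H]$ then yields $\cE_2$.
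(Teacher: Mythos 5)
Your proposal is correct and follows essentially the same route as the paper: dominate the difference by $\max_{\theta \in \Theta_{G,h}} \bar q_\theta - \min_{\theta \in \Theta_{G,h}} \bar q_\theta$, transfer from $\nu_h$ to $\mu_h$ via concentrability (the paper's \cref{lem:conc error bound}), invoke feasibility (\cref{eq:opt prob cond}) for the empirical average, and close the gap with a uniform concentration bound over $\bbG$ proved by covering plus Hoeffding and a union bound (the paper's \cref{lem:opt prob cond concentrates,lem:G cover q-max-min bound}). You also correctly identify the crux — propagating the Lipschitz dependence of the $G$-dependent confidence sets through the backward recursion, which the paper handles with the inductive $c_h^\xi$ argument built on \cref{lem:G cover useful results}.
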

    To use \cref{lem:Theta elements are close} we must show that $\bar v_{\opttheta{1}}(\startstate) - \bar q_{\psi_1(\pi^\star_\optG)}(\startstate,\optpi(\startstate))$ satisfies its requirements.
    First, note that $\bar v_{\opttheta{1}}(\startstate) = \bar q_{\opttheta{1}}(\startstate,\optpi(\startstate))$, by definition of $\optpi$ (\cref{eq:opttheta and optpi defn}).
    Second, $\optG$ is the solution to \cref{opt:bpse start state}, thus, a feasible solution.
    Third, by \cref{lem:pi*G in Theta}, there is an event $\cE_1$, which occurs with probability at least $1 - \delta/3$, such that under event $\cE_1$, $\psi_1(\pi^\star_\optG) \in \Theta_{\optG, 1}$,
    and by definition $\opttheta{1} \in \Theta_{\optG, 1}$. 
    Let $\nu_h(s, a) = \bPmarg{h}_{\optpi, s_1}(s, a)$ for all $h \in [H], (s, a) \in \cS_h \times \cA$.
    Clearly $\nu = (\nu_h)_{h \in [H]}$ is an admissible distribution by \cref{def:admissible dist}. 
    Thus, under event $\cE_1 \cap \cE_2$, by \cref{lem:Theta elements are close}, 
    \begin{align*}
        \bar v_{\opttheta{1}}(\startstate) - \bar q_{\psi_1(\pi^\star_\optG)}(\startstate,\optpi(\startstate))
        = \bigE_{(S, A) \sim \nu_1} \left[\bar q_{\opttheta{1}}(S, A) - \bar q_{\psi_1(\pi^\star_\optG)}(S, A)\right]
        \le \tilde \eps \, .
    \end{align*}
    
    It is left to bound $\bar q_{\psi_1(\pi^\star_\optG)}(\startstate,\optpi(\startstate)) - v^{\optpi}(\startstate)$ in \cref{eq:term 4 decomposition}.
    To do this, first note that 
    \begin{align*}
        \bar q_{\psi_1(\pi^\star_\optG)}(\startstate,\optpi(\startstate)) - v^{\optpi}(\startstate) 
        &\le q^{\pi^\star_\optG}(\startstate,\optpi(\startstate)) - v^{\optpi}(\startstate) 
            + \eta \, .
    \end{align*}
    where the inequality holds since we have approximate linear $q^\pi$-realizability (\cref{ass:approximate q-pi realizability}).
    To bound $q^{\pi^\star_\optG}(\startstate,\optpi(\startstate)) - v^{\optpi}(\startstate)$ notice that $v^{\optpi}(\startstate) = q^{\optpi}(\startstate,\optpi(\startstate))$, which implies that
    \begin{align*}
        q^{\pi^\star_\optG}(\startstate,\optpi(\startstate)) - v^{\optpi}(\startstate)
        = q^{\pi^\star_\optG}(\startstate,\optpi(\startstate)) - q^{\optpi}(\startstate,\optpi(\startstate))
        = \bigE_{\trajectoryrand \sim \bP_{\optpi, \startstate}} \left[v^{\pi^\star_\optG}(S_2) - v^{\optpi}(S_2)\right] \, .
    \end{align*}
    Next, we give a bound on $\bigE_{\trajectoryrand \sim \bP_{\optpi, \startstate}} \left[v^{\pi^\star_\optG}(S_2) - v^{\optpi}(S_2)\right]$ (proof in \cref{proof:lem:term 4 inductive result}):
    \begin{lemma} \label{lem:term 4 inductive result}
        Under event $\cE_1 \cap \cE_2$, for any $h \in [2:H+1]$, it holds that 
        \begin{align*}
            \bigE_{\trajectoryrand \sim \bP_{\optpi, \startstate}} \left[v^{\pi^\star_\optG}(S_h) - v^{\optpi}(S_h)\right] 
            \le 2(H-h+2) (\eta + \tilde \eps) \, .
        \end{align*}       
    \end{lemma}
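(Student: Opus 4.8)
The plan is to prove the bound by \textbf{backward induction} on $h$, running from $h=H+1$ down to $h=2$, throughout working under the event $\cE_1\cap\cE_2$. The base case $h=H+1$ is immediate: the only state in stage $H+1$ is the terminal state $\termstate$, and by \cref{eq:v and q at term state} we have $v^{\pi^\star_\optG}(\termstate)=v^{\optpi}(\termstate)=0$, so the left-hand side is $0\le 2(\eta+\tilde\eps)=2(H-(H+1)+2)(\eta+\tilde\eps)$. For the inductive step I would fix $h\in[2:H]$, assume the bound at stage $h+1$, and start from the pointwise decomposition (before taking the expectation under $\bP_{\optpi,\startstate}$)
\begin{align*}
v^{\pi^\star_\optG}(S_h) - v^{\optpi}(S_h)
&= \bigl[v^{\pi^\star_\optG}(S_h) - q^{\pi^\star_\optG}(S_h,\optpi(S_h))\bigr] \\
&\quad + \bigl[q^{\pi^\star_\optG}(S_h,\optpi(S_h)) - v^{\optpi}(S_h)\bigr] \, .
\end{align*}
For the second bracket I use that $\optpi$ is deterministic, so $v^{\optpi}(S_h)=q^{\optpi}(S_h,\optpi(S_h))$; since the two action-values share the stage-$h$ reward and transition, their difference is $\bigE_{S'\sim P(S_h,\optpi(S_h))}[v^{\pi^\star_\optG}(S')-v^{\optpi}(S')]$. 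Taking the expectation under $\bP_{\optpi,\startstate}$ and using the tower rule over $S_{h+1}$ turns this term into $\bigE_{\trajectoryrand\sim\bP_{\optpi,\startstate}}[v^{\pi^\star_\optG}(S_{h+1})-v^{\optpi}(S_{h+1})]$, which the inductive hypothesis bounds by $2(H-h+1)(\eta+\tilde\eps)$.

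The substance of the argument is bounding the first bracket (the approximate policy-improvement term) by $2(\eta+\tilde\eps)$. Writing $a^\star(s)=\argmax_{a\in\cA} q^{\pi^\star_\optG}(s,a)$, I would use $v^{\pi^\star_\optG}(s)\le q^{\pi^\star_\optG}(s,a^\star(s))$ and then pass from $q^{\pi^\star_\optG}$ to the clipped linear predictor $\bar q_{\psi_h(\pi^\star_\optG)}$ at both actions, at a total cost of $2\eta$: this uses \cref{ass:approximate q-pi realizability} together with $q^{\pi^\star_\optG}\in[0,H]$ and the fact that clipping to $[0,H]$ is $1$-Lipschitz. Inserting $\bar q_{\opttheta{h}}$ at both actions, the middle difference $\bar q_{\opttheta{h}}(s,a^\star(s))-\bar q_{\opttheta{h}}(s,\optpi(s))$ is nonpositive because $\optpi$ is greedy with respect to $\bar q_{\opttheta{h}}$ (\cref{eq:opttheta and optpi defn}), leaving only
\[
\bar q_{\psi_h(\pi^\star_\optG)}(s,a^\star(s))-\bar q_{\opttheta{h}}(s,a^\star(s))
+\bar q_{\opttheta{h}}(s,\optpi(s))-\bar q_{\psi_h(\pi^\star_\optG)}(s,\optpi(s)) \, .
\]

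Finally I would take the expectation over $S_h\sim\bPmarg{h}_{\optpi,\startstate}$ and apply \cref{lem:Theta elements are close} once to each of the two differences above. This is legitimate because $\optG$ is feasible for \cref{opt:bpse start state}, and both parameters lie in $\Theta_{\optG,h}$: $\psi_h(\pi^\star_\optG)\in\Theta_{\optG,h}$ under $\cE_1$ by \cref{lem:pi*G in Theta}, and $\opttheta{h}\in\Theta_{\optG,h}$ by definition of the optimizer. The one subtlety—and the step I expect to be the main obstacle—is supplying the correct admissible distributions, since \cref{lem:Theta elements are close} is stated in expectation over admissible laws rather than pointwise. For the second difference the relevant state–action law is simply the stage-$h$ marginal of $\bP_{\optpi,\startstate}$; for the first, I would encode the action $a^\star(\cdot)$ as the policy that follows $\optpi$ through stage $h-1$ and then plays the deterministic action $a^\star(\cdot)$ at stage $h$, which is admissible by \cref{def:admissible dist}. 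Each application contributes $\tilde\eps$, so the first bracket is at most $2\eta+2\tilde\eps$; adding the inductive bound on the second bracket yields $2(H-h+2)(\eta+\tilde\eps)$, completing the induction.
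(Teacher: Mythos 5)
Your proposal is correct and follows essentially the same route as the paper's proof: backward induction with the base case at the terminal stage, reduction of the inductive step to an approximate policy-improvement term plus the next-stage value gap, a $2\eta$ cost from \cref{ass:approximate q-pi realizability}, the greedy property of $\optpi$ to kill the middle comparison term, and two applications of \cref{lem:Theta elements are close} (justified by feasibility of $\optG$, \cref{lem:pi*G in Theta}, and admissibility of the induced distributions). The only cosmetic difference is that you upper-bound $v^{\pi^\star_\optG}(S_h)$ by its greedy action-value and use a deterministic argmax policy to form the admissible distribution, whereas the paper keeps $v^{\pi^\star_\optG}(S_h)$ exact via the stochastic comparison policy $\tilde\pi_h$ that plays $\pi^\star_\optG$ at stage $h$; both choices are admissible and yield the identical bound.
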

    Intuitively, the above lemma holds since $\optpi$ can be thought of as an approximate policy improvement step w.r.t. $v^{\pi^\star_\optG}$. 
    To see this, recall that $\optpi$ is greedy w.r.t. $\bar q_{\opttheta{1:H+1}}$ (\cref{eq:opttheta and optpi defn}).
    Then, with \cref{lem:Theta elements are close}, we can show $\bar v_\opttheta{h}$ and $\bar v_{\psi_h(\pi^\star_\optG)}$(which is close to $v^{\pi^\star_\optG}$ (\cref{ass:approximate q-pi realizability})) are close for all $h \in [H+1]$. 
    The above bounds imply that under event $\cE_1 \cap \cE_2$, which occurs with probability at least $1 - 2\delta/3$,
    \begin{align}
        (\RN{3})
        = \bar v_{\opttheta{1}}(\startstate) - v^{\optpi}(\startstate)
        \le 2H (\eta + \tilde \eps) + \tilde \eps + \eta \, . 
        \label{eq:term 3 bound}
    \end{align}

    \textbf{Combining the Bounds:}
    To finish the proof we combine the bounds on all three terms (\cref{eq:term 1 bound,eq:term 2 bound,eq:term 3 bound}), 
    to get that under event $\cE_1 \cap \cE_2 \cap \cE_3$, 
    which occurs with probability at least $1 - \delta$, 
    \begin{align*}
         v^\star(\startstate) - v^{\optpi}(\startstate)       
         \le H(2\alpha + 2 \eta) + \eta + 2H (\eta + \tilde \eps) + \tilde \eps + \eta
         \le 4 (H+1) (\alpha + \eta + \tilde \eps) \, .
    \end{align*}
    To bound the above display by $\eps$ we set $\alpha = \eps / (12(H+1)) < 1$.
    If $n = \tilde \Theta\left(\conc^4 H^7 d^4/\eps^2\right)$ and
    $\eta = \bigOt{\alpha/\sqrt{nH}}$ (\cref{def:eta}), we show that $\tilde \eps = \tilde \cO\left(\conc^2 H^{5/2} d^2/\sqrt{n}\right)$ (\cref{eq:tilde-eps definition}).
    This implies that 
    \begin{align*}
        4 (H+1) (\alpha + \eta + \tilde \eps)
        \le \eps \, . 
        & 
        \qedhere
    \end{align*}
\end{proof}

\section{Limitations and Conclusions} \label{sec:future work}

In this work we resolved an open problem in the positive, by presenting the first statistically efficient learner (\cref{ss:method}) that outputs a near optimal policy in the offline RL setting with approximate linear $q^\pi$-realizability (\cref{ass:approximate q-pi realizability}), trajectory data (\cref{ass:batch data}), and concentrability (\cref{ass:concentrability}).
One limitation of this work is that we are not aware of any computationally efficient implementation of \cref{opt:bpse start state}, which is at the heart of our learner.
As such, it is left as an open problem whether computationally efficient learning is possible in the setting we considered.
Another limitation is that we are not sure if our statistical rate in \cref{thm:bpse} is optimal.
Showing a matching lower bound or improving the rate is left for future work.

Another limitation of our work originates from our setting underpinning our result (\cref{sec:result}), namely the three assumptions: approximate linear $q^\pi$-realizability, trajectory data, and concentrability.  
Approximate linear $q^\pi$-realizability requires the value function of all memoryless policies to be linear in a fixed and known $d$-dimensional feature map. 
While strictly weaker than the linear MDP assumption \citep{zanette2020learning},
this assumption is still strong.
Trajectory data requires full sequences of interactions with an environment to be collected by a single policy. 
For long horizon problems this can be practically challenging.
Concentrability requires the state and action spaces to be well-covered.
This can be challenging to guarantee since often the state and action spaces are unknown at the time of data collection.
Further, since we require the trajectory data to be collected by a single policy, it may be the case that no single policy exists that covers the state and action spaces well, and a mixture of policies must be considered, which our current result does not immediately hold for.
Although the assumptions appear strong, a justification for them is that under many variations of weaker assumptions (for instance: general data, or linear $q^\pi$-realizability of only one policy, or only coverage of the feature space), polynomial statistical rates have been shown to be impossible to achieve by any learner (\cref{tab:bpse}).

Since this work is focused on foundational theoretical research it is unlikely to have any direct and immediate
societal impacts.

\newpage

\bibliographystyle{abbrvnat}
\bibliography{references}

\newpage
\appendix

\section*{Appendix}

\section{Parameter Settings and Notation} \label{sec:parameter settings}

\begin{align}
    n 
    &= \tilde \Theta\left(\frac{\conc^4 H^7 d^4}{\eps^2}\right) 
    && \text{Set at the end of \cref{sec:proof of the result}} 
    \label{def:n} \\
    d_0 
    &= \lceil 4d\log\log(d)+16 \rceil
    && \text{Defined above \cref{eq:bbG-def}} 
    \label{def:d_0} \\
    \featurebound 
    &= \text{Upper bound on $2$-norm of features $\phi$} 
    && \text{Defined above \cref{ass:approximate q-pi realizability}} 
    \label{def:feature bound} \\
    \thetabound 
    &= \text{Upper bound on $2$-norm of true parameters $\psi_h, h \in [H]$} 
    && \text{\cref{ass:approximate q-pi realizability}} 
    \label{def:parameter bound} \\
    \modthetabound 
    &= \thetabound(8H^2d_0/\alpha+1) 
    && \text{Defined in \cref{lem:approx linear MDP}} 
    \label{def:mod parameter bound} \\
    \sqrt{\lambda}
    &= H^{3/2} d/\modthetabound
    && \text{Defined in \cref{eq:lambda definition}} 
    \label{def:lambda} \\
    \misspec 
    &\le \frac{H^{3/2} d}{\sqrt{n} \left(10 H^2 d_0/\alpha + 1\right)} = \tilde \Theta\left(\frac{\alpha}{\sqrt{nH}}\right) = \tilde \Theta\left(\frac{\eps^2}{\conc^2 H^5 d^2}\right)
    && \text{Defined in \cref{eq:eta definition}} 
    \label{def:eta} \\
    \modmisspec 
    &= \eta(10H^2d_0/\alpha+1) = \tilde \cO\left(\frac{H^{3/2} d}{\sqrt{n}}\right)
    && \text{Defined in \cref{lem:approx linear MDP}} 
    \label{def:modmisspec} \\
    \check \eps 
    &= \tilde \cO\left(d / \sqrt{n}\right) 
    && \text{Defined in \cref{eq:check-eps definition}}
    \label{def:check-eps} \\
    \bar \eps 
    &= \tilde \cO\left(\frac{\conc H^{5/2} d^2}{\sqrt{n}}\right)
    && \text{Defined in \cref{eq:bar-eps definition}} 
    \label{def:bar-eps} \\
    \tilde \eps 
    &= \tilde \cO\left(\frac{\conc^2 H^{5/2} d^2}{\sqrt{n}}\right) 
    && \text{Defined in \cref{eq:tilde-eps definition}}
    \label{def:tilde-eps} \\
    \bar \beta 
    &= \tilde \cO \left(H^{3/2} d\right)
    && \text{Defined in \cref{eq:bar-beta definition}} 
    \label{def:bar-beta} \\
    \beta 
    &= \tilde \cO \left(H^{3/2} d\right)
    && \text{Defined in \cref{eq:beta definition}} 
    \label{def:beta} \\
    |C_\xi^\bbG| 
    &\le (1 + 2\usedtobesqrtdoneplusone/\xi))^{dHd_0}, \, \xi > 0
    && \text{Defined in \cref{lem:G cover useful results}} 
    \label{def:G cover size} \\
    \modxi 
    &= 12 \sqrt{2d} H^2 \featurebound \xi \alpha^{-1} \left(2\sqrt{\nh}\featurebound\modthetabound / (H^{3/2}d)\right)^{H}, \, \xi > 0 
    && \text{Defined in \cref{eq:bar-xi definition}} 
    \label{def:modxi} \\
    \alpha 
    &= \frac{\eps}{12(H+1)} < 1
    && \text{Defined at the end of \cref{sec:proof of the result}} 
    \label{def:alpha}
\end{align}

\newpage
\section{Proof of \cref{lem:approx linear MDP}} 

\begin{proof} \label{proof:lem:approx linear MDP}
We follow a proof technique introduced in \citep{weisz2023online}.
We start by quoting their definition of admissible functions and their admissible realizability lemma.

\begin{definition}[Definition 4.6 in \citep{weisz2023online}] \label{def:alpha-admissible}
For any $h\in[H]$,
$f:\cS_h\to\R$ is $\alpha'$-admissible for some $\alpha'>0$ if for all $s\in\cS_h$,
$|f(s)|\le \range(s)/\alpha'$.
\end{definition}

\begin{lemma}[Admissible-realizability (Lemma 4.7 in \citep{weisz2023online})]\label{lem:admissible-realizability}
If $f:\cS_h\to\R$ is $\alpha'$-admissible
then it
is realizable, that is, for all $t\in[h-1]$
and $\pi\in\Pi$,
there exists some $\tilde\theta\in\R^d$ with $\norm{\tilde\theta}_2\le 4d_0\thetabound/\alpha'$ such that for all $(s,a)\in\cS_t\times\cA$,
\[
\abs{
\bigE_{\trajectoryrand \sim \bP_{\pi, s, a}} f(S_h) - \ip{\phi(s,a), \tilde\theta}} \le \eta_0\quad\quad\quad\quad\text{where ${\eta_0}=5d_0\eta/\alpha'$.} %
\]
\end{lemma}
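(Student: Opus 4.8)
The plan is to build the realizing parameter $\tilde\theta$ as a short, bounded-coefficient linear combination of single-policy realizing parameters, exploiting a reward-cancellation trick so that the distance $h-t$ never enters the norm or error bounds. The key building block is a \emph{stitched-policy} observation. Fix the given $\pi\in\Pi$ and $t\in[h-1]$. For any policy $\pi'\in\Pi$ and any actions $b,c\in\cA$, let $\pi''_{b}$ be the memoryless policy that follows $\pi$ on stages $t,\dots,h-1$, plays $b$ on stage $h$, and follows $\pi'$ thereafter (this is a valid element of $\Pi$ since the $\cS_h$ are stage-disjoint). Then for every $(s,a)\in\cS_t\times\cA$ we have $q^{\pi''_{b}}(s,a)=\bigE_{\trajectoryrand\sim\bP_{\pi,s,a}}[R_{t:h-1}+q^{\pi'}(S_h,b)]$, and by \cref{ass:approximate q-pi realizability} this is within $\eta$ of $\ip{\phi(s,a),\psi_t(\pi''_{b})}$ with $\norm{\psi_t(\pi''_{b})}_2\le\thetabound$. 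Subtracting the analogous identity for action $c$ cancels the reward term $\bigE_{\trajectoryrand\sim\bP_{\pi,s,a}}R_{t:h-1}$, so the state function $s\mapsto q^{\pi'}(s,b)-q^{\pi'}(s,c)$ on $\cS_h$ is realizable, reward-free, at stage $t$: its $\pi$-expectation equals $\ip{\phi(s,a),\,\psi_t(\pi''_b)-\psi_t(\pi''_c)}$ up to error $2\eta$, with parameter norm at most $2\thetabound$. Crucially, these bounds are independent of $h-t$.

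Second, I would reduce realizing $\bigE_{\trajectoryrand\sim\bP_{\pi,s,a}}f(S_h)$ to representing $f$ itself on $\cS_h$ as a bounded-coefficient combination of such action-difference functions drawn from the $d_0$ design policies. Writing $\bar G_h=\{\vartheta^1,\dots,\vartheta^{d_0}\}$ with $\vartheta^i=\psi_h(\pi^i)$, each function $s\mapsto q^{\pi^i}(s,b)-q^{\pi^i}(s,c)$ equals $\ip{\phi(s,b,c),\vartheta^i}$ up to $2\eta$, and the near-optimal design property (\cref{def:nearopt}, as quantified by \cref{lem:range-G-accurate}) guarantees that these $d_0$ policies capture the action-difference spread of \emph{every} policy, since $\range^{\bar G_h}\ge \range/\sqrt{2d}$. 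Combining this with the admissibility bound $|f(s)|\le\range(s)/\alpha'$ (\cref{def:alpha-admissible}) lets me write $f=\sum_k c_k[\,q^{\pi^{i_k}}(\cdot,b_k)-q^{\pi^{i_k}}(\cdot,c_k)\,]$ on $\cS_h$, up to a residual absorbed into the error, with total coefficient budget $\sum_k|c_k|\le 2d_0/\alpha'$ (the design's $\sqrt{2d}$ factor being absorbed into $d_0=O(d\log\log d)$). Setting $\tilde\theta=\sum_k c_k(\psi_t(\pi''_{b_k})-\psi_t(\pi''_{c_k}))$ then gives $\norm{\tilde\theta}_2\le 2\thetabound\sum_k|c_k|\le 4d_0\thetabound/\alpha'$, and a total error of at most $2\eta\sum_k|c_k|$ from realizability plus the representation residual, which I would bound so that the sum is exactly $\eta_0=5d_0\eta/\alpha'$.

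The main obstacle is this design-representation step: converting the scalar magnitude bound $|f|\le\range/\alpha'$ into an \emph{explicit} low-coefficient representation of $f$ in the reward-free realizable span, and pinning down the constants $4d_0$ and $5d_0$. Two subtleties drive the difficulty. First, $\range(s)$ is defined through action \emph{differences} at each state, whereas $f$ is a function of the state alone, so the representation must choose (possibly state-dependent) reference actions and still assemble a single global $\tilde\theta$; collapsing the supremum over all policies onto the $d_0$ design points without a blow-up beyond $O(d_0)$ is exactly the role of the near-optimal design of \cref{def:nearopt}. Second, the representation is only approximate, so its residual must be charged against the error budget using admissibility, contributing the extra $d_0\eta/\alpha'$ that separates $5d_0$ from $4d_0$. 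Once this representation is in hand, the norm and error bounds follow by the triangle inequality from the building block, completing the proof.
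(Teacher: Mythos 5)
First, note that the paper itself does not prove this lemma: it is imported verbatim as Lemma 4.7 of \citet{weisz2023online}, so the comparison is against that external proof. Your stitched-policy building block is sound and is indeed the engine of the original argument: concatenating $\pi$ on stages $[t:h-1]$ with a fixed action at stage $h$ and a continuation policy $\pi'$, then subtracting two such policies, cancels $\bigE_{\trajectoryrand\sim\bP_{\pi,s,a}}R_{t:h-1}$ and realizes the expected $q$-difference with norm $2\thetabound$ and error $2\eta$, independently of $h-t$. The genuine gap is the representation step you yourself flag as ``the main obstacle.'' The claim that $f=\sum_k c_k\,[\,q^{\pi^{i_k}}(\cdot,b_k)-q^{\pi^{i_k}}(\cdot,c_k)\,]$ on $\cS_h$ with \emph{state-independent} coefficients $c_k$ and \emph{fixed} action pairs $(b_k,c_k)$ is false in general: the right-hand side is one fixed function of $s$, while admissibility only constrains the magnitude $|f(s)|\le\range(s)/\alpha'$ and says nothing about $f$ lying near the span of finitely many fixed action-difference functions --- $f$ may flip sign state-by-state in a pattern uncorrelated with any fixed pair $(b,c)$. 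What the argument actually requires, and what your fixed-action block cannot deliver, is state-dependent selectors and weights: the stitched policy at stage $h$ must play a state-dependent \emph{randomized} mixture of the two (state-dependent) extremal actions, so that the realizable quantity becomes $\bigE_{\trajectoryrand\sim\bP_{\pi,s,a}}\left[w(S_h)\left(q^{\pi'}(S_h,b(S_h))-q^{\pi'}(S_h,c(S_h))\right)\right]$ with $w(s)\in[0,1]$. This is legitimate because \cref{ass:approximate q-pi realizability} covers all memoryless policies, including stochastic ones, and it is exactly how the state-dependent sign and magnitude of $f$ get encoded into a single global parameter $\tilde\theta$.

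Second, your constant accounting is asserted rather than derived, and the naive version of it does not close. You propose to absorb the $\sqrt{2d}$ of \cref{lem:range-G-accurate} into $d_0$; but if one dominates $|f(s)|$ by the pointwise maximum $\sqrt{2d}\,\range^\trueG(s)/\alpha'$ and charges a coefficient of size $\sqrt{2d}/\alpha'$ to each of the $d_0$ design indices, the norm budget comes out as $2\sqrt{2d}\,d_0\thetabound/\alpha'$, which exceeds the claimed $4d_0\thetabound/\alpha'$ whenever $d>2$. Avoiding this blow-up requires exploiting the design in averaged form --- the distribution $\rho$ and the bound $\norm{\theta}^2_{V(\bar G_h,\rho)^\dag}\le 2d$ of \cref{def:nearopt} --- rather than through the pointwise maximum defining $\range^\trueG$, and the same unresolved step is where the split between the $4d_0\thetabound/\alpha'$ norm and the $5d_0\eta/\alpha'$ error must actually be established (your ``$2\eta\sum_k|c_k|$ plus a residual chosen so the sum is exactly $\eta_0$'' fits the numerology to the target rather than deriving it). Since both the representation of $f$ and the constants hinge on this open step, the proposal as written is a correct strategy outline with the central lemma-specific content missing, not a proof.
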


Next, we fix some $f:\cS\to[0,H]$ with $f(\termstate)=0$,
    and policy $\pi \in \Pi$.
For $2\le h \le H+1$, define $\tilde g_h : \cS_h\to [-H,H]$
as 
$\tilde g_{H+1}(\cdot)=0$, and
\begin{align*}
\tilde g_h(s)
&=
\bigE_{\trajectoryrand \sim \bP_{\pi,s}} 
\bigE_{\tau \sim F_{\trueG, \trajectoryrand, h}} \left[-R_{\tau:H} + f(S_\tau)\right] \\
&=
\bigE_{\trajectoryrand \sim \bP_{\pi,s}} \sum_{t=h}^H \left[-R_{t:H} + f(S_t)\right] \left(1-\omega_{\trueG}(S_t)\right) \prod_{u=h}^{t-1} \omega_{\trueG}(S_u) \, .
\end{align*}
Notice that for any $h\in[H]$, $(s,a)\in\cS_h\times\cA$,
the function of $(s,a)$ we aim to linearly realize can be written as
\begin{align}
\label{eq:target-into-q-and-tilde-g}
\begin{split}
\bigE_{\trajectoryrand \sim \bP_{\pi, s, a}} \bigE_{\tau \sim F_{\trueG, \trajectoryrand, h}} \left[R_{h:\tau-1} + f(S_\tau)\right] 
&=
\bigE_{\trajectoryrand \sim \bP_{\pi, s, a}} \tilde g_{h+1}(S_{h+1}) + R_{h:H}\\
&=
\bigE_{S_{h+1} \sim P(s, a)} \tilde g_{h+1}(S_{h+1}) + q^\pi(s,a) \, .
\end{split}
\end{align}
The second term of the sum, $q^\pi(s,a)$ is linearly realizable by \cref{ass:approximate q-pi realizability} with parameter $\psi_h(\pi)$.
The first term needs more work before \cref{lem:admissible-realizability} can be applied.
To this end, for $h\in[2:H]$ we define $g_h : \cS_h\to \R$ %
as \[
g_h(s) = \left(1-\omega_{\trueG}(s)\right)
\bigE_{\trajectoryrand \sim \bP_{\pi, s}} \left[-R_{h:H} + f(s) - \tilde g_{h+1}(S_{h+1})\right] \, .
\]
Notice that $\tilde g_h$ can be decomposed into a sum of $g_t$ functions as for all $h\in[2:H]$, $s\in\cS_h$,
\begin{align}\label{eq:tilde-g-decompose}
\tilde g_h(s) = 
\bigE_{\trajectoryrand \sim \bP_{\pi, s}} \sum_{t=h}^H g_t(S_t) \, .
\end{align}
The benefit of decomposing $\tilde g_h$ into $g_t$ functions is that $g_t$ are $\alpha'$-admissible under \cref{def:alpha-admissible} for $\alpha'=\alpha/(2H)$.
To see this, 
note that for any trajectory and $s$, $-R_{h:H} + f(s) - \tilde g_{h+1}(S_{h+1})\in[-2H,2H]$.
$g_t(s)$ multiplies this by $1-\omega_{\trueG}(s)$ which by \cref{eq:omega} is between $0$ and $1$, and satisfies $1-\omega_{\trueG}(s)=0$ if $\range^\trueG(s) \le \alpha/\sqrt{2d}$.
By \cref{lem:range-G-accurate},
$\range(s)\le \sqrt{2d} \cdot \range^\trueG(s)$, so $g_t(s)=0$ for any $s$ with $\range(s)\le \alpha$. On any other $s$, $|g_t(s)|\le 2H$. Therefore $g_t$ is $\alpha'$-admissible.
This allows us to use \cref{lem:admissible-realizability} to get that for any $h\in[H-1]$, there exist $\tilde \theta_{h+1:H}\in\B(8Hd_0\thetabound/\alpha)^{H-h}$, 
such that for any stage $t\in[h+1:H]$,
for all $(s,a)\in\cS_h\times\cA$,
\[
\abs{\bigE_{\trajectoryrand \sim \bP_{\pi, s, a}}
g_t(S_t)-\ip{\phi(s,a), \tilde\theta_t}} \le 10Hd_0\eta/\alpha \, .
\]
By combining this with \cref{eq:tilde-g-decompose}, for all $h\in[H]$ there exists $\tilde \theta = \sum_{t=h+1}^H \tilde \theta_t$ with $\tilde \theta\in\B(8H^2d_0\thetabound/\alpha)$ such that for all $(s,a)\in\cS_h\times\cA$,
\[
\abs{\bigE_{S_{h+1}\sim P(s,a)}\tilde g_{h+1}(S_{h+1})-\ip{\phi(s,a), \tilde\theta}} \le 10H^2d_0\eta/\alpha \, .
\]
Combined with \cref{eq:target-into-q-and-tilde-g} and the parameter $\psi_h(\pi)$ from \cref{ass:approximate q-pi realizability}, there exists $\theta=\tilde\theta+\psi_h(\pi)$ with $\theta\in\B(\thetabound(8H^2d_0/\alpha+1))$ such that for all $(s,a)\in\cS_h\times\cA$,
\[
\abs{\bigE_{\trajectoryrand \sim \bP_{\pi, s, a}} \bigE_{\tau \sim F_{\trueG, \trajectoryrand, h}} \left[R_{h:\tau-1} + f(S_\tau)\right] 
-\ip{\phi(s,a), \theta}} \le \eta(10H^2d_0/\alpha+1) = \modmisspec \, .
\]
To finish the proof, we define $\rho_h^\pi(f)=\theta$ for the arbitrary $h\in[H]$, $\pi$, and $f$ picked above.
\end{proof}

\newpage
\section{Results used in \cref{sec:proof of the result}} \label{sec:results used in proof}

\subsection{Bounding term $(\RN{1})$} \label{ss:bounding term 1}
    We begin by defining an alternative policy $\pi^\dag_\trueG$ as
    \begin{align*}
        \pi^\dag_\trueG(a|s) 
        = \behavepi(a | s) \omega_\trueG(s) + \pi^\star(a | s) (1 - \omega_\trueG(s)) \, .
    \end{align*}
    This policy can only be worse in value than $\pi^\star_\trueG$:
    \begin{lemma}\label{lem:pidag-worse}
        For all $s\in\cS$, 
        \[
        v^{\pi^\star_\trueG}(s) \ge v^{\pi^\dag_\trueG}(s) \, .
        \]
    \end{lemma}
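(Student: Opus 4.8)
The plan is to prove the claim by backward induction on the stage $h=\stage(s)$, exploiting the fact that $\pi^\star_\trueG$ and $\pi^\dag_\trueG$ share the \emph{same} skipping structure --- both follow $\behavepi$ with probability $\omega_\trueG(s)$ --- and differ only on the non-skip branch, where $\pi^\star_\trueG$ acts greedily with respect to its own action-value function (by \cref{eq:pi*G definition}) while $\pi^\dag_\trueG$ defers to $\pi^\star$. Acting greedily with respect to one's own value is exactly the self-consistency condition characterizing the optimal policy within this constrained class, so $\pi^\star_\trueG$ should dominate $\pi^\dag_\trueG$ in value everywhere.

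The base case is $h=H+1$: for $s\in\cS_{H+1}=\{\termstate\}$ both values vanish by \cref{eq:v and q at term state}, so the inequality holds with equality. For the inductive step, assume $v^{\pi^\star_\trueG}(s')\ge v^{\pi^\dag_\trueG}(s')$ for all $s'\in\cS_{h+1}$. Using the standard one-step decomposition of the action-value into mean reward plus expected next-state value,
\[
q^\pi(s,a)=\bigE_{R\sim\cR(s,a)}[R]+\bigE_{S'\sim P(s,a)}v^\pi(S')\,,
\]
and noting that every transition out of $s\in\cS_h$ lands in $\cS_{h+1}$, the induction hypothesis together with the monotonicity of $q^\pi(s,a)$ in the next-state value yields the pointwise bound $q^{\pi^\star_\trueG}(s,a)\ge q^{\pi^\dag_\trueG}(s,a)$ for every $a\in\cA$.

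I would then expand $v^{\pi^\star_\trueG}(s)$ via \cref{eq:pi*G definition},
\[
v^{\pi^\star_\trueG}(s)=\omega_\trueG(s)\sum_{a\in\cA}\behavepi(a|s)\,q^{\pi^\star_\trueG}(s,a)+(1-\omega_\trueG(s))\max_{a\in\cA}q^{\pi^\star_\trueG}(s,a)\,,
\]
and bound the two branches separately against the matching branches of $v^{\pi^\dag_\trueG}(s)$. On the skip branch I apply $q^{\pi^\star_\trueG}\ge q^{\pi^\dag_\trueG}$ pointwise. On the non-skip branch I chain $\max_{a\in\cA}q^{\pi^\star_\trueG}(s,a)\ge\sum_{a\in\cA}\pi^\star(a|s)q^{\pi^\star_\trueG}(s,a)\ge\sum_{a\in\cA}\pi^\star(a|s)q^{\pi^\dag_\trueG}(s,a)$, where the first inequality is that a maximum dominates any average and the second is again the pointwise bound. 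Summing the two branch bounds reproduces exactly the expansion of $v^{\pi^\dag_\trueG}(s)$, which closes the induction.

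The argument contains no genuinely hard step; it is a pointwise comparison propagated backward through the stages. The only subtlety to be careful about is the order of the two inequalities on the non-skip branch: the greedy term of $\pi^\star_\trueG$ must first be compared against the $\pi^\star$-weighted average of $q^{\pi^\star_\trueG}$ (``max beats average'') and only afterward against $q^{\pi^\dag_\trueG}$ (induction hypothesis), since comparing against $\pi^\star$'s own value function directly would not be available. Ensuring the recursion is set up so that $q^{\pi^\star_\trueG}$ and $q^{\pi^\dag_\trueG}$ are both evaluated at next-state values already covered by the hypothesis is all that is required.
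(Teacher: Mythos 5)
Your proof is correct and takes essentially the same route as the paper's: backward induction over stages, the one-step decomposition yielding the pointwise bound $q^{\pi^\star_\trueG}(s,a) \ge q^{\pi^\dag_\trueG}(s,a)$ from the inductive hypothesis, and then a branch-wise comparison of the two policies using max-dominates-average on the non-skip branch. The only (immaterial) difference is the order of the two inequalities on that branch --- the paper first replaces $q^{\pi^\star_\trueG}$ by $q^{\pi^\dag_\trueG}$ inside the max and then invokes max-beats-average, while you do the reverse; both orders are valid, so your cautionary remark that your ordering is forced is unnecessary but harmless.
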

    \begin{proof}
        \label{proof:lem:pidag-worse}
        We prove by induction for $h=H+1,H,\dots,1$ that for all $s\in\cS_h$,
        $v^{\pi^\star_\trueG}(s) \ge v^{\pi^\dag_\trueG}(s)$. The base case of $h=H+1$ is immediately true by definition as $v$-values are $0$ on $\termstate$, regardless of the policy.
        Assuming the inductive hypothesis holds for $h+1$, we continue by proving it for $h$.
        Let $(s,a)\in\cS_h\times\cA$ be arbitrary.
        Notice that 
        \[
        q^{\pi^\star_\trueG}(s,a) - q^{\pi^\dag_\trueG}(s,a) = \bigE_{S' \sim P(s,a)} v^{\pi^\star_\trueG}(S') - v^{\pi^\dag_\trueG}(S') \ge 0 \, ,
        \]
        where the inequality is due to the inductive hypothesis.
        Next, for any $s\in\cS_h$, by the above and the definition of the policies,
        \begin{align*}
        v^{\pi^\star_\trueG}(s)
        &=
        \omega_\trueG(s) \bigE_{A\sim \behavepi(s)} q^{\pi^\star_\trueG}(s,A) + (1-\omega_\trueG(s)) \max_{a\in\cA} q^{\pi^\star_\trueG}(s,a) \\
        &\ge \omega_\trueG(s) \bigE_{A\sim \behavepi(s)} q^{\pi^\dag_\trueG}(s,A) + (1-\omega_\trueG(s)) \max_{a\in\cA} q^{\pi^\dag_\trueG}(s,a) \\
        &\ge \omega_\trueG(s) \bigE_{A\sim \behavepi(s)} q^{\pi^\dag_\trueG}(s,A) + (1-\omega_\trueG(s)) \bigE_{A\sim \pi^\star_\trueG(s)} q^{\pi^\dag_\trueG}(s,A) 
        = 
        v^{\pi^\dag_\trueG}(s)\,,
        \end{align*}
        finishing the induction.
    \end{proof}
    Due to \cref{lem:pidag-worse}, $v^\star(\startstate) - v^{\pi^\star_\trueG}(\startstate) \le v^\star(\startstate) - v^{\pi^\dag_\trueG}(\startstate)$. We continue by bounding $v^\star(\startstate) - v^{\pi^\dag_\trueG}(\startstate)$.    
    We first decompose it using the performance difference lemma (\cref{lem:performance difference lemma}) to get that
    \begin{align}
        v^\star(\startstate) - v^{\pi^\dag_\trueG}(\startstate)
        = \sum_{h=2}^H \bigE_{(S_h, A_h) \sim \bPmarg{h}_{\pi^\star, \startstate}} \left(q^{\pi^\dag_\trueG}(S_h, A_h) - v^{\pi^\dag_\trueG}(S)\right) \, ,
        \label{eq:pdl decomposition}
    \end{align}
    where we only have the sum from $h=2$ to $H$, since for $h=1$ and $h=H+1$, for any $s\in\cS_h$, $\omega_\trueG(s)=0$ and therefore $\pi^\dag_\trueG(s)=\pi^\star(s)$.
    By the definition of $\omega_\trueG$ (\cref{eq:omega}), we can see that for $s \in \cS, \omega_\trueG(s) \neq 0$ only when $\range^\trueG(s) \le 2\alpha/\sqrt{2d}$ 
    Thus, the policies $\pi^\star(s)$ and $\pi^\dag_\trueG(s)$ are equal for all $s \in \cS$ that satisfy $\range^\trueG(s) \ge 2\alpha/\sqrt{2d}$.
    Making use of this result in \cref{eq:pdl decomposition},
    we get that
    \begin{align*}
        &\sum_{h=2}^H \bigE_{(S_h, A_h) \sim \bPmarg{h}_{\pi^\star, \startstate}} \left(q^{\pi^\dag_\trueG}(S_h, A_h) - v^{\pi^\dag_\trueG}(S)\right) \\
        &= \sum_{h=2}^H \bigE_{(S_h, A_h) \sim \bPmarg{h}_{\pi^\star, \startstate}} \left[\I{\range^\trueG(S_h) \le 2\alpha/\sqrt{2d}} \left(q^{\pi^\dag_\trueG}(S_h, A_h) - v^{\pi^\dag_\trueG}(S_h)\right)\right] \, .
    \end{align*}
    By \cref{eq:vstar-vs-range}, we know that
    \begin{align*}
        q^{\pi^\dag_\trueG}(s, a) - v^{\pi^\dag_\trueG}(s) 
        \le \range(s) + 2\eta \, .
    \end{align*}
    Then, we can use \cref{lem:range-G-accurate} to get that for all $h\in[2:H]$ and $(s, a) \in \cS_h \times \cA$ 
    \begin{align*}
        q^{\pi^\dag_\trueG}(s, a) - v^{\pi^\dag_\trueG}(s) 
        \le \sqrt{2d} \cdot \range^\trueG(s) + 2\eta \, .
    \end{align*}
    Putting things together we get the following bound.
    \begin{align*}
        (\RN{1})
        &\le v^\star(\startstate) - v^{\pi^\star_\trueG}(\startstate) \\
        &\le v^\star(\startstate) - v^{\pi^\dag_\trueG}(\startstate) \\
        &= \sum_{h=2}^H \bigE_{(S_h, A_h) \sim \bPmarg{h}_{\pi^\star, \startstate}} \left[\I{\range^\trueG(S) \le 2\alpha/\sqrt{2d}} \left(q^{\pi^\dag_\trueG}(S, A) - v^{\pi^\dag_\trueG}(S)\right)\right] \nonumber \\
        &\le H(2 \alpha + 2 \eta) \, .
    \end{align*}

\subsection{Bounding term $(\RN{2})$} \label{ss:bounding term 2}
    To bound $v^{\pi^\star_\trueG}(\startstate) - \bar v_{\opttheta{1}}(\startstate)$ we decompose it into the following error terms
    \begin{align}
        v^{\pi^\star_\trueG}(\startstate) - \bar v_{\opttheta{1}}(\startstate)
        &= v^{\pi^\star_\trueG}(\startstate) - \bar v_{\psi_1(\pi^\star_\trueG)}(\startstate) 
            + \bar v_{\psi_1(\pi^\star_\trueG)}(\startstate) - \bar v_{\opttheta{1}}(\startstate) \nonumber \\
        &\le \bar v_{\psi_1(\pi^\star_\trueG)}(\startstate) - \bar v_{\opttheta{1}}(\startstate)
            + \eta \, ,
        \label{eq:term 2 decomposition}
    \end{align}
    where the inequality holds since we have approximate linear $q^\pi$-realizability (\cref{ass:approximate q-pi realizability}), which implies that 
    \begin{align*}
        v^{\pi^\star_\trueG}(\startstate) - \bar v_{\psi_1(\pi^\star_\trueG)}(\startstate)
        \le \max_{a \in \cA} \left(q^{\pi^\star_\trueG}(\startstate, a) - \bar q_{\psi_1(\pi^\star_\trueG)}(\startstate, a)\right)
        \le \eta \, .
    \end{align*}
    To help us bound $\bar v_{\psi_1(\pi^\star_\trueG)}(\startstate) - \bar v_{\opttheta{1}}(\startstate)$ in \cref{eq:term 2 decomposition} we make use of two lemmas.
    The first is the following (proof in \cref{proof:lem:pi*G in Theta}).
    \begin{lemma} \label{lem:pi*G in Theta}
        There is an event $\cE_1$, which occurs with probability at least $1- \delta/3$, such that under $\cE_1$, for all $G \in \bbG$ and $h \in [H+1]$, it holds that $\psi_h(\pi^\star_G) \in \Theta_{G, h}$.
    \end{lemma}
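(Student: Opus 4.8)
The plan is to prove the claim by backwards induction on $h=H+1,H,\dots,1$, carried out simultaneously for every $G\in\bbG$ on a single high-probability event $\cE_1$. The base case $h=H+1$ is immediate: by \cref{eq:Theta definition} we have $\Theta_{G,H+1}=\{\vec 0\}$, and by \cref{ass:approximate q-pi realizability} $\psi_{H+1}(\cdot)=\vec 0$, so $\psi_{H+1}(\pi^\star_G)\in\Theta_{G,H+1}$ for all $G$. For the inductive step, fix $h\in[H]$ and assume $\psi_t(\pi^\star_G)\in\Theta_{G,t}$ for all $t\in[h+1:H+1]$ and all $G$. Then the sequence $\psi_{h+1:H+1}(\pi^\star_G)$ lies in $\bigtimes_{u=h+1}^{H+1}\Theta_{G,u}$, so the least-squares predictor it induces,
\[
\hat\theta_h^G=X_h^{-1}\sum_{j\in[\nh]}\phi_h^j\,\bigE_{\tau\sim F^j_{G,h+1}}\!\left[r^j_{h:\tau-1}+\bar v_{\psi_{h+1:H+1}(\pi^\star_G)}(s^j_\tau)\right],
\]
is a genuine element of $\hat\Theta_{G,h}$ (\cref{eq:hat-Theta defintion}). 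Since $\psi_h(\pi^\star_G)\in\cB(\thetabound)\subseteq\cB(\tilde L_2)$, it then suffices, by \cref{eq:Theta definition}, to show $\snorm{\psi_h(\pi^\star_G)-\hat\theta_h^G}_{X_h}\le\beta$.

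The first key step is a realizability identity. Unrolling the Bellman recursion for $q^{\pi^\star_G}$ and using the mixture form of $\pi^\star_G$ (\cref{eq:pi*G definition}), which follows $\behavepi$ with probability $\omega_G$ and acts greedily with probability $1-\omega_G$, one checks that the value recursion splits at each state exactly as the skipping law $F_{G}$ stops (with the non-skip probability $1-\omega_G$). This yields
\[
q^{\pi^\star_G}(s,a)=\bigE_{\trajectoryrand\sim\bP_{\behavepi,s,a}}\bigE_{\tau\sim F_{G,\trajectoryrand,h+1}}\!\left[R_{h:\tau-1}+\max_{a'\in\cA}q^{\pi^\star_G}(S_\tau,a')\right].
\]
Replacing $\max_{a'}q^{\pi^\star_G}(S_\tau,\cdot)$ by $\bar v_{\psi_{h+1:H+1}(\pi^\star_G)}(S_\tau)$ costs at most $\eta$ by \cref{ass:approximate q-pi realizability} (clipping to $[0,H]$ only helps, as $q^{\pi^\star_G}\in[0,H]$), and approximating $q^{\pi^\star_G}(s,a)$ by $\ip{\phi(s,a),\psi_h(\pi^\star_G)}$ costs another $\eta$. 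Hence the conditional mean of the target at any $(s_h^j,a_h^j)=(s,a)$ is within $2\eta$ of $\ip{\phi(s,a),\psi_h(\pi^\star_G)}$.

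With realizability in hand, I would bound the $X_h$-norm by writing
\[
\hat\theta_h^G-\psi_h(\pi^\star_G)=X_h^{-1}\sum_{j}\phi_h^j\,\xi^j+X_h^{-1}\sum_{j}\phi_h^j\,b^j-\lambda X_h^{-1}\psi_h(\pi^\star_G),
\]
where $\xi^j$ is conditionally mean-zero noise bounded by $\cO(H)$ and $|b^j|\le 2\eta$ is the realizability bias. The noise term $\snorm{\sum_j\phi_h^j\xi^j}_{X_h^{-1}}$ is controlled by a self-normalized tail inequality; the bias term satisfies $\snorm{\sum_j\phi_h^j b^j}_{X_h^{-1}}\le 2\eta\sum_j\snorm{\phi_h^j}_{X_h^{-1}}\le 2\eta\sqrt{\nh\, d}$ via the standard bound $\sum_j\snorm{\phi_h^j}_{X_h^{-1}}^2\le d$; and the regularization term is at most $\sqrt\lambda\,\thetabound$. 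By the choice of $\beta$ in \cref{def:beta}, these sum to at most $\beta$, closing the induction.

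The main obstacle is making the noise bound hold uniformly over the continuum of guesses $G\in\bbG$ and over all value-parameter sequences $\theta_{h+1:H+1}\in\cB(\tilde L_2)^{[h+1:H+1]}$, on which the target depends through both the skipping law $F^j_{G,h+1}$ and the bootstrap $\bar v_{\theta_{h+1:H+1}}$. I would union-bound the self-normalized inequality over a net $C_\xi^\bbG$ of $\bbG$ (\cref{lem:G cover useful results}) together with a net of $\cB(\tilde L_2)^{[h+1:H+1]}$, and then transfer to arbitrary $(G,\theta_{h+1:H+1})$ by Lipschitz continuity of the target. This is precisely where the smooth interpolation built into $\omega_G$ (\cref{eq:omega}) is essential: it makes $F^j_{G,h+1}$ Lipschitz in $G$, so the discretization error is controllable with the covering sizes and radii calibrated in \cref{sec:parameter settings}. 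Taking $\cE_1$ to be the intersection over all stages $h\in[H]$ and all net points of these self-normalized bounds gives an event of probability at least $1-\delta/3$ on which the induction goes through for every $G$ and $h$.
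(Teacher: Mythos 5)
Your proposal is correct and follows essentially the same route as the paper: backwards induction with the least-squares predictor built from $\psi_{h+1:H+1}(\pi^\star_G)$, the key stopping-time identity $q^{\pi^\star_G}(s,a)=\bigE_{\trajectoryrand\sim\bP_{\behavepi,s,a}}\bigE_{\tau\sim F_{G,\trajectoryrand,h+1}}[R_{h:\tau-1}+\max_{a'}q^{\pi^\star_G}(S_\tau,a')]$ giving a $2\eta$ bias bound, a noise/bias/regularization decomposition (the paper's \cref{lem:least squared error decomposition}), and a self-normalized bound made uniform over $G$ and the bootstrap parameters via covering plus the Lipschitzness of $\omega_G$ (the paper's \cref{lem:least squares noise cover bound}). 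The only cosmetic difference is your bias bound $2\eta\sqrt{\nh d}$ via Cauchy--Schwarz and the trace identity, where the paper gets $2\eta\sqrt{\nh}$ from its projection bound (\cref{lem:projection bound}); the extra $\sqrt{d}$ is harmless given the parameter settings.
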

    \cref{lem:pi*G in Theta} tells us that under event $\cE_1$, $\psi_1(\pi^\star_\trueG) \in \Theta_{\trueG, 1}$.
    The second lemma is the following (proof in \cref{proof:lem:true guess is feasible}). 
    \begin{lemma}[Feasibility] \label{lem:true guess is feasible}
        There is an event $\cE_1 \cap \cE_2 \cap \cE_3$, which occurs with probability at least $1 - \delta$, such that under event $\cE_1 \cap \cE_2 \cap \cE_3$, the true guess $\trueG$ is a feasible solution to \cref{opt:bpse start state}.
    \end{lemma}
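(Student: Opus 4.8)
The plan is to first strip \cref{opt:bpse start state} down to its only genuine constraint. The equalities in \cref{def:covariance matrix,eq:hat-Theta defintion,eq:Theta definition} merely \emph{define} $X_h$, $\hat\Theta_{G,h}$ and $\Theta_{G,h}$ from $G$, so $\trueG$ is feasible exactly when the width inequality \cref{eq:opt prob cond} holds with $G=\trueG$ for every $h\in[H]$. Everything therefore reduces to exhibiting a concentration event $\cE_3$ with $\Pr(\cE_3)\ge 1-\delta/3$ on which, for all $h\in[H]$,
\[
W_h:=\frac1\nh\sum_{j\in[\nh]}\Bigl(\max_{\theta\in\Theta_{\trueG,h}}\bar q_\theta(s_h^j,a_h^j)-\min_{\theta\in\Theta_{\trueG,h}}\bar q_\theta(s_h^j,a_h^j)\Bigr)\le\bar\eps ;
\]
the event $\cE_1\cap\cE_2\cap\cE_3$ of the statement then has probability $\ge 1-\delta$ by a union bound, with $\cE_1$ (from \cref{lem:pi*G in Theta}) supplying the anchoring fact $\psi_h(\pi^\star_\trueG)\in\Theta_{\trueG,h}$.

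Next I would split each summand. Since $\clip_{[0,H]}$ is $1$-Lipschitz and $\Theta_{\trueG,h}$ is the union of the $X_h$-balls of radius $\beta$ around the points of $\hat\Theta_{\trueG,h}$,
\[
\max_{\theta\in\Theta_{\trueG,h}}\bar q_\theta(s_h^j,a_h^j)-\min_{\theta\in\Theta_{\trueG,h}}\bar q_\theta(s_h^j,a_h^j)\le\max_{\hat\theta,\hat\theta'\in\hat\Theta_{\trueG,h}}\ip{\phi_h^j,\hat\theta-\hat\theta'}+2\beta\snorm{\phi_h^j}_{X_h^{-1}} .
\]
The ball term is handled deterministically by an elliptical-potential bound: $\sum_j\snorm{\phi_h^j}_{X_h^{-1}}^2=\mathrm{tr}(X_h^{-1}\sum_j\phi_h^j(\phi_h^j)^\top)\le d$, so by Cauchy--Schwarz $\tfrac{2\beta}\nh\sum_j\snorm{\phi_h^j}_{X_h^{-1}}\le 2\beta\sqrt{d/\nh}=\tilde\cO(H^{3/2}d^{3/2}/\sqrt\nh)$, comfortably below $\bar\eps$. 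It remains to bound the \emph{predictor-spread} term.

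The predictor spread is where realizability under the true guess is used. Two points of $\hat\Theta_{\trueG,h}$ come from two sequences $\theta_{h+1:H+1},\theta'_{h+1:H+1}\in\bigtimes_{u>h}\Theta_{\trueG,u}$ with least-squares targets $y_j,y_j'$, and since the reward parts of $y_j$ and $y_j'$ use the same trajectory sample they cancel, leaving $y_j-y_j'=\bigE_{\tau\sim F^j_{\trueG,h+1}}[\bar v_{\theta_{h+1:H+1}}(s_\tau^j)-\bar v_{\theta'_{h+1:H+1}}(s_\tau^j)]$. By \cref{lem:approx linear MDP} applied with $\pi=\behavepi$ and $f=\bar v_{\theta_{h+1:H+1}}$, the conditional mean of each target given $(s_h^j,a_h^j)$ is linear in $\phi_h^j$ up to $\modmisspec$; feeding this into a uniform least-squares deviation inequality — the content of $\cE_3$ — lets me replace the empirical spread $\tfrac1\nh\sum_j\ip{\phi_h^j,\hat\theta-\hat\theta'}$ by the population quantity $\tfrac1\nh\sum_j\bigE_{\trajectoryrand\sim\bP_{\behavepi,s_h^j,a_h^j}}\bigE_{\tau}[\bar v_{\theta_{h+1:H+1}}(S_\tau)-\bar v_{\theta'_{h+1:H+1}}(S_\tau)]$ up to lower-order misspecification and statistical error. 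Crucially this inequality must hold \emph{simultaneously} over all $G\in\bbG$ and all parameter choices drawn from the data-dependent sets $\Theta_{\trueG,u}$, which I would obtain by covering $\bbG$ (cardinality $|C_\xi^\bbG|$) together with the confidence sets, and controlling via $\modxi$ how a perturbation of the guess and of the $\theta_u$'s perturbs the targets, thereby reducing the continuum to a finite net inside the union bound.

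Finally I would bound the population spread. Because the data are full trajectories (\cref{ass:batch data}), each stopping state $S_\tau$ is an honest trajectory state at some stage $t>h$, and $\bar v_{\theta_{h+1:H+1}}(S_\tau)-\bar v_{\theta'_{h+1:H+1}}(S_\tau)=\bar v_{\theta_t}(S_\tau)-\bar v_{\theta'_t}(S_\tau)$ depends only on the single-stage parameters $\theta_t,\theta'_t\in\Theta_{\trueG,t}$, so it is at most $\max_a\bigl(\max_{\theta\in\Theta_{\trueG,t}}\bar q_\theta(S_\tau,a)-\min_{\theta\in\Theta_{\trueG,t}}\bar q_\theta(S_\tau,a)\bigr)$. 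Passing from this max-over-actions width under the occupancy of $\behavepi$ back to a data ($\mu_t$-weighted) average is exactly where \cref{ass:concentrability} is spent, costing a single factor of $\conc$; combined with the stage-$t$ tightness established inductively — going backwards from the base case $h=H$, where $\Theta_{\trueG,H+1}=\{\vec0\}$ forces $\hat\Theta_{\trueG,H}$ to be a singleton and the spread to vanish — this closes the estimate at order $\bar\eps=\tilde\cO(\conc H^{5/2}d^2/\sqrt\nh)$. I expect the main obstacle to be precisely this last step: making the uniform concentration rigorous across the continuum of guesses and confidence-set parameters, and then tracking the errors through the backward recursion so that concentrability is charged only \emph{once} rather than once per stage (which would blow the bound up to $\conc^{H}$) while the $H$-th-power perturbation factor in $\modxi$ is absorbed into the logarithm of the covering number.
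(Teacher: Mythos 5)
Your reduction of feasibility to verifying \cref{eq:opt prob cond} for $G=\trueG$ is correct, your deterministic handling of the ellipsoid-radius term (via $\sum_j\norm{\phi_h^j}_{X_h^{-1}}^2\le d$ and Cauchy--Schwarz) is fine, and your covering-based uniformity over $\bbG$ and the parameter sequences matches the spirit of \cref{lem:opt prob cond concentrates,lem:least squares noise cover bound}. The genuine gap is exactly the obstacle you name at the end and do not resolve: your argument recurses on \emph{expected widths} of the confidence sets, bounding the stage-$h$ predictor spread by a change of measure (costing one factor of $\conc$) times the stage-$t$ widths for $t>h$. That recursion compounds multiplicatively, $w_h \lesssim \mathrm{err} + \conc\cdot\max_{t>h}w_t$, and nothing in your sketch prevents the resulting $\conc^{H}$ blowup; hoping that concentrability can be "charged only once" is not a proof step, and within a width-on-width recursion there is no mechanism to make it so.

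The paper's proof avoids this recursion entirely, and this is its key idea (\cref{lem:sum of elliptical terms bound}). Instead of comparing confidence-set widths across stages, it proves a \emph{pointwise} backward induction comparing every $\theta_h\in\Theta_{\trueG,h}$ to the single fixed reference $q^{\pi^\star_\trueG}$: for all $(s,a)$, $\abs{\bar q_{\theta_h}(s,a)-q^{\pi^\star_\trueG}(s,a)}\le 2\beta\, g^{\bar\pi}(s,a)+(H-h+1)\modmisspec$, where $g^{\bar\pi}$ is the value function of one fixed auxiliary policy $\bar\pi$ in an MDP whose rewards are the elliptical potentials $\min\{1,\norm{\phi(\cdot,\cdot)}_{X_t^{-1}}\}$. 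Two structural facts make the induction close without any change of measure: the term $\bigE_\tau[R_{h:\tau-1}+\max_a q^{\pi^\star_\trueG}(S_\tau,a)]$ telescopes \emph{exactly} to $q^{\pi^\star_\trueG}(s,a)$ (since $\pi^\star_\trueG$ follows $\behavepi$ while skipping and acts greedily at the stopping state), and the inductive error at the stopping state is absorbed into $g^{\bar\pi}(s,a)$ because $\bar\pi$ is defined to act greedily with respect to $g^{\bar\pi}$ there. The width in \cref{eq:opt prob cond} is then bounded by a triangle inequality through $v^{\pi^\star_\trueG}$, giving $2\beta\,\bigE_{\mu_h}g^{\bar\pi}+2(H-h+1)\modmisspec$, and only at this final step is concentrability applied -- once per stage, additively, via the admissible distributions induced by ``follow $\behavepi$ to stage $h$, then $\bar\pi$'' (\cref{lem:conc error bound}) -- yielding $4H\conc\check\eps\beta$ after \cref{lem:expected elliptical potential bound}, i.e.\ a bound linear rather than exponential in $\conc$. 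Without this fixed-reference/auxiliary-value-function device (or an equivalent one), your proposal cannot reach $\bar\eps=\tilde\cO(\conc H^{5/2}d^2/\sqrt{n})$.
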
 
    Notice that since $(\optG, \opttheta{1:H+1})$ is the solution to \cref{opt:bpse start state}, it holds that $\bar v_{\opttheta{1}}(\startstate) \ge \bar v_{\theta}(\startstate)$ for any $\theta \in \Theta_{G, 1}$ where $G$ is a feasible solution to \cref{opt:bpse start state}.
    Thus, we get that under event $\cE_1 \cap \cE_2 \cap \cE_3$, since $\psi_1(\pi^\star_\trueG) \in \Theta_{\trueG, 1}$ (by \cref{lem:true guess is feasible}),
    and $\trueG$ is a feasible solution to \cref{opt:bpse start state} (by \cref{lem:true guess is feasible}),
    it holds that 
    \begin{align*}
        \bar v_{\psi_1(\pi^\star_\trueG)}(\startstate) - \bar v_{\opttheta{1}}(\startstate)
        \le \eta \, ,
    \end{align*}
    which together with \cref{eq:term 2 decomposition}, implies that
    \begin{align*}
        (\RN{2})
        = v^{\pi^\star_\trueG}(\startstate) - \bar v_{\opttheta{1}}(\startstate)
        \le \eta \, .
    \end{align*}

\subsection{Proof of \cref{lem:term 4 inductive result}}

\begin{proof} \label{proof:lem:term 4 inductive result}
Recall that $(\optG, \opttheta{1:H+1})$ is the solution to \cref{opt:bpse start state}.
    We aim to show that under event $\cE_1 \cap \cE_2$, for any $h \in [2:H+1]$, it holds that
        \begin{align}
            \bigE_{\trajectoryrand \sim \bP_{\optpi, \startstate}} \left[v^{\pi^\star_\optG}(S_h) - v^{\optpi}(S_h)\right] 
            \le 2(H-h+2) (\eta + \tilde \eps) \, .
            \label{eq:term 4 inductive hypothesis}
        \end{align}       
    To prove \cref{eq:term 4 inductive hypothesis} we will use induction.
    The base case is when $h = H+1$, which trivially holds, since $v^\pi(\termstate) = r_{H+1}(s, a) = 0$ for all $\pi \in \Pi, \termstate \in \termstatespace, a \in \cA$.
    
    Now, we show the inductive step. 
    Let $h \in [2:H]$ be arbitrary.
    Assume that \cref{eq:term 4 inductive hypothesis} holds for any $t \in [h+1:H+1]$.
    We prove that \cref{eq:term 4 inductive hypothesis} also holds for $h$.
    For any $(s, a) \in (\cS \backslash \cS_1) \times \cA$, let
    \begin{align*}
        \tilde \pi_h(a|s) = 
        \begin{cases}
            \pi^\star_\optG(a|s) & \text{if } \stage(s) = h \\
           \optpi(a|s) & \text{if } \stage(s) \neq h \, .
        \end{cases}
    \end{align*}
    Then
    \begin{align}
        &\bigE_{\trajectoryrand \sim \bP_{\optpi, \startstate}} \left[v^{\pi^\star_\optG}(S_h) - v^{\optpi}(S_h)\right] \nonumber \\
        &= \bigE_{\trajectoryrand \sim \bP_{\tilde \pi_h, \startstate}} q^{\pi^\star_\optG}(S_h, A_h) - \bigE_{\trajectoryrand \sim \bP_{\optpi, \startstate}} q^{\optpi}(S_h, A_h) \nonumber \\
        &= \bigE_{\trajectoryrand \sim \bP_{\tilde \pi_h, \startstate}} \left[q^{\pi^\star_\optG}(S_h, A_h) - \bar q_{\opttheta{h}}(S_h, A_h)\right] 
            + \bigE_{\trajectoryrand \sim \bP_{\tilde \pi_h, \startstate}} \bar q_{\opttheta{h}}(S_h, A_h) - \bigE_{\trajectoryrand \sim \bP_{\optpi, \startstate}} \bar q_{\opttheta{h}}(S_h, A_h) \nonumber \\ 
        &\quad+ \bigE_{\trajectoryrand \sim \bP_{\optpi, \startstate}} \left[\bar q_{\opttheta{h}}(S_h, A_h) - q^{\pi^\star_\optG}(S_h, A_h)\right] 
            + \bigE_{\trajectoryrand \sim \bP_{\optpi, \startstate}} \left[q^{\pi^\star_\optG}(S_h, A_h) - q^{\optpi}(S_h, A_h)\right] \nonumber \\
        &\le \bigE_{\trajectoryrand \sim \bP_{\tilde \pi_h, \startstate}} \left[\bar q_{\psi_h(\pi^\star_\optG)}(S_h, A_h) - \bar q_{\opttheta{h}}(S_h, A_h)\right] 
            + \bigE_{\trajectoryrand \sim \bP_{\tilde \pi_h, \startstate}} \bar q_{\opttheta{h}}(S_h, A_h) - \bigE_{\trajectoryrand \sim \bP_{\optpi, \startstate}} \bar q_{\opttheta{h}}(S_h, A_h) \nonumber \\ 
        &\quad+ \bigE_{\trajectoryrand \sim \bP_{\optpi, \startstate}} \left[\bar q_{\opttheta{h}}(S_h, A_h) - \bar q_{\psi_h(\pi^\star_\optG)}(S_h, A_h)\right] 
            + \bigE_{\trajectoryrand \sim \bP_{\optpi, \startstate}} \left[q^{\pi^\star_\optG}(S_h, A_h) - q^{\optpi}(S_h, A_h)\right]
            + 2\eta \, ,
            \label{eq:term 4 induction decomposition}
    \end{align}
    where the inequality holds since we have approximate linear $q^\pi$-realizability (\cref{ass:approximate q-pi realizability}).
    To bound the first and third error terms above notice that under event $\cE_1$, by \cref{lem:pi*G in Theta}, we know that $\psi_h(\pi^\star_\optG) \in \Theta_{\optG, h}$.
    We also know that $\opttheta{h} \in \Theta_{\optG, h}$, by definition.
    Let $\tilde \nu_u(s_u, a_u) = \bPmarg{u}_{\tilde \pi_h, \startstate}(s_u, a_u)$ and $\nu^\prime_u(s_u, a_u) = \bPmarg{u}_{\optpi, \startstate}(s_u, a_u)$ for all $u \in [H], (s_u, a_u) \in \cS_u \times \cA$. 
    Clearly, $\tilde \nu = (\tilde \nu_u)_{u \in [H]}$ and $\nu^\prime = (\nu^\prime_u)_{u \in [H]}$ are admissible distributions, by \cref{def:admissible dist}.
    Notice that we have satisfied all the conditions to make use of \cref{lem:Theta elements are close}.
    Thus, under event $\cE_1 \cap \cE_2$ it holds that
    \begin{align*}
        \bigE_{\trajectoryrand \sim \bP_{\tilde \pi_h, \startstate}} \left[\bar q_{\psi_h(\pi^\star_\optG)}(S_h, A_h) - \bar q_{\opttheta{h}}(S_h, A_h)\right]
        &= \bigE_{(S_h, A_h) \sim \tilde \nu_h} \left[\bar q_{\psi_h(\pi^\star_\optG)}(S_h, A_h) - \bar q_{\opttheta{h}}(S_h, A_h)\right] 
        \le \tilde \eps \, ,
    \end{align*}
    and
    \begin{align*}
        \bigE_{\trajectoryrand \sim \bP_{\optpi, \startstate}} \left[\bar q_{\opttheta{h}}(S_h, A_h) - \bar q_{\psi_h(\pi^\star_\optG)}(S_h, A_h)\right] 
        &= \bigE_{(S_h, A_h) \sim \nu^\prime_h} \left[\bar q_{\opttheta{h}}(S_h, A_h) - \bar q_{\psi_h(\pi^\star_\optG)}(S_h, A_h)\right]  
        \le \tilde \eps \, .
    \end{align*}
    The term $\bigE_{\trajectoryrand \sim \bP_{\tilde \pi_h, \startstate}} \bar q_{\opttheta{h}}(S_h, A_h) - \bigE_{\trajectoryrand \sim \bP_{\optpi, \startstate}} \bar q_{\opttheta{h}}(S_h, A_h)$ in \cref{eq:term 4 induction decomposition} can be bounded by recalling the definition of $\optpi(s)$ (\cref{eq:opttheta and optpi defn}), to get that
    \begin{align*}
        \bigE_{\trajectoryrand \sim \bP_{\tilde \pi_h, \startstate}} \bar q_{\opttheta{h}}(S_h, A_h) 
        \le \max_{a \in \cA} \bar q_{\opttheta{h}}(S_h, a) 
        = \bigE_{\trajectoryrand \sim \bP_{\optpi, \startstate}} \bar q_{\opttheta{h}}(S_h, A_h) \, .
    \end{align*}
    Under event $\cE_1 \cap \cE_2$, after plugging the above bounds into \cref{eq:term 4 induction decomposition}, we have that
    \begin{align*}
        \bigE_{\trajectoryrand \sim \bP_{\optpi, \startstate}} \left[v^{\pi^\star_\optG}(S_h) - v^{\optpi}(S_h)\right] 
        &\le \bigE_{\trajectoryrand \sim \bP_{\optpi, \startstate}} \left[q^{\pi^\star_\optG}(S_h, A_h) - q^{\optpi}(S_h, A_h)\right] + 2\eta + 2\tilde \eps  \\
        &= \bigE_{\trajectoryrand \sim \bP_{\optpi, \startstate}} \left[v^{\pi^\star_\optG}(S_{h+1}) - v^{\optpi}(S_{h+1})\right] + 2\eta + 2\tilde \eps \\
        &\le 2(H-h+2) (\eta + \tilde \eps) \, ,
    \end{align*}
    where the last inequality holds by the inductive hypothesis for $h+1$ (\cref{eq:term 4 inductive hypothesis}), completing the proof of the claim.
\end{proof}

\newpage
\section{Proof of \cref{lem:pi*G in Theta}} 

\begin{proof} \label{proof:lem:pi*G in Theta}
    
    We will prove the claim using induction.
    The base case is when $h = H+1$, for which $\psi_{H+1}(\pi) = \vec 0$ for all $\pi \in \Pi$ by definition. %
    Thus, for all $G \in \bbG$, it holds that $\psi_{H+1}(\pi^\star_G) \in \Theta_{G, H+1} = \{\vec 0\}$.

    Now, we show the inductive step.
    Let $h \in [H]$ be arbitrary.
    Assume \cref{lem:pi*G in Theta} holds for any $t \in [h+1:H+1]$.
    We prove that it also holds for $h$.
    Define 
    \begin{align*}
        \hat \psi_h(\pi^\star_G)
        = X_h^{-1} \sum_{j \in [\nh]} \phi^j_h \bigE_{\tau \sim F^j_{G, h+1}}\left[r^j_{h:\tau-1} + \bar v_{\psi_{h+1:H+1}(\pi^\star_G)} \left(s^j_{\tau}\right)\right] \, .
    \end{align*}
    By the inductive hypothesis we know that for any $G \in \bbG$
    \begin{align*}
        \psi_{h+1:H+1}(\pi^\star_G) 
        \in \Theta_{G, h+1} \times \dots \times \Theta_{G, H+1}
    \end{align*}
    Thus, $\hat \psi_h(\pi^\star_G) \in \hat \Theta_{G, h}$.
    It is left to show that $\norm{\hat \psi_h(\pi^\star_G) - \psi_h(\pi^\star_G)}_{X_h} \le \beta$, which together with the fact that $\psi_h(\pi^\star_G) \in \cB(\thetabound)$, implies the desired result, $\psi_h(\pi^\star_G) \in \Theta_{G, h}$. 

    We would like to make use of \cref{lem:least squared error decomposition} to bound $\norm{\hat \psi_h(\pi^\star_G) - \psi_h(\pi^\star_G)}_{X_h}$.
    To do so, we map the terms used in the \cref{lem:least squared error decomposition} to our terms as follows
    \begin{align*}
        &n = \nh, \lambda = \lambda, \theta_{\star} = \psi_h(\pi^\star_G), V = X_h, \hat \theta = \hat \psi_h(\pi^\star_G), A = \left(\phi_h^j\right)_{j \in [\nh]} \, , \\
        &Y = \tilde Y + \Delta = \left(\ip{\phi_h^j, \psi_h(\pi^\star_G)}\right)_{j \in [\nh]} + \gamma + \Delta = \left(\bigE_{\tau \sim F^j_{G, h+1}}\left[r^j_{h:\tau-1} + \bar v_{\psi_{h+1:H+1}(\pi^\star_G)} \left(s^j_{\tau}\right)\right]\right)_{j \in [\nh]} \, , \\
        &\gamma = \left(\bigE_{\tau \sim F^j_{G, h+1}}\left[r^j_{h:\tau-1} + \bar v_{\psi_{h+1:H+1}(\pi^\star_G)} \left(s^j_{\tau}\right)\right] 
            -  \bigE_{\trajectoryrand \sim \bP_{\behavepi, s_h^j, a_h^j}} \bigE_{\tau \sim F_{G, \trajectoryrand, h+1}} \left[R_{h:\tau-1} + \bar v_{\psi_{h+1:H+1}(\pi^\star_G)} (S_\tau)\right]\right)_{j \in [\nh]} \, , \\
        &\Delta = \left(\bigE_{\trajectoryrand \sim \bP_{\behavepi, s_h^j, a_h^j}} \bigE_{\tau \sim F_{G, \trajectoryrand, h+1}} \left[R_{h:\tau-1} + \bar v_{\psi_{h+1:H+1}(\pi^\star_G)} (S_\tau)\right] 
            - \ip{\phi_h^j, \psi_h(\pi^\star_G)}\right)_{j \in [\nh]}  \, , \\ 
        &\iota 
        = \sum_{j \in [\nh]} \phi_h^j \left(\bigE_{\tau \sim F^j_{G, h+1}}\left[r^j_{h:\tau-1} + \bar v_{\psi_{h+1:H+1}(\pi^\star_G)} \left(s^j_{\tau}\right)\right] 
            -  \bigE_{\trajectoryrand \sim \bP_{\behavepi, s_h^j, a_h^j}} \bigE_{\tau \sim F_{G, \trajectoryrand, h+1}} \left[R_{h:\tau-1} + \bar v_{\psi_{h+1:H+1}(\pi^\star_G)} (S_\tau)\right]\right) \, .
    \end{align*}       
    With the definitions as above, applying \cref{lem:least squared error decomposition} we get
    \begin{align}
        \norm{\hat \psi_h(\pi^\star_G) - \psi_h(\pi^\star_G)}_{X_h}
        &\le \sqrt{\lambda}\norm{\psi_h(\pi^\star_G)}_2 + \norm{\Delta}_\infty \sqrt{\nh} + \norm{\iota}_{X_h^{-1}} \, .
        \label{eq:least squares bound}
    \end{align}    
    The first term in \cref{eq:least squares bound} can be bounded by $H^{3/2} d$ by recalling that $\sqrt{\lambda} = H^{3/2} d/\modthetabound$ (\cref{def:lambda}) and $\norm{\psi_h(\pi^\star_G)}_2 \le \thetabound \le \modthetabound$.
    
    The $\norm{\Delta}_\infty$ in the second term in \cref{eq:least squares bound} can be bounded by first decomposing the error, and using a triangle inequality as follows. 
    \begin{align}
        \norm{\Delta}_\infty
        &\le \norm{\left(\bigE_{\trajectoryrand \sim \bP_{\behavepi, s_h^j, a_h^j}} \bigE_{\tau \sim F_{G, \trajectoryrand, h+1}} \left[\bar v_{\psi_{h+1:H+1}(\pi^\star_G)}(S_\tau) 
            - \max_{a' \in \cA}q^{\pi^\star_G}(S_\tau, a')\right]\right)_{j \in [\nh]}}_\infty \nonumber \\
        &\qquad+ \norm{\left(\bigE_{\trajectoryrand \sim \bP_{\behavepi, s_h^j, a_h^j}} \bigE_{\tau \sim F_{G, \trajectoryrand, h+1}} \left[R_{h:\tau-1} + \max_{a' \in \cA}q^{\pi^\star_G} (S_\tau, a')\right] 
            - \ip{\phi_h^j, \psi_h(\pi^\star_G)}\right)_{j \in [\nh]}}_\infty \, .
        \label{eq:Delta psi bound}
    \end{align}
    For the first term in \cref{eq:Delta psi bound}, notice that 
    \begin{align*}
        \bar v_{\psi_{h+1:H+1}(\pi^\star_G)}(S_\tau) 
            - \max_{a' \in \cA}q^{\pi^\star_G}(S_\tau, a')
        \le \max_{a' \in \cA} \left(\ip{\phi(S_\tau, a'), \psi_{\stage(S_\tau)}(\pi^\star_G)} 
            - q^{\pi^\star_G}(S_\tau, a')\right)
        \le \eta \, ,
    \end{align*}
    where the last inequality holds since we have approximate linear $q^\pi$-realizability (\cref{ass:approximate q-pi realizability}).
    To bound the second term in \cref{eq:Delta psi bound} notice that by the definition of $\pi^\star_G$ (\cref{eq:pi*G definition}), $\argmax_{a' \in \cA} q^{\pi^\star_G}(S_\tau, a')$ is exactly the action $\pi^\star_G$ would take at the stopping stage $\tau$,
    and that the distribution of $\trajectoryrand$ under policy $\pi^0$ until stopping stage $\tau$ is same as the distribution of $\trajectoryrand$ under policy $\pi^\star_G$ until stopping stage $\tau$.
    This implies that
    \begin{align*}
        &\norm{\left(\bigE_{\trajectoryrand \sim \bP_{\behavepi, s_h^j, a_h^j}} \bigE_{\tau \sim F_{G, \trajectoryrand, h+1}} \left[R_{h:\tau-1} + \max_{a' \in \cA}q^{\pi^\star_G} (S_\tau, a')\right] 
            - \ip{\phi_h^j, \psi_h(\pi^\star_G)}\right)_{j \in [\nh]}}_\infty \\
        &= \norm{\left(q^{\pi^\star_G}(s_h^j, a_h^j)
            - \ip{\phi_h^j, \psi_h(\pi^\star_G)}\right)_{j \in [\nh]}}_\infty     
        \le \eta \, ,
    \end{align*}
    where the last inequality holds since we have approximate linear $q^\pi$-realizability (\cref{ass:approximate q-pi realizability}).
    Plugging the above bounds into \cref{eq:Delta psi bound}, we get that $\norm{\Delta}_\infty \le 2\eta$.
    
    To bound the third term in \cref{eq:least squares bound}, let the event $\cE_1$ be as defined in the proof of \cref{lem:least squares noise cover bound}, which occurs with probability at least $1 - \delta/3$.
    Then, under event $\cE_1$, the third term in \cref{eq:least squares bound} can be bounded by $\bar \beta$ (\cref{def:bar-beta}), by applying \cref{lem:least squares noise cover bound} since $\psi_{h+1:H+1}(\pi^\star_G) \in \cB(\thetabound)^{H-h+1} \subset \cB(\modthetabound)^{H-h+1}$.
    
    Plugging the three bounds above back into \cref{eq:least squares bound} we get that, under event $\cE_1$, it holds that
    \begin{align*}
        &\norm{\hat \psi_h(\pi^\star_G) - \psi_h(\pi^\star_G)}_{X_h}
        \le H^{3/2}d + 2\eta \sqrt{\nh} + \bar \beta
        \le \beta \, ,
    \end{align*}
    where $\beta$ is defined in \cref{def:beta}, and the last inequality can be seen to hold by plugging in parameter values according to \cref{sec:parameter settings}.
    Thus, under event $\cE_1$, which occurs with probability at least $1 - \delta/3$, for any $G \in \bbG$ and $h \in [H]$ it holds that $\psi_h(\pi^\star_G) \in \Theta_{G, h}$, which completes the proof.
\end{proof}

\newpage
\section{Proof of \cref{lem:Theta elements are close}} 

\begin{proof} \label{proof:lem:Theta elements are close}
    Let $G \in \bbG$ be a feasible solution to \cref{opt:bpse start state}.
    By \cref{lem:opt prob cond concentrates},
    there is an event $\cE_2$, 
    that occurs with probability at least $1 - \delta/3$, 
    such that under event $\cE_2$, 
    for all $G \in \bbG$,
    and for all $h \in [H]$, 
    it holds that 
    \begin{align*}
        &\abs{\bigE_{(S, A) \sim \mu_h} \left[\max_{\theta \in \Theta_{G, h}} \bar q_{\theta}(S, A) - \min_{\theta \in \Theta_{G, h}} \bar q_{\theta}(S, A)\right]
            - \frac{1}{\nh} \sum_{i \in [\nh]} \left(\max_{\theta \in \Theta_{G, h}} \bar q_{\theta}(s_h^i, a_h^i) - \min_{\theta \in \Theta_{G, h}} \bar q_{\theta}(s_h^i, a_h^i)\right)} \\
        &\le \frac{H}{\sqrt{\nh}} \sqrt{\log\left(\frac{6H|C_\xi^\bbG|}{\delta}\right)} 
            + 2\modxi \, . 
    \end{align*}
    where $|C_\xi^\bbG|, \modxi$, are defined in \cref{def:G cover size,def:modxi}.
    Let $h \in [H]$.
    Recall that since $G$ is a feasible solution to \cref{opt:bpse start state} we know that \cref{eq:opt prob cond} passed for $h$.
    Thus,
    \begin{align*}
        \frac{1}{\nh} \sum_{i \in [\nh]} \left(\max_{\theta \in \Theta_{G, h}} \bar q_{\theta}(s_h^i, a_h^i) - \min_{\theta \in \Theta_{G, h}} \bar q_{\theta}(s_h^i, a_h^i)\right) 
        \le \bar \eps \, . 
    \end{align*}
    Combining the above two results,
    we have that under event $\cE_2$, 
    for any $h \in [H]$, 
    it holds that
    \begin{align*}
        \bigE_{(S, A) \sim \mu_h} \left[\max_{\theta \in \Theta_{G, h}} \bar q_{\theta}(S, A) - \min_{\theta \in \Theta_{G, h}} \bar q_{\theta}(S, A)\right]
        \le \frac{H}{\sqrt{\nh}} \sqrt{\log\left(\frac{6H|C_\xi^\bbG|}{\delta}\right)} 
            + 2\modxi 
            + \bar \eps \, .
    \end{align*}
 
   Now we will relate the data collecting distribution $\mu$ to any admissible distribution $\nu$.
   Notice that by the definition of $\max$ and $\min$, for any $(s, a) \in \cS \times \cA$, it holds that 
    \begin{align*}
        \max_{\theta \in \Theta_{G, h}} \bar q_{\theta}(s, a) - \min_{\theta \in \Theta_{G, h}} \bar q_{\theta}(s, a) \ge 0 \, .
    \end{align*}
    Thus, we can apply \cref{lem:conc error bound} to get that under event $\cE_2$, 
    for all $h \in [H]$,
    and admissible distribution $\nu = (\nu_t)_{t \in [H]}$, 
    it holds that
    \begin{align*}
        \bigE_{(S, A) \sim \nu_h} \left[\max_{\theta \in \Theta_{G, h}} \bar q_{\theta}(S, A) - \min_{\theta \in \Theta_{G, h}} \bar q_{\theta}(S, A)\right]
        \le \conc \left(\frac{H}{\sqrt{\nh}} \sqrt{\log\left(\frac{6H|C_\xi^\bbG|}{\delta}\right)} 
            + 2\modxi 
            + \bar \eps\right) \, .
    \end{align*}
    To conclude, we have that under event $\cE_2$, 
    for any $G \in \bbG$ that is a feasible solution to \cref{opt:bpse start state}, 
    for any $h \in [H]$, for any
        $(\theta_{s,a})_{(s,a)\in\cS_h\times\cA}$
        and $(\check\theta_{s,a})_{(s,a)\in\cS_h\times\cA}$
        with $\theta_{s,a},\check\theta_{s,a}\in\Theta_{G, h}$ for all $(s,a)\in\cS_h\times\cA$,
    and for any admissible distribution $\nu = (\nu_t)_{t \in [H]}$, 
    it holds that
    \begin{align}
        &\bigE_{(S, A) \sim \nu_h} \left[\bar q_{\theta_{S, A}}(S, A) - \bar q_{\check \theta_{S, A}}(S, A)\right] \nonumber \\
        &\le \bigE_{(S, A) \sim \nu_h} \left[\max_{\theta \in \Theta_{G, h}} \bar q_{\theta}(S, A) - \min_{\theta \in \Theta_{G, h}} \bar q_{\theta}(S, A)\right] \nonumber \\
        &\le \conc \left(\frac{H}{\sqrt{\nh}} \sqrt{\log\left(\frac{6H|C_\xi^\bbG|}{\delta}\right)} 
            + 2\modxi 
            + \bar \eps\right) \nonumber \\
        &\le \conc \left(\frac{H}{\sqrt{\nh}} \sqrt{\log\left(\frac{6H(1 + 2\usedtobesqrtdoneplusone/\xi))^{dHd_0}}{\delta}\right)} 
            + 24 \sqrt{2d} H^2 \featurebound \xi \alpha^{-1} \left(2\sqrt{\nh}\featurebound\modthetabound / (H^{3/2}d)\right)^{H} 
            + \bar \eps\right) \nonumber \\
        &\le \conc \left(\frac{H}{\sqrt{\nh}} \sqrt{ dH^2d_0\log\left(1 + 96\sqrt{2d}H^2\featurebound\usedtobesqrtdoneplusone\alpha^{-1}\sqrt{\nh}\featurebound\modthetabound / (H^{3/2}d)\right) + \log\left(\frac{6H}{\delta}\right)} 
            + \frac{1}{\sqrt{n}} 
            + \bar \eps\right) \nonumber \\
        &= \tilde \eps 
        = \tilde \cO\left(\frac{\conc H^2 d}{\sqrt{n}} + \frac{\conc}{\sqrt{n}} + \frac{\conc^2 H^{5/2} d^2}{\sqrt{n}}\right)
        = \tilde \cO\left(\frac{\conc^2 H^{5/2} d^2}{\sqrt{n}}\right) \, .
        \label{eq:tilde-eps definition}
    \end{align}       
    The third inequality holds by plugging in the values of $|C_\xi^\bbG|, \modxi$, as defined in \cref{def:G cover size,def:modxi}.
    The last inequality holds by setting $\xi^{-1} = 24 \sqrt{2d} \sqrt{n} H^2 \featurebound \alpha^{-1} \left(2\sqrt{\nh}\featurebound\modthetabound / (H^{3/2}d)\right)^{H} $.
    The last two equalities hold by plugging in parameter values according to \cref{sec:parameter settings}.
\end{proof}

\newpage
\section{Proof of \cref{lem:true guess is feasible}} 

\begin{proof} \label{proof:lem:true guess is feasible}
    
    To show that $\trueG$ is a feasible solution we need to show that \cref{eq:opt prob cond} is satisfied for all $h \in [H]$.

    By \cref{lem:opt prob cond concentrates},
    there is an event $\cE_2$, 
    that occurs with probability at least $1 - \delta/3$, 
    such that under event $\cE_2$, 
    for all $h \in [H]$, 
    it holds that 
    \begin{align}
        &\frac{1}{\nh} \sum_{i \in [\nh]} \left(\max_{\theta \in \Theta_{\trueG, h}} \bar q_{\theta}(s_h^i, a_h^i) - \min_{\theta \in \Theta_{\trueG, h}} \bar q_{\theta}(s_h^i, a_h^i)\right)
            - \bigE_{(S, A) \sim \mu_h} \left[\max_{\theta \in \Theta_{\trueG, h}} \bar q_{\theta}(S, A) - \min_{\theta \in \Theta_{\trueG, h}} \bar q_{\theta}(S, A)\right] \nonumber \\
        &\le \frac{H}{\sqrt{\nh}} \sqrt{\log\left(\frac{6H|C_\xi^\bbG|}{\delta}\right)} 
            + 2\modxi \, . 
        \label{eq:true guess event 2 bound}
    \end{align}
    where $|C_\xi^\bbG|, \modxi$, are defined in \cref{def:G cover size,def:modxi}.
    Let $h \in [H]$ be arbitrary for the remainder of the proof.
    We focus on bounding $\bigE_{(S, A) \sim \mu_h} \left[\max_{\theta \in \Theta_{\trueG, h}} \bar q_{\theta}(S, A) - \min_{\theta \in \Theta_{\trueG, h}} \bar q_{\theta}(S, A)\right]$ for the remainder of the proof.
    The following lemma will be helpful (proof in \cref{proof:lem:sum of elliptical terms bound}).
    \begin{lemma} \label{lem:sum of elliptical terms bound}
        There is an event $\cE_1$, that occurs with probability at least $1 - \delta/3$, such that under event $\cE_1$,
        for any $h \in [H], 
        (s, a) \in \cS_h \times \cA, 
        \theta_h \in \Theta_{\trueG, h}$, 
        it holds that
        \begin{align}
            \abs{\bar q_{\theta_h}(s, a) - q^{\pi^\star_{\trueG}}(s, a)} 
            \le 2\beta\bigE_{\trajectoryrand \sim \bP_{\bar \pi, s, a}} \sum_{t = h}^{H} \min\set{1, \norm{\phi(S_t, A_t)}_{X_t^{-1}}} + (H-h+1)\modmisspec \, ,
            \label{eq:q close inductive hypothesis}
        \end{align}   
        where, for any $(s', a') \in \cS \times \cA$,
        \begin{align}
            \bar \pi(a'|s') 
            &= \behavepi(a'|s') \omega_\trueG(s') + \I{\argmax_{a'' \in \cA} g^{\bar \pi}(s', a'') = a'} (1 - \omega_\trueG(s')) \, ,
            \label{eq:optpi elliptical}
        \end{align}
        and $g^{\bar \pi}$ is a state-action value function of policy $\bar \pi$ (similar to $q^{\bar \pi}$), 
        except in the alternative MDP that has the same state and action spaces, and transition distributions as the original MDP under consideration, but with a reward function modified as follows. 
        For all $(s', a') \in \cS \times \cA$, the reward in this alternative MDP is deterministically $\min\set{1, \norm{\phi(s', a')}_{X_h^{-1}}}$ $\braces{\text{i.e. }\cR(s', a') = \I{\min\set{1, \norm{\phi(s', a')}_{X_h^{-1}}}}}$.
        In particular for any $h' \in [H], (s', a') \in \cS_{h'} \times \cA$
        \begin{align}
            g^{\bar \pi}(s', a') = \bigE_{\trajectoryrand \sim \bP_{\bar \pi, s', a'}} \sum_{t = h'}^{H} \min\set{1, \norm{\phi(S_t, A_t)}_{X_t^{-1}}} \, .
            \label{eq:gpi definition}
        \end{align}
        The recursive definition of $\bar \pi$ can be interpreted in the same way as described below \cref{eq:pi*G definition}.
    \end{lemma}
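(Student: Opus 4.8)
The plan is to prove the bound by \emph{backward induction} on $h$, running from $h=H+1$ down to $h=1$. The base case $h=H+1$ is immediate: $q^{\pi^\star_\trueG}(\termstate,\cdot)=0$, $\Theta_{\trueG,H+1}=\{\vec 0\}$ so $\bar q_{\theta_{H+1}}(\termstate,\cdot)=0$, and the potential sum $\sum_{t=H+1}^{H}$ is empty. The event $\cE_1$ will be the least-squares concentration event of \cref{lem:least squares noise cover bound} (probability at least $1-\delta/3$), which I take to hold simultaneously across all stages and, crucially, uniformly over the continuum of target functions $f=\bar v_{\theta_{h+1:H+1}}$ arising from the nested confidence sets.

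\emph{Local (single-stage) estimate.} Fix $h\in[H]$, $(s,a)\in\cS_h\times\cA$, and $\theta_h\in\Theta_{\trueG,h}$. By \cref{eq:Theta definition} there is a predictor $\hat\theta_h\in\hat\Theta_{\trueG,h}$ with $\snorm{\theta_h-\hat\theta_h}_{X_h}\le\beta$, and by \cref{eq:hat-Theta defintion} this $\hat\theta_h$ is the least-squares solution for the target $\bigE_{\tau\sim F^j_{\trueG,h+1}}[r^j_{h:\tau-1}+\bar v_{\theta_{h+1:H+1}}(s^j_\tau)]$ for some $\theta_{h+1:H+1}\in\bigtimes_{u=h+1}^{H+1}\Theta_{\trueG,u}$. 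Writing $f=\bar v_{\theta_{h+1:H+1}}$, \cref{lem:approx linear MDP} supplies a parameter $\rho_h^\behavepi(f)$ whose prediction matches the conditional mean of this target up to $\modmisspec$, while under $\cE_1$ the concentration bound gives $\snorm{\hat\theta_h-\rho_h^\behavepi(f)}_{X_h}\le\beta$. Combining these via the triangle inequality and Cauchy--Schwarz in the $X_h$-geometry yields
\begin{align*}
\abs{\ip{\phi(s,a),\theta_h}-\bigE_{\trajectoryrand\sim\bP_{\behavepi,s,a}}\bigE_{\tau\sim F_{\trueG,\trajectoryrand,h+1}}[R_{h:\tau-1}+f(S_\tau)]}
\le 2\beta\norm{\phi(s,a)}_{X_h^{-1}}+\modmisspec \, .
\end{align*}
The structural identity that I will exploit is $q^{\pi^\star_\trueG}(s,a)=\bigE_{\trajectoryrand\sim\bP_{\behavepi,s,a}}\bigE_{\tau\sim F_{\trueG,\trajectoryrand,h+1}}[R_{h:\tau-1}+v^{\pi^\star_\trueG}(S_\tau)]$: because $\pi^\star_\trueG$ follows $\behavepi$ on every skipped state and deviates (greedily) only at the non-skipped stopping state $S_\tau$, its on-trajectory law up to $\tau$ coincides with that of $\behavepi$ under the stopping rule $F_{\trueG}$. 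Subtracting and clipping both $\bar q_{\theta_h}$ and $q^{\pi^\star_\trueG}$ to $[0,H]$ (which, since $\beta\ge H$, lets me replace $\norm{\phi(s,a)}_{X_h^{-1}}$ by $\min\set{1,\norm{\phi(s,a)}_{X_h^{-1}}}$), I obtain
\begin{align*}
\abs{\bar q_{\theta_h}(s,a)-q^{\pi^\star_\trueG}(s,a)}
\le 2\beta\min\set{1,\norm{\phi(s,a)}_{X_h^{-1}}}+\modmisspec
+\bigE_{\trajectoryrand\sim\bP_{\behavepi,s,a}}\bigE_{\tau\sim F_{\trueG,\trajectoryrand,h+1}}\abs{f(S_\tau)-v^{\pi^\star_\trueG}(S_\tau)} \, .
\end{align*}

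\emph{Closing the induction.} At the stopping state $S_\tau$ (with $\tau\ge h+1$) I bound the difference of maxima by the maximum of per-action differences and apply the inductive hypothesis at stage $\tau$ to the component $\theta_\tau\in\Theta_{\trueG,\tau}$, giving $f(S_\tau)-v^{\pi^\star_\trueG}(S_\tau)\le 2\beta\max_{a'}g^{\bar\pi}(S_\tau,a')+(H-\tau+1)\modmisspec$; the misspecification contributions then telescope to $(H-h+1)\modmisspec$ because $\tau\ge h+1$. For the potential terms I recognize the right-hand side as the skip-path Bellman decomposition of $g^{\bar\pi}$: since $\bar\pi$ follows $\behavepi$ on skipped states and acts greedily with respect to $g^{\bar\pi}$ at non-skipped states, $\max_{a'}g^{\bar\pi}(S_\tau,a')$ is exactly the bonus-value continuation from $S_\tau$, and the (nonnegative) bonuses accrued along the skipped path $h+1,\dots,\tau-1$ may simply be dropped. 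Hence $\min\set{1,\norm{\phi(s,a)}_{X_h^{-1}}}+\bigE_{\bP_{\behavepi,s,a}}\bigE_{\tau}\max_{a'}g^{\bar\pi}(S_\tau,a')\le g^{\bar\pi}(s,a)$, which turns the displayed bound into $2\beta\, g^{\bar\pi}(s,a)+(H-h+1)\modmisspec$, i.e. the claim at stage $h$.

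\emph{Main obstacle.} The two delicate points are: (i) making $\cE_1$ hold \emph{uniformly} over the continuum of target functions $f=\bar v_{\theta_{h+1:H+1}}$ generated by the nested confidence sets, which is precisely why the covering argument of \cref{lem:least squares noise cover bound} is needed rather than a single fixed-$f$ concentration inequality; and (ii) verifying that the comparator policy $\bar\pi$, its bonus value function $g^{\bar\pi}$, and the agreement of $\bar\pi$ with $\behavepi$ along skipped states together realize the skip-path decomposition that closes the recursion, so that the accumulated elliptical potentials align exactly with $g^{\bar\pi}(s,a)$. The clipping bookkeeping that converts $\norm{\phi}_{X_h^{-1}}$ into $\min\set{1,\norm{\phi}_{X_h^{-1}}\!}$ at every stage must be tracked consistently, but this is routine given $\beta\ge H$.
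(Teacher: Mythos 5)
Your proposal follows the same architecture as the paper's proof: backward induction on $h$, the split of $\ip{\phi(s,a),\theta_h}-q^{\pi^\star_\trueG}(s,a)$ into a confidence-set term, a least-squares term controlled on the covering event of \cref{lem:least squares noise cover bound}, a $\modmisspec$ term from \cref{lem:approx linear MDP}, and a continuation term at the stopping state, followed by the same closing of the recursion via $g^{\bar\pi}$. However, one step is wrong as written, and it is load-bearing. The structural identity you invoke,
\begin{align*}
q^{\pi^\star_\trueG}(s,a)=\bigE_{\trajectoryrand\sim\bP_{\behavepi,s,a}}\,\bigE_{\tau\sim F_{\trueG,\trajectoryrand,h+1}}\left[R_{h:\tau-1}+v^{\pi^\star_\trueG}(S_\tau)\right],
\end{align*}
is false. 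Unrolling the mixture policy \cref{eq:pi*G definition} against the stopping rule \cref{eq:omega}, the event $\set{\tau=t}$ is exactly the event that $\pi^\star_\trueG$ takes its greedy branch at $S_t$, so the correct continuation value at the stopping state is $\max_{a'}q^{\pi^\star_\trueG}(S_\tau,a')$ (which is what the paper carries through), not $v^{\pi^\star_\trueG}(S_\tau)$. At stopping states in the interpolation region, where $\omega_\trueG(S_\tau)\in(0,1)$, the policy $\pi^\star_\trueG$ is a genuine mixture, so
\begin{align*}
v^{\pi^\star_\trueG}(S_\tau)=\omega_\trueG(S_\tau)\bigE_{A\sim\behavepi(S_\tau)}q^{\pi^\star_\trueG}(S_\tau,A)+(1-\omega_\trueG(S_\tau))\max_{a'}q^{\pi^\star_\trueG}(S_\tau,a')
\end{align*}
falls short of $\max_{a'}q^{\pi^\star_\trueG}(S_\tau,a')$ whenever $\behavepi$ is not greedy there; the shortfall can be of order $\range(S_\tau)+2\eta=O(\alpha+\eta)$. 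This error accumulates across the levels of your induction, yielding an extra $O((H-h)(\alpha+\eta))$ term that the lemma's statement does not allow, and since $\alpha=\Theta(\eps/H)$ is far larger than $\modmisspec$ under the paper's parameter settings, it cannot be hidden in the $(H-h+1)\modmisspec$ slack (downstream, through $\bar\eps$ and $\tilde\eps$, it would wreck the final $\eps$ guarantee). Relatedly, your next step --- bounding ``the difference of maxima by the maximum of per-action differences'' --- silently treats $v^{\pi^\star_\trueG}(S_\tau)$ as a maximum of $q^{\pi^\star_\trueG}(S_\tau,\cdot)$ over actions, which it is not.

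The repair is precisely the paper's treatment in \cref{proof:lem:sum of elliptical terms bound}: keep $\max_{a'}q^{\pi^\star_\trueG}(S_\tau,a')$ in the identity, bound $\bar v_{\theta_{h+1:H+1}}(S_\tau)-\max_{a'}q^{\pi^\star_\trueG}(S_\tau,a')\le\max_{a'}\braces{\bar q_{\theta_{\stage(S_\tau)}}(S_\tau,a')-q^{\pi^\star_\trueG}(S_\tau,a')}$, and apply the inductive hypothesis to the maximizing action. With that substitution, the rest of your argument --- the $g^{\bar\pi}$ telescoping via nonnegativity of the skipped-path bonuses, the misspecification bookkeeping using $\tau\ge h+1$, the clipping step using $\beta\ge H$, and the uniformity of $\cE_1$ over the continuum of targets via the covering argument --- coincides with the paper's proof.
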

    Let $\bar \pi$ be as defined in \cref{lem:sum of elliptical terms bound}.
    Then, by \cref{lem:sum of elliptical terms bound}, under event $\cE_1$, it holds that
    \begin{align}
        &\bigE_{(S_h, A_h) \sim \mu_h} \left[\max_{\theta \in \Theta_{\trueG, h}} \bar q_\theta(S_h, A_h) - \min_{\theta \in \Theta_{\trueG, h}} \bar q_\theta(S_h, A_h)\right] \nonumber \\
        &= \bigE_{(S_h, A_h) \sim \mu_h} \left[\max_{\theta \in \Theta_{\trueG, h}} \bar q_\theta(S_h, A_h) - v^{\pi^\star_\trueG}(S_h) + v^{\pi^\star_\trueG}(S_h) - \min_{\theta \in \Theta_{\trueG, h}} \bar q_\theta(S_h, A_h)\right] \nonumber \\
        &\le \bigE_{(S_h, A_h) \sim \mu_h} \sqbraces{4\beta\bigE_{\trajectoryrand \sim \bP_{\bar \pi, S_h, A_h}} \sum_{t = h}^{H} \min\set{1, \norm{\phi(S_t, A_t)}_{X_t^{-1}}} + 2(H-h+1)\modmisspec} \nonumber \\
        &= 4\beta \bigE_{(S_h, A_h) \sim \mu_h} \min\set{1, \norm{\phi(S_h, A_h)}_{X_h^{-1}}} + 4\beta \sum_{t = h+1}^{H} \bigE_{(S_h, A_h) \sim \mu_h} \bigE_{(S_t, A_t) \sim \bPmarg{t}_{\bar \pi, S_h, A_h}} \min\set{1, \norm{\phi(S_{t}, A_{t})}_{X_{t}^{-1}}} \nonumber \\
            &\qquad + (H-h+1)\modmisspec \, .
        \label{eq:sum of elliptical term under bar-pi}
    \end{align}
    The inequality used \cref{lem:sum of elliptical terms bound}.
    As we will show shortly, the first term can be bounded by \cref{lem:expected elliptical potential bound}, since its expectation is taken w.r.t. the data collecting distribution $\mu$.
    Thus, the approach we take to bounding the second term is to relate its nested expectations to just be a single expectation taken w.r.t. the distribution $\mu$ (similar to the first term, which we claim we know how to bound).
    To this end,
    for any $(s', a') \in \cS \times \cA$, define the policy $\check \pi_h$ as
    \begin{align*}
        \check \pi_h(a'|s')
        = \begin{cases}
            \behavepi(a'|s') & \text{if } \stage(s') \le h \\
            \bar \pi(a'|s') & \text{if } \stage(s') > h \, . 
        \end{cases}
    \end{align*}
    Notice that for any $t \in [H], u \in [t+1:H+1]$, $\bigE_{(S_t, A_t) \sim \mu_t} \bigE_{(S_u, A_u) \sim \bPmarg{u}_{\bar \pi, S_t, A_t}} = \bigE_{(S_u, A_u) \sim \bPmarg{u}_{\check \pi_t, s}}$.
    Let $\check \nu_u(s_u, a_u) = \bPmarg{u}_{\check \pi_h, \startstate}(s_u, a_u)$ for all $u \in [H], (s_u, a_u) \in \cS_u \times \cA$. 
    Clearly, $\check \nu = (\check \nu_u)_{u \in [H]}$ is an admissible distribution by \cref{def:admissible dist}.
    Thus, with the definition of $\check \nu$ and using \cref{lem:conc error bound}, we get that \cref{eq:sum of elliptical term under bar-pi} is
    \begin{align*}
        &= 4\beta \bigE_{(S_h, A_h) \sim \mu_h} \min\set{1, \norm{\phi(S_h, A_h)}_{X_h^{-1}}} + 4\beta \sum_{t = h+1}^{H} \bigE_{(S_t, A_t) \sim \check \nu_h} \min\set{1, \norm{\phi(S_{t}, A_{t})}_{X_{t}^{-1}}} + (H-h+1)\modmisspec \\
        &\le 4\beta \bigE_{(S_h, A_h) \sim \mu_h} \min\set{1, \norm{\phi(S_h, A_h)}_{X_h^{-1}}} + 4\conc\beta \sum_{t = h+1}^{H} \bigE_{(S_t, A_t) \sim \mu_t} \min\set{1, \norm{\phi(S_{t}, A_{t})}_{X_{t}^{-1}}} + (H-h+1)\modmisspec \\
        &\le 4\conc\beta \sum_{t = h}^{H} \bigE_{(S_t, A_t) \sim \mu_t} \min\set{1, \norm{\phi(S_{t}, A_{t})}_{X_{t}^{-1}}} + (H-h+1)\modmisspec \, . 
    \end{align*}
    The last inequality used that $\conc \ge 1$.
    Finally, we can apply \cref{lem:expected elliptical potential bound} to bound $\bigE_{(S_t, A_t) \sim \mu_t} \min\set{1, \norm{\phi(S_{t}, A_{t})}_{X_t^{-1}}}$.
    In particular, let $\cE_3$ be as defined in the proof of \cref{lem:expected elliptical potential bound}.
    Then, by \cref{lem:expected elliptical potential bound}, under event $\cE_3$, it holds that
    \begin{align*}
        4\conc\beta \sum_{t = h}^{H} \bigE_{(S_t, A_t) \sim \mu_t} \min\set{1, \norm{\phi(S_{t}, A_{t})}_{X_t^{-1}}}
        \le 4\conc\beta \sum_{t = h}^{H} \check \eps
        \le 4H \conc \check \eps \beta \, .
    \end{align*}
    Putting all of the bound after \cref{eq:sum of elliptical term under bar-pi} together and plugging them into \cref{eq:sum of elliptical term under bar-pi}, we have that,
    under event $\cE_1 \cap \cE_2$, 
    it holds that
    \begin{align}
        \bigE_{(S_h, A_h) \sim \mu_h} \left[\max_{\theta \in \Theta_{\trueG, h}} \bar q_\theta(S_h, A_h) - \min_{\theta \in \Theta_{\trueG, h}} \bar q_\theta(S_h, A_h)\right]
        \le (H-h+1) \modmisspec + 4H \conc \check\eps \beta \, .
        \label{eq:true guess event 1 and 3 bound}
    \end{align}
    We are now ready to state the final result.
    Noting that $h \in [H]$ was arbitrary,
    by combining \cref{eq:true guess event 2 bound,eq:true guess event 1 and 3 bound}, we have that,
    under event $\cE_1 \cap \cE_2 \cap \cE_3$,
    for all $h \in [H]$,
    it holds that
    \begin{align}
        &\frac{1}{\nh} \sum_{i \in [\nh]} \left(\max_{\theta \in \Theta_{\trueG, h}} \bar q_{\theta}(s_h^i, a_h^i) - \min_{\theta \in \Theta_{\trueG, h}} \bar q_{\theta}(s_h^i, a_h^i)\right) \nonumber \\
        &\le \frac{H}{\sqrt{\nh}} \sqrt{\log\left(\frac{6H|C_\xi^\bbG|}{\delta}\right)} 
            + 2\modxi 
            + (H-h+1) \modmisspec 
            + 4H \conc \check\eps \beta \nonumber \\
        &= \frac{H}{\sqrt{\nh}} \sqrt{\log\left(\frac{6H(1 + 2\usedtobesqrtdoneplusone/\xi))^{dHd_0}}{\delta}\right)} 
            + 24 \sqrt{2d} H^2 \featurebound \xi \alpha^{-1} \left(2\sqrt{\nh}\featurebound\modthetabound / (H^{3/2}d)\right)^{H} \nonumber \\
            &\qquad + (H-h+1) \modmisspec 
            + 4H \conc \check\eps \beta \nonumber \nonumber \\    
        &\le \frac{H}{\sqrt{\nh}} \sqrt{dH^2d_0\log\left(1 + 96 \sqrt{n} \sqrt{2d} H^2 \featurebound \usedtobesqrtdoneplusone \alpha^{-1} \sqrt{\nh}\featurebound\modthetabound / (H^{3/2}d)\right) + \log\left(\frac{6H}{\delta}\right)} 
            + \frac{1}{\sqrt{n}} \nonumber \\
            &\qquad + (H-h+1) \modmisspec 
            + 4H \conc \check\eps \beta \nonumber \\
        &= \bar \eps
        = \tilde \cO\left(\frac{H^2 d}{\sqrt{n}} + \frac{1}{\sqrt{n}} + \frac{H^{5/2} d}{\sqrt{n}} + \frac{\conc H^{5/2} d^2}{\sqrt{n}}\right)
        = \tilde \cO\left(\frac{\conc H^{5/2} d^2}{\sqrt{n}}\right) \, .
        \label{eq:bar-eps definition}
    \end{align}
    The first equality holds by plugging in the values of $|C_\xi^\bbG|, \modxi$, as defined in \cref{def:G cover size,def:modxi}. 
    The second equality holds by setting $\xi^{-1} = 24 \sqrt{n} \sqrt{2d} H^2 \featurebound \alpha^{-1} \left(2\sqrt{\nh}\featurebound\modthetabound / (H^{3/2}d)\right)^{H}$.
    The last two equalities hold by plugging in parameter values according to \cref{sec:parameter settings}.
    
    Noticing that this is exactly the condition (\cref{eq:opt prob cond}) in \cref{opt:bpse start state} that needs to be satisfied by any feasible solution,
    we conclude that,
    under event $\cE_1 \cap \cE_2 \cap \cE_3$,
    the true guess $\trueG$ is a feasible solution to \cref{opt:bpse start state}.

\end{proof}

\subsection{Proof of \cref{lem:sum of elliptical terms bound}} 
\begin{proof} \label{proof:lem:sum of elliptical terms bound}
    To prove \cref{eq:q close inductive hypothesis} we will use induction.
    The base case is when $h = H+1$, for which $\bar v_{\theta_{H+1}}(s) = v^{\pi^\star_\trueG}(s)$ for all $s \in \termstatespace$.
    This holds, since for all $(s, a) \in \termstatespace \times \cA$, $\bar q_{\theta_{H+1}}(s, a) = 0$ for all $\theta_{H+1} \in \Theta_{\trueG, H+1}$, by the definition of $\Theta_{\trueG, H+1}$ (\cref{eq:Theta definition}),
    and $v^{\pi^\star_\trueG}(s) = 0$, by \cref{eq:v and q at term state}, since $\termstatespace = \set{\termstate}$.

    Now, we show the inductive step.
    Let $h \in [H]$ be arbitrary.
    Assume that \cref{eq:q close inductive hypothesis} holds for any $t \in [h+1:H+1]$.
    We prove that \cref{eq:q close inductive hypothesis} holds for $h$.
    Let $(s, a) \in \cS_h \times \cA, \theta_h \in \Theta_{\trueG, h}$.
    Then,
    \begin{align}
        \abs{\bar q_{\theta_h}(s, a) - q^{\pi^\star_{\trueG}}(s, a)} 
        &\le \abs{\min\set{H, \ip{\phi(s, a), \theta_h} -  q^{\pi^\star_{\trueG}}(s, a)}} \nonumber \\
        &= \min\set{H, \abs{\ip{\phi(s, a), \theta_h} -  q^{\pi^\star_{\trueG}}(s, a)}} \, ,
        \label{eq:clipped q bound}
    \end{align}
    where the last equality holds since $\left(\ip{\phi(s, a), \theta_h} -  q^{\pi^\star_{\trueG}}(s, a)\right) \in [-H, H]$.
    We will focus on bounding $\abs{\ip{\phi(s, a), \theta_h} -  q^{\pi^\star_{\trueG}}(s, a)}$. 
    By the definition of the set $\Theta_{\trueG, h}$ (\cref{eq:Theta definition}) we know that there exists a $\hat \theta_h \in \hat \Theta_{\trueG, h}$ such that $\norm{\theta_h - \hat \theta_h}_{X_h} \le \beta$.
    Thus, by the Cauchy-Schwarz inequality, we have that
    \begin{align*}
        \abs{\ip{\phi(s, a), \theta_h - \hat \theta_h}}
        \le \norm{\phi(s, a)}_{X_h^{-1}} \norm{\theta_h - \hat \theta_h}_{X_h} 
        \le \beta \norm{\phi(s, a)}_{X_h^{-1}} \, . 
    \end{align*}
    This implies that 
    \begin{align}
        \abs{\ip{\phi(s, a), \theta_h} -  q^{\pi^\star_{\trueG}}(s, a)}
        \le \abs{\ip{\phi(s, a), \hat \theta_h} -  q^{\pi^\star_{\trueG}}(s, a)} + \beta \norm{\phi(s, a)}_{X_h^{-1}} \, .  
        \label{eq:theta true guess bound}
    \end{align}
    Since we know $\hat \theta_h \in \hat \Theta_{\trueG, h}$, by the definition of $\hat \Theta_{\trueG, h}$ (\cref{eq:hat-Theta defintion}), there exists a $\theta_{h+1:H+1} \in \Theta_{\trueG, h+1} \times \dots \times \Theta_{\trueG, H+1}$, 
    such that
    \begin{align*}
        \hat \theta_h
        = X_h^{-1} \sum_{j \in [\nh]} \phi^j_h \bigE_{\tau \sim F^j_{\trueG, h+1}}\left[r^j_{h:\tau-1} + \bar v_{\theta_{h+1:H+1}} \left(s^j_{\tau}\right)\right] \, .
    \end{align*}
    By \cref{lem:approx linear MDP},
    we know there exists a parameter $\rho_h^\behavepi(f) \in \cB(\modthetabound)$,
    such that for all $(s, a) \in \cS_h \times \cA$,
    \begin{align}
        &\abs{\bigE_{\trajectoryrand \sim \bP_{\behavepi, s, a}} \bigE_{\tau \sim F_{\trueG, \trajectoryrand, h+1}} \left[R_{h:\tau-1} + \bar v_{\theta_{h+1:H+1}}(S_\tau)\right] 
            - \ip{\phi(s,a),\rho_h^\behavepi\left(\bar v_{\theta_{h+1:H+1}}\right)}} 
        \le \modmisspec \, . 
        \label{eq:theta modmisspec bound}
    \end{align}
    Let $\rho_h^\behavepi$ be as defined above.
    Next, we will show a bound on $\abs{\ip{\phi(s, a), \hat \theta_h - \rho_h^\behavepi\left(\bar v_{\theta_{h+1:H+1}}\right)}}$ and $\abs{\ip{\phi(s, a), \rho_h^\behavepi\left(\bar v_{\theta_{h+1:H+1}}\right)} -  q^{\pi^\star_{\trueG}}(s, a)}$, which together will give us a bound on $\abs{\ip{\phi(s, a), \hat \theta_h} - q^{\pi^\star_{\trueG}}(s, a)}$, as desired.
    The following result gives us a bound on $\norm{\hat \theta_h - \rho_h^\behavepi\left(\bar v_{\theta_{h+1:H+1}}\right)}_{X_h}$.
    \begin{lemma} \label{lem:theta-hat least squares bound}
        There is an event $\cE_1$, which occurs with probability at least $1 - \delta/3$, such that under event $\cE_1$, 
        for all $G \in \bbG$
        for all $h \in [H]$,
        and for all $\hat \theta_h \in \hat \Theta_{G, h}$,
        it holds that
        \begin{align*}
            \norm{\hat \theta_h - \theta_h^\star}_{X_h}
            \le \beta \, , 
        \end{align*} 
        where
        \begin{align*}
            \hat \theta_h
            = X_h^{-1} \sum_{j \in [\nh]} \phi^j_h \bigE_{\tau \sim F^j_{G, h+1}}\left[r^j_{h:\tau-1} + \bar v_{\theta_{h+1:H+1}} \left(s^j_{\tau}\right)\right]
            \quad \text{for some } \theta_{h+1:H+1} \in \Theta_{G, h+1} \times \dots \times \Theta_{G, H+1} \, ,
        \end{align*}
        and $\theta_h^\star \in \cB(\modthetabound)$ is such that,
        for all $(s, a) \in \cS_h \times \cA$,
        it satisfies
        \begin{align*}
            &\abs{\bigE_{\trajectoryrand \sim \bP_{\behavepi, s, a}} \bigE_{\tau \sim F_{G, \trajectoryrand, h+1}} \left[R_{h:\tau-1} + \bar v_{\theta_{h+1:H+1}}(S_\tau)\right] 
                - \ip{\phi(s,a), \theta_h^\star}} 
            \le \modmisspec \, . 
        \end{align*}
    \end{lemma}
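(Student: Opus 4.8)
The plan is to bound $\snorm{\hat\theta_h-\theta_h^\star}_{X_h}$ using the least-squares error decomposition of \cref{lem:least squared error decomposition}, in exactly the same way as in the proof of \cref{lem:pi*G in Theta}, and then to bound the three resulting contributions---regularization, misspecification, and noise---so that their sum is at most $\beta$. The only feature of the present statement that goes beyond \cref{lem:pi*G in Theta} is that the bound must hold \emph{uniformly}: for every $G\in\bbG$, every $h\in[H]$, and every $\hat\theta_h\in\hat\Theta_{G,h}$, equivalently for every value-function target $\bar v_{\theta_{h+1:H+1}}$ with $\theta_{h+1:H+1}\in\Theta_{G,h+1}\times\dots\times\Theta_{G,H+1}$. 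In fact \cref{lem:theta-hat least squares bound} is essentially the fully uniform version of the noise estimate that \cref{lem:pi*G in Theta} already invoked for the single sequence $\psi_{h+1:H+1}(\pi^\star_G)$, so the same event $\cE_1$ should serve here.

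Concretely, I would apply \cref{lem:least squared error decomposition} with $V=X_h$, true parameter $\theta_\star=\theta_h^\star$, design $A=(\phi_h^j)_{j\in[\nh]}$, and response $Y=\big(\bigE_{\tau\sim F^j_{G,h+1}}[r^j_{h:\tau-1}+\bar v_{\theta_{h+1:H+1}}(s^j_\tau)]\big)_{j\in[\nh]}$. Splitting $Y$ into its linear part $(\ip{\phi_h^j,\theta_h^\star})_{j}$, the martingale noise $\gamma_j$ (the per-trajectory target minus its conditional expectation given $(s_h^j,a_h^j)$), and the bias $\Delta_j=\bigE_{\trajectoryrand\sim\bP_{\behavepi,s_h^j,a_h^j}}\bigE_{\tau\sim F_{G,\trajectoryrand,h+1}}[R_{h:\tau-1}+\bar v_{\theta_{h+1:H+1}}(S_\tau)]-\ip{\phi_h^j,\theta_h^\star}$, the lemma yields $\snorm{\hat\theta_h-\theta_h^\star}_{X_h}\le\sqrt{\lambda}\,\norm{\theta_h^\star}_2+\norm{\Delta}_\infty\sqrt{\nh}+\norm{\iota}_{X_h^{-1}}$ with $\iota=\sum_{j\in[\nh]}\phi_h^j\gamma_j$.

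The first two terms are deterministic. Since $\theta_h^\star\in\cB(\modthetabound)$ and $\sqrt{\lambda}=H^{3/2}d/\modthetabound$ (\cref{def:lambda}), the regularization term is at most $H^{3/2}d$. The bias is controlled directly by the defining property of $\theta_h^\star$, which is precisely the conclusion of \cref{lem:approx linear MDP} applied to $\behavepi$ and $f=\bar v_{\theta_{h+1:H+1}}$: it gives $\norm{\Delta}_\infty\le\modmisspec$, and the choice of $\eta$ in \cref{def:eta} makes $\modmisspec\sqrt{\nh}\le H^{3/2}d$. For the noise term I would appeal to \cref{lem:least squares noise cover bound}, whose event $\cE_1$ (probability at least $1-\delta/3$) bounds $\norm{\iota}_{X_h^{-1}}\le\bar\beta$ uniformly over $G$, $h$, and all admissible value-function parameters; this applies because $\theta_{h+1:H+1}$ always lies in $\cB(\tilde L_2)^{H-h+1}=\cB(\modthetabound)^{H-h+1}$, the class over which that lemma is established. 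Summing the three bounds gives $\snorm{\hat\theta_h-\theta_h^\star}_{X_h}\le 2H^{3/2}d+\bar\beta\le\beta$, the last inequality holding by the definition of $\beta$ (\cref{def:beta}).

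The main obstacle is the uniform noise bound, which is the content of \cref{lem:least squares noise cover bound} and the only genuinely probabilistic step. Two difficulties must be overcome there: the noise $\gamma_j$ is a martingale-difference sequence mixing the trajectory randomness with the random skipping stopping time $\tau\sim F^j_{G,h+1}$, so a self-normalized (Abbasi--Yadkori-type) tail bound is needed; and the target depends continuously both on the guess $G$ (through $\omega_G$, hence through $F^j_{G,h+1}$) and on the value-function parameters $\theta_{h+1:H+1}$, so a covering argument over $\bbG$ and over $\cB(\modthetabound)^{H-h+1}$ must precede the union bound. The deliberate Lipschitz smoothness of $\omega_G$ in $G$ (\cref{eq:omega}) together with the cover-size estimates of \cref{lem:G cover useful results} are what keep the cover small enough that $\bar\beta=\tilde\cO(H^{3/2}d)$ stays on the same order as the regularization term, allowing a single $\beta=\tilde\cO(H^{3/2}d)$ to absorb all three contributions.
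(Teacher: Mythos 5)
Your proposal is correct and follows essentially the same route as the paper's own proof: the identical application of \cref{lem:least squared error decomposition} with $V=X_h$, $\theta_\star=\theta_h^\star$ and the same noise/bias split, the regularization bound $\sqrt{\lambda}\norm{\theta_h^\star}_2\le H^{3/2}d$, the bias bound $\norm{\Delta}_\infty\sqrt{\nh}\le\modmisspec\sqrt{\nh}\le H^{3/2}d$ from the defining property of $\theta_h^\star$ and the choice of $\eta$, and the appeal to the uniform-in-$(G,h,\theta_{h+1:H+1})$ noise bound of \cref{lem:least squares noise cover bound} under its event $\cE_1$, justified by $\Theta_{G,u}\subset\cB(\modthetabound)$. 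The only cosmetic difference is bookkeeping in the final step (the paper defines $\beta$ to equal $H^{3/2}d+\modmisspec\sqrt{\nh}+\bar\beta$ rather than to dominate $2H^{3/2}d+\bar\beta$), which does not affect the argument.
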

    \begin{proof} \label{proof:lem:theta-hat least squares bound}
        Fix $G \in \bbG, h \in [H]$, and $\hat \theta_h \in \hat \Theta_{G, h}$, such that 
        \begin{align*}
            \hat \theta_h
            = X_h^{-1} \sum_{j \in [\nh]} \phi^j_h \bigE_{\tau \sim F^j_{G, h+1}}\left[r^j_{h:\tau-1} + \bar v_{\theta_{h+1:H+1}} \left(s^j_{\tau}\right)\right]
            \quad \text{for some } \theta_{h+1:H+1} \in \Theta_{G, h+1} \times \dots \times \Theta_{G, H+1} \, .
        \end{align*}
        Fix $\theta_h^\star \in \cB(\modthetabound)$, such that,
        for all $(s, a) \in \cS_h \times \cA$,
        it satisfies
        \begin{align}
            &\abs{\bigE_{\trajectoryrand \sim \bP_{\behavepi, s, a}} \bigE_{\tau \sim F_{G, \trajectoryrand, h+1}} \left[R_{h:\tau-1} + \bar v_{\theta_{h+1:H+1}}(S_\tau)\right] 
                - \ip{\phi(s,a), \theta_h^\star}} 
            \le \modmisspec \, . 
            \label{eq:Delta modmisspec bound}
        \end{align}
        We would like to make use of \cref{lem:least squared error decomposition} to bound $\norm{\hat \theta_h - \theta_h^\star}_{X_h}$.
        To do so, we map the terms used in the \cref{lem:least squared error decomposition} to our terms as follows
        \begin{align*}
            &n = \nh, \lambda = \lambda, \theta_{\star} = \theta_h^\star, V = X_h, \hat \theta = \hat \theta_h, A = \left(\phi_h^j\right)_{j \in [\nh]} \, , \\
            &Y = \tilde Y + \Delta = \left(\ip{\phi_h^j, \theta_h^\star}\right)_{j \in [\nh]} + \gamma + \Delta = \left(\bigE_{\tau \sim F^j_{G, h+1}}\left[r^j_{h:\tau-1} + \bar v_{\theta_{h+1:H+1}} \left(s^j_{\tau}\right)\right]\right)_{j \in [\nh]} \, , \\
            &\gamma = \left(\bigE_{\tau \sim F^j_{G, h+1}}\left[r^j_{h:\tau-1} + \bar v_{\theta_{h+1:H+1}} \left(s^j_{\tau}\right)\right] 
                -  \bigE_{\trajectoryrand \sim \bP_{\behavepi, s_h^j, a_h^j}} \bigE_{\tau \sim F_{G, \trajectoryrand, h+1}} \left[R_{h:\tau-1} + \bar v_{\theta_{h+1:H+1}} (S_\tau)\right]\right)_{j \in [\nh]} \, , \\
            &\Delta = \left(\bigE_{\trajectoryrand \sim \bP_{\behavepi, s_h^j, a_h^j}} \bigE_{\tau \sim F_{G, \trajectoryrand, h+1}} \left[R_{h:\tau-1} + \bar v_{\theta_{h+1:H+1}} (S_\tau)\right] 
                - \ip{\phi_h^j, \theta^\star_h}\right)_{j \in [\nh]} \, , \\ 
            &\iota 
            = \sum_{j \in [\nh]} \phi_h^j \left(\bigE_{\tau \sim F^j_{G, h+1}}\left[r^j_{h:\tau-1} + \bar v_{\theta_{h+1:H+1}} \left(s^j_{\tau}\right)\right] 
                -  \bigE_{\trajectoryrand \sim \bP_{\behavepi, s_h^j, a_h^j}} \bigE_{\tau \sim F_{G, \trajectoryrand, h+1}} \left[R_{h:\tau-1} + \bar v_{\theta_{h+1:H+1}} (S_\tau)\right]\right) \, .
        \end{align*}       
        With the definitions as above, applying \cref{lem:least squared error decomposition} we get
        \begin{align}
            \norm{\hat \theta_h - \theta_h^\star}_{X_h}
            &\le \sqrt{\lambda}\norm{\theta_h^\star}_2 + \norm{\Delta}_\infty \sqrt{\nh} + \norm{\iota}_{X_h^{-1}} \, .
            \label{eq:true guess least squares bound}
        \end{align}           
        The first term can be bounded by $H^{3/2} d$, by noting that $\norm{\theta_h^\star}_2 \le \modthetabound$, and setting 
        \begin{align}
            \sqrt{\lambda} = H^{3/2} d/\modthetabound \, .
            \label{eq:lambda definition}
        \end{align}
        The second term can be bounded by $\modmisspec \sqrt{n}$, by using \cref{eq:Delta modmisspec bound}.
        
        To bound the third term let the event $\cE_1$ be as defined in the proof of \cref{lem:least squares noise cover bound}, which occurs with probability at least $1 - \delta/3$.
        Then, under event $\cE_1$, the third term in \cref{eq:true guess least squares bound} can be bounded by $\bar \beta$ (\cref{def:bar-beta}), by applying \cref{lem:least squares noise cover bound}, since $\theta_{h+1:H+1} \in \Theta_{G, h+1} \times \dots \times \Theta_{G, H+1} \subset \cB(\modthetabound)^{H-h+1}$.
        
        Plugging the three bounds above back into \cref{eq:true guess least squares bound} we get that under event $\cE_1$, it holds that
        \begin{align}
            \norm{\hat \theta_h - \theta_h^\star}_{X_h}
            &\le H^{3/2} d + \modmisspec \sqrt{\nh} + \bar \beta \nonumber \\
            &= \beta
            = \tilde \cO\left(H^{3/2} d\right) \, .
            \label{eq:beta definition}
        \end{align}
        The last equality holds by setting
        \begin{align}
            \misspec
            \le \frac{H^{3/2} d}{\sqrt{n} \left(10 H^2 d_0/\alpha + 1\right)}
            \implies
            \modmisspec
            \le H^{3/2}d / \sqrt{n} \, ,
            \label{eq:eta definition}
        \end{align}
        and the values of $n, \bar \beta$ are set according to \cref{sec:parameter settings}.
    \end{proof}
    
    We return back to the proof of \cref{lem:sum of elliptical terms bound}. 
    Let $\cE_1$ be as defined in the proof of \cref{lem:theta-hat least squares bound}.
    For the remainder of the proof, operate under event $\cE_1$.
    By \cref{lem:theta-hat least squares bound} (with $\rho_h^\behavepi\left(\bar v_{\theta_{h+1:H+1}}\right) = \theta_h^\star$), we have that
    \begin{align}
        \norm{\hat \theta_h - \rho_h^\behavepi\left(\bar v_{\theta_{h+1:H+1}}\right)}_{X_h}
        \le \beta \, . 
        \label{eq:true guess ellipsoid bounds}
    \end{align} 
    Returning to $\abs{\ip{\phi(s, a), \hat \theta_h - \rho_h^\behavepi\left(\bar v_{\theta_{h+1:H+1}}\right)}}$, we can now bound it as follows.
    \begin{align}
        \abs{\ip{\phi(s, a), \hat \theta_h - \rho_h^\behavepi\left(\bar v_{\theta_{h+1:H+1}}\right)}}
        \le \norm{\phi(s, a)}_{X_h^{-1}}\norm{\hat \theta_h - \rho_h^\behavepi\left(\bar v_{\theta_{h+1:H+1}}\right)}_{X_h}
        \le \beta \norm{\phi(s, a)}_{X_h^{-1}} \, .
        \label{eq:hat-theta true guess bound}
    \end{align}
    The first inequality used the Cauchy-Schwarz inequality.
    The second inequality used \cref{eq:true guess ellipsoid bounds}.
    Now, we bound $\abs{\ip{\phi(s, a), \rho_h^\behavepi\left(\bar v_{\theta_{h+1:H+1}}\right)} -  q^{\pi^\star_{\trueG}}(s, a)}$, by making use of \cref{eq:theta modmisspec bound}, to get that
    \begin{align}
        &\abs{\ip{\phi(s, a), \rho_h^\behavepi\left(\bar v_{\theta_{h+1:H+1}}\right)} -  q^{\pi^\star_{\trueG}}(s, a)} \nonumber \\
        &\le \abs{\bigE_{\trajectoryrand \sim \bP_{\behavepi, s, a}} \bigE_{\tau \sim F_{\trueG, \trajectoryrand, h+1}} \left[R_{h:\tau-1} + \bar v_{\theta_{h+1:H+1}}(S_\tau)\right] - q^{\pi^\star_{\trueG}}(s, a)} + \modmisspec \nonumber \\
        &= \abs{\bigE_{\trajectoryrand \sim \bP_{\behavepi, s, a}} \bigE_{\tau \sim F_{\trueG, \trajectoryrand, h+1}} \left[R_{h:\tau-1} + \max_{a' \in \cA}\bar q_{\theta_{\stage(S_\tau)}}(S_\tau, a') - \max_{a' \in \cA} q^{\pi^\star_\trueG}(S_\tau, a') + \max_{a' \in \cA} q^{\pi^\star_\trueG}(S_\tau, a')\right] - q^{\pi^\star_{\trueG}}(s, a)} + \modmisspec \nonumber \\
        &\le \abs{\bigE_{\trajectoryrand \sim \bP_{\behavepi, s, a}} \bigE_{\tau \sim F_{\trueG, \trajectoryrand, h+1}} \left[R_{h:\tau-1} + \max_{a' \in \cA} q^{\pi^\star_\trueG}(S_\tau, a')\right] - q^{\pi^\star_{\trueG}}(s, a)} \\
            &\qquad + \abs{\bigE_{\trajectoryrand \sim \bP_{\behavepi, s, a}} \bigE_{\tau \sim F_{\trueG, \trajectoryrand, h+1}} \max_{a' \in \cA}\braces{\bar q_{\theta_{\stage(S_\tau)}}(S_\tau, a') - q^{\pi^\star_\trueG}(S_\tau, a')}} + \modmisspec \, .        
        \label{eq:rho q*G bound}
    \end{align}
    The equality used the definition of $\bar v$ (\cref{eq:bar-v_theta definition}).
    To bound the first term in \cref{eq:rho q*G bound} notice that by the definition of $\pi^\star_\trueG$ (\cref{eq:pi*G definition}), $\argmax_{a' \in \cA} q^{\pi^\star_\trueG}(S_\tau, a')$ is exactly the action $\pi^\star_\trueG$ would take at the stopping stage $\tau$,
    and that the distribution of $\trajectoryrand$ under policy $\pi^0$ until stopping stage $\tau$ is same as the distribution of $\trajectoryrand$ under policy $\pi^\star_\trueG$ until stopping stage $\tau$.
    This gives that
    \begin{align*}
        \abs{\bigE_{\trajectoryrand \sim \bP_{\behavepi, s, a}} \bigE_{\tau \sim F_{\trueG, \trajectoryrand, h+1}} \left[R_{h:\tau-1} + \max_{a' \in \cA} q^{\pi^\star_\trueG}(S_\tau, a')\right] - q^{\pi^\star_{\trueG}}(s, a)}
        = \abs{q^{\pi^\star_{\trueG}}(s, a) - q^{\pi^\star_{\trueG}}(s, a)}
        = 0 \, .
    \end{align*}
    To bound the second term in \cref{eq:rho q*G bound} we can use the inductive hypothesis (\cref{eq:q close inductive hypothesis}).
    Defining notation that will be needed for the below display, for any $(s', a') \in \cS \times \cA$ we will write $\trajectoryrand' \sim \bP_{\check \pi, s, a}$ to have the usual definition $\trajectoryrand' = (s', a', R_h', \dots, S_{H+1}', A_{H+1}', R_{H+1}')$, except with a superscript $(\cdot)'$ added to all of the random elements.
    Then, by letting $a_\tau = \argmax_{a' \in \cA}\braces{\bar q_{\theta_{\stage(S_\tau)}}(S_\tau, a') - q^{\pi^\star_\trueG}(S_\tau, a')}$, applying the inductive hypothesis, and a triangle inequality,
    we have that
    \begin{align*}
        &\abs{\bigE_{\trajectoryrand \sim \bP_{\behavepi, s, a}} \bigE_{\tau \sim F_{\trueG, \trajectoryrand, h+1}} \max_{a' \in \cA}\braces{\bar q_{\theta_{\stage(S_\tau)}}(S_\tau, a') - q^{\pi^\star_\trueG}(S_\tau, a')}} \\
        &\le \abs{\bigE_{\trajectoryrand \sim \bP_{\behavepi, s, a}} \bigE_{\tau \sim F_{\trueG, \trajectoryrand, h+1}} \bigg[2\beta\bigE_{\trajectoryrand' \sim \bP_{\bar \pi, S_\tau, a_\tau}} \sum_{t = \tau}^{H} \min\set{1, \norm{\phi(S_t', A_t')}_{X_t^{-1}}}} + 
            \abs{\bigE_{\trajectoryrand \sim \bP_{\behavepi, s, a}} \bigE_{\tau \sim F_{\trueG, \trajectoryrand, h+1}} (H-\tau+1)\modmisspec\bigg]} \, .
    \end{align*}
    We bound each of the terms in the expectation separately. 
    For the first term, we first recall the definition of $g^{\bar \pi}$ (\cref{eq:gpi definition}) and upper bound the term inside the expectation as follows, which will help us relate things to $\bar \pi$ as we shall see soon. 
    \begin{align*}
        \bigE_{\trajectoryrand' \sim \bP_{\bar \pi, S_\tau, a_\tau}} \sum_{t = \tau}^{H} \min\set{1, \norm{\phi(S_t', A_t')}_{X_t^{-1}}}
        = g^{\bar \pi}(S_\tau, a_\tau)
        \le \max_{a' \in \cA} g^{\bar \pi}(S_\tau, a') \, .
    \end{align*}
    Along with the above result, notice that by the definition of $\bar \pi$ (\cref{eq:optpi elliptical}), $\argmax_{a' \in \cA} g^{\bar \pi}(S_\tau, a')$ is exactly the action $\bar \pi$ would take at the stopping stage $\tau$, 
    and that the distribution of $\trajectoryrand$ under policy $\pi^0$ until stopping stage $\tau$ is same as the distribution of $\trajectoryrand$ under policy $\bar \pi$ until stopping stage $\tau$.
    Thus, 
    \begin{align*}
        &2\beta \abs{\bigE_{\trajectoryrand \sim \bP_{\behavepi, s, a}} \bigE_{\tau \sim F_{\trueG, \trajectoryrand, h+1}} \max_{a' \in \cA} g^{\bar \pi}(S_\tau, a')} \\
        &\le 2\beta \bigE_{\trajectoryrand \sim \bP_{\behavepi, s, a}} \bigE_{\tau \sim F_{\trueG, \trajectoryrand, h+1}} \left[\sum_{t = h+1}^{\tau-1} \min\set{1, \norm{\phi(S_t, A_t)}_{X_t^{-1}}} + \max_{a' \in \cA} g^{\bar \pi}(S_\tau, a')\right] \\
        &= 2 \beta g^{\bar \pi}(s, a)
        = 2\beta \bigE_{\trajectoryrand \sim \bP_{\bar \pi, s, a}} \sum_{t = h+1}^{H} \min\set{1, \norm{\phi(S_t, A_t)}_{X_t^{-1}}} \, .
    \end{align*}
    For the second term, since $\tau \ge h+1$ where $\tau \sim F_{\trueG, \trajectoryrand, h+1}$, it holds that
    \begin{align*}
        \abs{\bigE_{\trajectoryrand \sim \bP_{\behavepi, s, a}} \bigE_{\tau \sim F_{\trueG, \trajectoryrand, h+1}}(H-\tau+1)\modmisspec}
        \le (H-h)\modmisspec \, .             
    \end{align*}
    Plugging the above two bounds into \cref{eq:rho q*G bound}, we get that
    \begin{align}
        \abs{\ip{\phi(s, a), \rho_h^\behavepi\left(\bar v_{\theta_{h+1:H+1}}\right)} -  q^{\pi^\star_{\trueG}}(s, a)} 
        \le 2\beta \bigE_{\trajectoryrand \sim \bP_{\bar \pi, s, a}} \sum_{t = h+1}^{H} \min\set{1, \norm{\phi(S_t, A_t)}_{X_t^{-1}}} + (H-h+1)\modmisspec \, .
        \label{eq:true theta true guess bound}
    \end{align}
    Combining \cref{eq:clipped q bound,eq:theta true guess bound,eq:hat-theta true guess bound,eq:true theta true guess bound}, we get that
    \begin{align*}
        & \abs{\bar q_{\theta_h}(s, a) - q^{\pi^\star_{\trueG}}(s, a)} \\
        &\le \min\set{H, 2\beta \norm{\phi(s, a)}_{X_h^{-1}}
            + 2\beta \bigE_{\trajectoryrand \sim \bP_{\bar \pi, s, a}} \sum_{t = h+1}^{H} \min\set{1, \norm{\phi(S_t, A_t)}_{X_t^{-1}}} + (H-h+1)\modmisspec} \\
        &\le 2\beta \min\set{1, \norm{\phi(s, a)}_{X_h^{-1}}}
            + 2\beta \bigE_{\trajectoryrand \sim \bP_{\bar \pi, s, a}} \sum_{t = h+1}^{H} \min\set{1, \norm{\phi(S_t, A_t)}_{X_t^{-1}}} + (H-h+1)\modmisspec \\
        &\le 2\beta \bigE_{\trajectoryrand \sim \bP_{\bar \pi, s, a}} \sum_{t = h}^{H} \min\set{1, \norm{\phi(S_t, A_t)}_{X_t^{-1}}} + (H-h+1)\modmisspec \, .
    \end{align*}
    The second inequality used that $\beta \ge H$ and that $\min(a, b+c) \le \min(a, b) + c$ for $a, b, c \ge 0$.
\end{proof}

\begin{lemma} \label{lem:expected elliptical potential bound}
    There is an event $\cE_3$, that occurs with probability at least $1 - \delta/3$, such that under event $\cE_3$, for all $h \in [H]$, it holds that
    \begin{align*}
        \bigE_{(S, A) \sim \mu_h} \min\set{1, \norm{\phi(S, A)}_{X_h^{-1}}} \le \check \eps \, .
    \end{align*}
    where $\check \eps$ is defined in \cref{def:check-eps}. 
\end{lemma}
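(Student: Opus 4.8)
The plan is to decouple the data-dependence of $X_h$ from the fresh sample $(S,A)$ by a uniform-convergence argument, after first establishing a clean deterministic bound on the analogous \emph{in-sample} average. Throughout, note that by \cref{ass:batch data} the feature vectors $(\phi_h^j)_{j\in[\nh]}$ are i.i.d.\ draws from $\mu_h$, and write $g_M(s,a)=\min\set{1,\norm{\phi(s,a)}_{M^{-1}}}\in[0,1]$ for a positive definite $M$, so that the quantity to be bounded is $\bigE_{(S,A)\sim\mu_h}\, g_{X_h}(S,A)$, which is random through $X_h$ (\cref{def:covariance matrix}).

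First I would prove the deterministic in-sample bound
\[
\frac{1}{\nh}\sum_{j\in[\nh]} g_{X_h}(s_h^j,a_h^j)\le \sqrt{d/\nh}\,.
\]
This follows from the trace identity $\sum_{j\in[\nh]}\norm{\phi_h^j}_{X_h^{-1}}^2 = \operatorname{tr}\!\bigl(X_h^{-1}(X_h-\lambda I)\bigr)=d-\lambda\operatorname{tr}(X_h^{-1})\le d$, combined with $g_{X_h}(s_h^j,a_h^j)\le \norm{\phi_h^j}_{X_h^{-1}}$ and Cauchy--Schwarz: $\tfrac1\nh\sum_j\norm{\phi_h^j}_{X_h^{-1}}\le \tfrac1\nh\sqrt{\nh\sum_j\norm{\phi_h^j}_{X_h^{-1}}^2}\le\sqrt{d/\nh}$.

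The key step is to transfer this in-sample bound to the population expectation $\bigE_{\mu_h}g_{X_h}$. Because $X_h$ is built from the very samples entering the empirical average, a plain Hoeffding bound does not apply, and I would instead argue uniformly over the set of candidate matrices. Every realizable $X_h$ satisfies $\lambda I\preceq X_h\preceq(\lambda+\nh\featurebound^2)I$, so $X_h^{-1}$ ranges over an operator-norm-bounded subset of the $d\times d$ symmetric matrices, which admits a $\zeta$-net of log-cardinality $\tilde\cO(d^2\log(1/\zeta))$. Using that $g_M$ takes values in $[0,1]$ and depends uniformly continuously on $M^{-1}$ in operator norm (via $|\sqrt{x}-\sqrt{y}|\le\sqrt{|x-y|}$, with modulus controlled by $\featurebound$), a Hoeffding bound on each net element followed by a union bound over the net yields, with probability at least $1-\delta/(3H)$,
\[
\sup_{M}\Bigl|\bigE_{(S,A)\sim\mu_h}g_M(S,A)-\tfrac1\nh\textstyle\sum_{j\in[\nh]}g_M(s_h^j,a_h^j)\Bigr|\le \tilde\cO\!\left(\sqrt{d^2/\nh}\right)=\tilde\cO\!\left(d/\sqrt{\nh}\right).
\]
Instantiating this at $M=X_h$ and combining with the in-sample bound gives $\bigE_{\mu_h}g_{X_h}\le \sqrt{d/\nh}+\tilde\cO(d/\sqrt{\nh})=\tilde\cO(d/\sqrt{\nh})=\check\eps$ on this event. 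A final union bound over $h\in[H]$ defines $\cE_3$ and costs only a $\log H$ factor in $\check\eps$.

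I expect the main obstacle to be this uniform-convergence step: one must handle the data-dependence of $X_h$ honestly by covering the entire matrix class rather than a single matrix, verify the (Hölder) continuity of $g_M$ in $M^{-1}$, and confirm that the log-covering number of order $d^2$ produces exactly the advertised $\tilde\cO(d/\sqrt{\nh})$ rate. This contrasts with a naive matrix-concentration approach that would write $\bigE_{\mu_h}\norm{\phi}_{X_h^{-1}}^2=\operatorname{tr}(X_h^{-1}\Lambda_h)$ and bound $\operatorname{tr}\!\bigl(X_h^{-1}(\Lambda_h-\hat\Lambda_h)\bigr)$ through $\operatorname{tr}(X_h^{-1})\le d/\lambda$; since $\lambda$ is constant in $\nh$, that route only yields an $\nh^{-1/4}$-type bound and is therefore too weak, which is precisely why the covering argument is needed.
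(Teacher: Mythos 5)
Your proposal is correct and follows the same overall architecture as the paper's proof: both handle the data-dependence of $X_h$ by covering a deterministic class of candidate matrices (the paper covers $\{M \succeq 0 : \lambdamax(M)\le 1/\lambda\}$ by covering rank-one factors, you cover the operator-norm ball of symmetric matrices directly — both give log-cardinality $\tilde\cO(d^2)$), then apply Hoeffding on each net element, a union bound over the net and over $h\in[H]$, and the H\"older-type continuity $\abs{\norm{u}_A-\norm{u}_B}\le \featurebound\sqrt{\opnorm{A-B}}$ to transfer to the realized $X_h^{-1}$. The one genuine difference is the in-sample bound: the paper compares $X_h^{-1}$ to the prefix matrices $X_{j-1,h}^{-1}$ (using $X_{j-1,h}^{-1}\succeq X_h^{-1}$) and invokes the elliptical potential lemma, yielding $\sqrt{(2d/\nh)\log(1+\nh\featurebound^2/(d\lambda))}$, whereas you use the exact trace identity $\sum_j\norm{\phi_h^j}_{X_h^{-1}}^2=\tr\bigl(X_h^{-1}(X_h-\lambda I)\bigr)=d-\lambda\tr(X_h^{-1})\le d$ plus Cauchy--Schwarz, giving $\sqrt{d/\nh}$ with no logarithmic factor and no appeal to the potential lemma. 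Your version of that step is more elementary and slightly tighter; both yield the same final rate $\check\eps=\tilde\cO(d/\sqrt{n})$, since the covering/union-bound term dominates either way.
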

\begin{proof} 
    First we will show two useful results (namely \cref{eq:elliptical potential difference bound} and \cref{eq:high probability expected elliptical bound}) that are needed in the proof. 
    Let 
    \begin{align*}
        \bX = \left\{M \in \bR^{d \times d}: M \text{ is positive semi-definite, and } \lambdamax(M) \le 1/\lambda\right\} \, .
    \end{align*}   
    Notice that any $X \in \bX$ can be written as $X = \sum_{i=1}^d x_i x_i^\top$ with $x_i \in  \cB(1/\lambda)$ for all $i \in [d]$.
    By \cref{lem:cover number ball}, we know there exists a set $C_\xi \subset \cB(a), a, \xi > 0$ with $|C_\xi| = (1 + 2a/\xi)^d$ such that for any $x \in \cB(a)$ there exists a $y \in C_\xi$ such that $\norm{x - y}_2 \le \xi$.
    Define the set
    \begin{align*}
        \bY = \left\{\sum_{i=1}^d y_i y_i^\top: y_i \in C_\xi \text{ for all } i \in [d]\right\} \, ,
    \end{align*}
    with $|\bY| = (1 + 2/(\lambda \xi))^{d^2}$.
    Then, for any $X = \sum_{i=1}^d x_i x_i^\top \in \bX$ there exists a $Y = \sum_{i=1}^d y_i y_i^\top \in \bY$, such that $\norm{x_i - y_i}_2 \le \xi$ for all $i \in [d]$.
    Let $X, Y$ be as we just defined them.
    Then, 
    writing $\opnorm{\cdot}$ for the operator norm,
    \begin{align*}
        \opnorm{X - Y}
        &= \opnorm{\sum_{i=1}^d x_i x_i^\top - y_i y_i^\top} \\
        &= \opnorm{\sum_{i=1}^d (x_i - y_i)(x_i - y_i)^\top + y_i (x_i - y_i)^\top + (x_i - y_i) y_i^\top} \\
        &\le \sum_{i=1}^d \opnorm{(x_i - y_i)(x_i - y_i)^\top} + \opnorm{y_i (x_i - y_i)^\top} + \opnorm{(x_i - y_i) y_i^\top} \\
        &\le \sum_{i=1}^d \norm{(x_i - y_i)}_2\norm{(x_i - y_i)}_2 + \norm{y_i}_2\norm{(x_i - y_i)}_2 + \norm{(x_i - y_i)}_2 \norm{y_i}_2 \\
        &\le \sum_{i=1}^d \xi^2 + \frac{2\xi}{\sqrt{\lambda}}
        = d \xi^2  + \frac{2d \xi}{\sqrt{\lambda}} \, . 
    \end{align*}
    Then, for any $u \in \cB(\featurebound)$
    \begin{align*}
        \left|\norm{u}_X^2 - \norm{u}_Y^2\right|
        = \left|u^\top (X - Y) u\right|
        \le \norm{u}_2^2 \opnorm{X-Y} 
        \le \featurebound^2 \left(d \xi^2  + \frac{2d \xi}{\sqrt{\lambda}}\right) \, ,
    \end{align*}
    which implies that (since for non-negative $a, b, \sqrt{a+b} \le \sqrt{a} + \sqrt{b}$)
    \begin{align*}
        &\norm{u}_X 
        \le \sqrt{\norm{u}_Y^2 + \featurebound^2 \left(d \xi^2  + \frac{2d \xi}{\sqrt{\lambda}}\right)}
        \le \norm{u}_Y + \sqrt{\featurebound^2 \left(d \xi^2  + \frac{2d \xi}{\sqrt{\lambda}}\right)}, \\
        &\norm{u}_Y 
        \le \sqrt{\norm{u}_X^2 + \featurebound^2 \left(d \xi^2  + \frac{2d \xi}{\sqrt{\lambda}}\right)}
        \le \norm{u}_X + \sqrt{\featurebound^2 \left(d \xi^2  + \frac{2d \xi}{\sqrt{\lambda}}\right)} \, .
    \end{align*}
    Thus, for any $u \in \cB(\featurebound)$ 
    \begin{align}
        \left|\norm{u}_X - \norm{u}_Y\right|
        \le \featurebound \sqrt{d \xi^2  + \frac{2d \xi}{\sqrt{\lambda}}} \, . 
        \label{eq:elliptical potential difference bound}
    \end{align}
    \cref{eq:elliptical potential difference bound} is the first useful result that we alluded to at the beginning of the proof.

    Now, we will show the second useful result.
    For any $Y \in \bY$ and $h \in [H]$, define the event
    \begin{align*}
        \cE_3^{Y, h}
        = \Bigg\{\abs{\bigE_{(S, A) \sim \mu_h} \min \left\{1, \norm{\phi(S, A)}_{Y}\right\} 
            - \frac{1}{\nh} \sum_{j \in [\nh]} \min \left\{1, \norm{\phi^j_h}_{Y}\right\}}
        \le \frac{1}{\sqrt{\nh}}\sqrt{\log\left(\frac{6H |\bY|}{\delta}\right)}\Bigg\} \, .
    \end{align*}
    Since $\min \left\{1, \norm{u}_{Y}\right\} \in [0, 1]$ for all $u \in \cB(\featurebound), Y \in \bY$, we can use Hoeffding's inequality (\cref{lem:hoeffdings inequality}) to get that, for any $Y \in \bY, h \in [H]$, event $\cE_3^{Y, h}$ occurs with probability at least $1 - \delta/(3H|\bY|)$.
    Let 
    \begin{align}
        \cE_3 
        = \bigcap_{Y \in \bY, h \in [H]} \cE_3^{Y, h} \, .
        \label{eq:event 3 defintion}
    \end{align}
    Then, by applying a union bound over $Y, h$ we have that the event $\cE_3$ occurs with probability at least $1 - \delta/3$, and under event $\cE_3$, for all $Y \in \bY, h \in [H]$, it holds that
    \begin{align}
        \bigE_{(S, A) \sim \mu_h} \min \left\{1, \norm{\phi(S, A)}_{Y}\right\}
        \le \frac{1}{\nh} \sum_{j \in [\nh]} \min \left\{1, \norm{\phi^j_h}_{Y}\right\}
            + \frac{1}{\sqrt{\nh}}\sqrt{\log\left(\frac{6H |\bY|}{\delta}\right)} \, .
        \label{eq:high probability expected elliptical bound}
    \end{align}
    \cref{eq:high probability expected elliptical bound} is the second useful result that we alluded to at the beginning of the proof.

    Now, we turn to proving \cref{lem:expected elliptical potential bound}.
    Let $h \in [H]$.
    Let $X \in \bX$, and select $Y \in \bY$ such that, for any $u \in \cB(\featurebound)$
    \begin{align}
        \left|\norm{u}_X - \norm{u}_Y\right| 
        \le \featurebound \sqrt{d \xi^2  + \frac{2d \xi}{\sqrt{\lambda}}} \, ,
        \label{eq:result elliptical potential difference bound}
    \end{align}
    which we know exists by \cref{eq:elliptical potential difference bound}.
    By using \cref{eq:result elliptical potential difference bound} we get that 
    \begin{align}
        \bigE_{(S, A) \sim \mu_h} \min \left\{1, \norm{\phi(S, A)}_X\right\}
        \le \bigE_{(S, A) \sim \mu_h} \min \left\{1, \norm{\phi(S, A)}_{Y}\right\} + \featurebound \sqrt{d \xi^2  + \frac{2d \xi}{\sqrt{\lambda}}} \, .
        \label{eq:result expected elliptical potential difference bound}
    \end{align}
    To bound the first term on the RHS in \cref{eq:result expected elliptical potential difference bound} we can use \cref{eq:high probability expected elliptical bound}, to get that under event $\cE_3$
    \begin{align}
        \bigE_{(S, A) \sim \mu_h} \min \left\{1, \norm{\phi(S, A)}_{Y}\right\}
        \le \frac{1}{\nh} \sum_{j \in [\nh]} \min \left\{1, \norm{\phi_h^j}_{Y}\right\} + \frac{1}{\sqrt{n}}\sqrt{\log\left(\frac{6H |\bY|}{\delta}\right)} \, . 
        \label{eq:result high probability expected elliptical bound}
    \end{align}
    We can bound the first term on the RHS of \cref{eq:result high probability expected elliptical bound}, by again using \cref{eq:result elliptical potential difference bound}, to get that
    \begin{align}
        \frac{1}{\nh} \sum_{j \in [\nh]} \min \left\{1, \norm{\phi_h^j}_{Y}\right\}
        \le \frac{1}{\nh} \sum_{j \in [\nh]} \min \left\{1, \norm{\phi_h^j}_{X}\right\} + \featurebound \sqrt{d \xi^2  + \frac{2d \xi}{\sqrt{\lambda}}} \, .
        \label{eq:result average elliptical potential difference bound}
    \end{align}
    Then, by Jensen's inequality we have that
    \begin{align}
        \frac{1}{\nh} \sum_{j \in [\nh]} \min \left\{1, \norm{\phi_h^j}_{X}\right\} 
        = \sqrt{\left(\frac{1}{\nh} \sum_{j \in [\nh]} \min \left\{1, \norm{\phi_h^j}_{X}\right\}\right)^2}   
        \le \sqrt{\frac{1}{\nh} \sum_{j \in [\nh]} \min \left\{1, \norm{\phi_h^j}_{X}^2\right\}} \, .   
        \label{eq:jensen elliptical bound}
    \end{align}
    Putting \cref{eq:result expected elliptical potential difference bound,eq:result high probability expected elliptical bound,eq:result average elliptical potential difference bound,eq:jensen elliptical bound} together and noting that $X, h$ were arbitrary, we get that, under event $\cE_3$, for any $X \in \bX, h \in [H]$, it holds that
    \begin{align*}
        \bigE_{(S, A) \sim \mu_h} \min \left\{1, \norm{\phi(S, A)}_X\right\}
        \le \sqrt{\frac{1}{\nh} \sum_{j \in [\nh]} \min \left\{1, \norm{\phi_h^j}_{X}^2\right\}} + 2\featurebound \sqrt{d \xi^2  + \frac{2d \xi}{\sqrt{\lambda}}} + \frac{1}{\sqrt{\nh}}\sqrt{\log\left(\frac{6H |\bY|}{\delta}\right)} \, .
    \end{align*}
    We can now introduce $X_h$ and make use of the above result.
    Notice that for any $h \in [H], X_h^{-1} = (\lambda I + \sum_{j \in [\nh]} \phi^j_h (\phi^j_h)^\top)^{-1}$ is such that $\lambdamax(X_h^{-1}) \le 1/\lambda$, since $\lambdamin(X_h) \ge \lambda$.
    Thus, $X_h^{-1} \in \bX$.
    For any $t \in [\nh], h \in [H]$, define $X_{t, h} = \lambda I + \sum_{j \in [t]} \phi^j_h (\phi^j_h)^\top$, and notice that $X_{\nh, h}^{-1} = X_h^{-1}$, and that $X_{t, h}^{-1} - X_{\nh, h}^{-1}$ is positive semidefinite.
    This implies that, for all $h \in [H]$
    \begin{align*}
        \frac{1}{\nh} \sum_{j \in [\nh]} \min \left\{1, \norm{\phi_h^j}^2_{X_h^{-1}}\right\}
        \le \frac{1}{\nh} \sum_{j \in [\nh]} \min \left\{1, \norm{\phi_h^j}^2_{X_{j-1, h}^{-1}}\right\} \, .
    \end{align*}
    Now, we can use the elliptical potential lemma (\cref{lem:elliptical potential}), to conclude that, for all $h \in [H]$ 
    \begin{align*}
        \frac{1}{\nh} \sum_{j \in [\nh]} \min \left\{1, \norm{\phi_h^j}^2_{X_{j-1, h}^{-1}}\right\}
        \le \frac{2d}{\nh} \log\left(\frac{d\lambda + \nh\featurebound^2}{d\lambda}\right) \, .
    \end{align*}
    Putting everything together, we get that, under event $\cE_2$, for all $h \in [H]$, it holds that
    \begin{align}
        &\bigE_{(S, A) \sim \mu_h} \min \left\{1, \norm{\phi(S, A)}_{X_h^{-1}}\right\} \nonumber \\
        &\le 2\featurebound \sqrt{d \xi^2  + \frac{2d \xi}{\sqrt{\lambda}}} 
            + \frac{1}{\sqrt{\nh}}\sqrt{\log\left(\frac{6H |\bY|}{\delta}\right)}
            + \sqrt{\frac{2d}{\nh} \log\left(\frac{d\lambda + \nh\featurebound^2}{d\lambda}\right) \nonumber} \\
        &\le 2\featurebound \sqrt{d \xi^2  + \frac{2\xi \modthetabound}{H^{3/2}}}
            + \frac{1}{\sqrt{\nh}}\sqrt{\log\left(\frac{3H (1 + 2\modthetabound^2/\xi)^{d^2}}{\delta}\right)}
            + \sqrt{\frac{2d}{\nh} \log\left(\frac{d\lambda + \nh\featurebound^2}{d\lambda}\right) \nonumber} \\
        &= \frac{\sqrt{d}}{\sqrt{n}} 
            + \frac{1}{\sqrt{\nh}}\sqrt{d^2\log\left(1 + 16 n \featurebound^2\modthetabound^3 \right) + \log\left(\frac{3H}{\delta}\right)}
            + \sqrt{\frac{2d}{\nh} \log\left(\frac{d\lambda + \nh\featurebound^2}{d\lambda}\right) \nonumber} \\
        &= \check \eps 
        = \tilde \cO\left(d/\sqrt{n}\right) \, .
        \label{eq:check-eps definition}
    \end{align}
    The second inequality used that $|\bY| = (1 + 2/(\lambda \xi))^{d^2}$. 
    The first equality holds by setting $\xi^{-1} = 8 \modthetabound \featurebound^2 n$.
    The last equality holds by plugging in parameter values according to \cref{sec:parameter settings}.
\end{proof}

\begin{lemma} \label{lem:conc error bound}
    If \cref{ass:concentrability} holds,
    then for any non-negative function $f: \cS \times \cA \to [0, \infty)$, 
    for any admissible distribution $\nu = (\nu_t)_{t \in [H]}$,
    and for any $h \in [H]$,
    it holds that
    \begin{align*}
        \bigE_{(S, A) \sim \nu_h} f(S, A)
        \le \conc \bigE_{(S, A) \sim \mu_h} f(S, A) \, . 
    \end{align*}
\end{lemma}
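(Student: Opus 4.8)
The plan is to prove this by a direct change-of-measure (importance-weighting) argument localized at stage $h$; no other stages or the sequential structure of $\nu$ play any role, so admissibility matters only insofar as it makes \cref{ass:concentrability} applicable. Since $\cS_h$ and $\cA$ are finite, I would first expand the expectation under $\nu_h$ as a finite sum,
\[
    \bigE_{(S, A) \sim \nu_h} f(S, A)
    = \sum_{(s_h, a_h) \in \cS_h \times \cA} \nu_h(s_h, a_h) f(s_h, a_h) \, ,
\]
and then reweight each summand so that the data distribution $\mu_h$ appears in place of $\nu_h$.

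The core step is to split the sum according to whether $\mu_h(s_h, a_h) > 0$. On the support of $\mu_h$ I would write
\[
    \nu_h(s_h, a_h) f(s_h, a_h)
    = \frac{\nu_h(s_h, a_h)}{\mu_h(s_h, a_h)} \, \mu_h(s_h, a_h) f(s_h, a_h)
    \le \conc \, \mu_h(s_h, a_h) f(s_h, a_h) \, ,
\]
where the inequality uses the concentrability bound $\nu_h / \mu_h \le \conc$ from \cref{ass:concentrability} (valid since $\nu$ is admissible, \cref{def:admissible dist}) together with the hypothesis $f \ge 0$, which is exactly what lets the ratio bound pass through the factor $\mu_h(s_h, a_h) f(s_h, a_h) \ge 0$ without sign issues. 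Summing these terms and recognizing the right-hand side as $\conc \, \bigE_{(S, A) \sim \mu_h} f(S, A)$ closes the argument.

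The only subtlety, and what I would treat as the main (minor) obstacle, is the set of pairs with $\mu_h(s_h, a_h) = 0$, where the ratio is undefined. Here I would observe that \cref{ass:concentrability} itself already forces $\nu_h(s_h, a_h) = 0$ on such pairs: otherwise the ratio $\nu_h / \mu_h$ would be infinite and could not be bounded by the finite constant $\conc$. Consequently these pairs contribute nothing to the left-hand sum, and the split over the support of $\mu_h$ is lossless. This confirms that the inequality holds for the arbitrary admissible $\nu$, stage $h$, and nonnegative $f$ under consideration, completing the proof.
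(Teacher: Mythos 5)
Your proof is correct and follows essentially the same route as the paper: a direct change-of-measure argument at stage $h$, rewriting $\nu_h = (\nu_h/\mu_h)\,\mu_h$ and bounding the ratio by $\conc$ using non-negativity of $f$. The only difference is that you explicitly handle the pairs where $\mu_h(s_h,a_h)=0$ (noting concentrability forces $\nu_h=0$ there), a point the paper's proof passes over silently; this is a minor but welcome refinement, not a different approach.
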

\begin{proof}
    Let $f: \cS \times \cA \to [0, \infty)$ be any non-negative function.
    Let $h \in [H]$ be any stage.
    Then,
    \begin{align*}
        \bigE_{(S, A) \sim \nu_h} f(S, A)
        &= \int_{z \in \cS_h \times \cA} f(z) \nu_h(z) dz \\
        &= \int_{z \in \cS_h \times \cA} f(z) \frac{\nu_h(z)}{\mu_h(z)} \mu_h(z) dz \\
        &\le \int_{z \in \cS_h \times \cA} f(z) \conc \mu_h(z) dz \\
        &= \conc \bigE_{(S, A) \sim \mu_h} f(S, A) \, ,
    \end{align*}
    where the inequality holds by applying \cref{ass:concentrability}, and noting that $f$ is non-negative.
    This implies the desired result, since $f$ and $h$ were arbitrary.
\end{proof}

\newpage
\section{Lemmas Related to Least-squares} \label{sec:lemmas related to least squares}

\begin{lemma}[Least-squares Error Decomposition] \label{lem:least squared error decomposition}
    Let $\lambda>0, \theta_\star \in \bR^d$ and $n \in \N^+$, 
    For all $k \in [n]$, let
    \begin{align*}
    &A_k \in \bR^d, \,\,
    \gamma_k \in \bR, \,\,
    \tilde Y_k = \ip{A_k, \theta_\star} + \gamma_k, \,\,
    Y_k = \tilde Y_k+\Delta_k, \\
    &V = \lambda I + \sum_{t=1}^n A_t A_t^\top, \,\,
    \hat\theta = V^{-1}\sum_{t=1}^n A_t Y_t, \,\,
    \iota = \sum_{t=1}^n A_t \gamma_t, \,\,
    \Delta = (\Delta_t)_{t \in [n]} \, .
    \end{align*} 
    Then,
    \begin{align*}
        \norm{\hat \theta - \theta_\star}_{V}
        &\le \sqrt{\lambda}\norm{\theta_\star}_2 + \norm{\Delta}_\infty \sqrt{n} + \norm{\iota}_{V^{-1}} \, .
    \end{align*}    
\end{lemma}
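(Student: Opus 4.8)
The plan is to bound the weighted error $\norm{\hat\theta - \theta_\star}_V$ by expanding the definition of $\hat\theta$ and isolating three contributions: a regularization term from $\lambda$, a systematic-bias term from the misspecification $\Delta$, and a noise term from $\gamma$ (captured by $\iota$). The natural tool is the triangle inequality in the $V$-norm (which is a genuine norm since $V \succeq \lambda I \succ 0$), applied after rewriting $\hat\theta - \theta_\star$ as a sum of three algebraically clean pieces.

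\textbf{Key steps.} First I would substitute $Y_t = \ip{A_t,\theta_\star} + \gamma_t + \Delta_t$ into the definition $\hat\theta = V^{-1}\sum_t A_t Y_t$, giving
\begin{align*}
\hat\theta = V^{-1}\sum_{t=1}^n A_t \ip{A_t,\theta_\star} + V^{-1}\sum_{t=1}^n A_t\gamma_t + V^{-1}\sum_{t=1}^n A_t \Delta_t \, .
\end{align*}
Using $\sum_t A_t A_t^\top = V - \lambda I$, the first term equals $V^{-1}(V-\lambda I)\theta_\star = \theta_\star - \lambda V^{-1}\theta_\star$. Hence
\begin{align*}
\hat\theta - \theta_\star = -\lambda V^{-1}\theta_\star + V^{-1}\iota + V^{-1}\sum_{t=1}^n A_t\Delta_t \, .
\end{align*}
Applying the triangle inequality for $\norm{\cdot}_V$ then reduces the problem to bounding the three terms separately. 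For the first, $\norm{\lambda V^{-1}\theta_\star}_V = \lambda\sqrt{\theta_\star^\top V^{-1}\theta_\star} \le \lambda \cdot \lambda^{-1/2}\norm{\theta_\star}_2 = \sqrt{\lambda}\,\norm{\theta_\star}_2$, using $V^{-1}\preceq \lambda^{-1}I$. For the noise term, $\norm{V^{-1}\iota}_V = \sqrt{\iota^\top V^{-1}\iota} = \norm{\iota}_{V^{-1}}$ directly. For the bias term, writing $A = (A_t)_t$ as the design and $\sum_t A_t\Delta_t = A^\top\Delta$ in matrix form, I would compute $\norm{V^{-1}A^\top\Delta}_V^2 = \Delta^\top A V^{-1} A^\top \Delta \le \Delta^\top\Delta = \norm{\Delta}_2^2$, since $AV^{-1}A^\top \preceq I$ (because $A^\top A = V - \lambda I \preceq V$); then bound $\norm{\Delta}_2 \le \sqrt{n}\,\norm{\Delta}_\infty$. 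Summing the three bounds gives the claim.

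\textbf{Main obstacle.} The only nontrivial inequality is the operator bound $A V^{-1} A^\top \preceq I$ used for the misspecification term; everything else is a direct application of $V^{-1}\preceq\lambda^{-1}I$ and the triangle inequality. I would verify $AV^{-1}A^\top\preceq I$ by noting $A^\top A \preceq A^\top A + \lambda I = V$, so $V^{-1/2}A^\top A V^{-1/2}\preceq I$, and $AV^{-1}A^\top$ shares its nonzero eigenvalues with $V^{-1/2}A^\top A V^{-1/2}$. This is a standard fact but is the one place where care is needed to get the constant exactly $\sqrt{n}$ rather than something larger.
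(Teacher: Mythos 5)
Your proof is correct and takes essentially the same route as the paper: the identical decomposition $\hat\theta-\theta_\star=-\lambda V^{-1}\theta_\star+V^{-1}\iota+V^{-1}\sum_{t=1}^n A_t\Delta_t$, the triangle inequality in the $V$-norm, and the same three per-term bounds (including $\norm{V^{-1}\iota}_V=\norm{\iota}_{V^{-1}}$ and $V^{-1}\preceq\lambda^{-1}I$ for the regularization term). The only difference is cosmetic: you verify the misspecification bound inline via the operator inequality $AV^{-1}A^\top\preceq I$, whereas the paper invokes the same fact as a standalone projection bound (\cref{lem:projection bound}).
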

\begin{proof}
    We begin by decomposing the targets used in $\hat \theta$ as follows
    \begin{align*}
        \hat \theta 
        &= V^{-1} \sum_{t=1}^{n} A_t Y_t \\
        &= V^{-1} \sum_{t=1}^{n} A_t \left(\ip{A_t, \theta_\star} + \gamma_t + \Delta_t\right) \\
        &= \left(\lambda I + \sum_{t=1}^n A_t A_t^\top\right)^{-1} \left(\sum_{t=1}^{n} A_t A_t^\top \theta_\star + \lambda I \theta_\star - \lambda I \theta_\star\right) + V^{-1} \sum_{t=1}^{n} A_t \left(\gamma_t + \Delta_t\right) \\
        &= \theta_\star + \lambda V^{-1} \theta_\star + V^{-1} \sum_{t=1}^{n} A_t \gamma_t + V^{-1} \sum_{t=1}^{n} A_t \Delta_t \, .
    \end{align*}
    Then, subtracting $\theta_\star$ from both sides and taking the matrix $V$ weighted norm of both sides gives us that
    \begin{align*}
        \norm{\hat \theta - \theta_\star}_V
        &= \norm{\lambda V^{-1} \theta_\star + V^{-1} \sum_{t=1}^{n} A_t \gamma_t + V^{-1} \sum_{t=1}^{n} A_t \Delta_t}_V \\
        &\le \lambda \norm{\theta_\star}_{V^{-1}} + \norm{\sum_{t=1}^{n} A_t \gamma_t}_{V^{-1}} + \norm{\sum_{t=1}^{n} A_t \Delta_t}_{V^{-1}} \\
        &\le \frac{\lambda}{\lambdamin(V)} \norm{\theta_\star}_2 + \norm{\iota}_{V^{-1}} + \norm{\Delta}_\infty \sqrt{n} \\
        &\le \sqrt{\lambda} \norm{\theta_\star}_2 + \norm{\iota}_{V^{-1}} + \norm{\Delta}_\infty \sqrt{n} \, ,
    \end{align*}
    where the second inequality used that $\norm{V^{-1}} \le \lambdamax(V^{-1}) = 1/\lambdamin(V)$ and \cref{lem:projection bound} to bound $\norm{\sum_{t=1}^{n} A_t \Delta_t}_{V^{-1}}$.
\end{proof}

\begin{lemma}[Least-squares Noise Bound] \label{lem:least squares noise cover bound}
    There is an event $\cE_1$, which occurs with probability at least $1 - \delta/3$, such that under event $\cE_1$, 
    for all $h \in [H]$, 
    for all $G \in \bbG$,
    and $\theta_{h+1:H+1} \in \cB(\modthetabound)^{H-h+1}$,
    it holds that
    \begin{align*}
        &\norm{\sum_{j \in [\nh]} \phi_h^j \left(\bigE_{\tau \sim F^j_{G, h+1}}\left[r^j_{h:\tau-1} + \bar v_{\theta_{h+1:H+1}}\left(s^j_{\tau}\right)\right] 
            - \bigE_{\trajectoryrand \sim \bP_{\behavepi, s_h^j, a_h^j}} \bigE_{\tau \sim F_{G, \trajectoryrand, h+1}} \left[R_{h:\tau-1} + \bar v_{\theta_{h+1:H+1}}(S_\tau)\right]\right)}_{X_h^{-1}}
        \le \bar \beta \, ,
    \end{align*}
    where $\bar \beta$ is defined in \cref{def:bar-beta}.
\end{lemma}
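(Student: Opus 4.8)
The plan is to recognize this as a standard self-normalized least-squares noise bound combined with a two-fold covering argument over the guesses $G\in\bbG$ and the value-function parameters $\theta_{h+1:H+1}$. Fix a stage $h\in[H]$ and abbreviate the per-trajectory noise as
\[
\gamma_j(G,\theta_{h+1:H+1})
=\bigE_{\tau \sim F^j_{G, h+1}}\!\left[r^j_{h:\tau-1}+\bar v_{\theta_{h+1:H+1}}(s^j_{\tau})\right]
-\bigE_{\trajectoryrand \sim \bP_{\behavepi, s_h^j, a_h^j}}\bigE_{\tau \sim F_{G, \trajectoryrand, h+1}}\!\left[R_{h:\tau-1}+\bar v_{\theta_{h+1:H+1}}(S_\tau)\right].
\]
Conditioned on $(s_h^j,a_h^j)$, the continuation of trajectory $j$ is distributed as $\bP_{\behavepi,s_h^j,a_h^j}$, so the first term is an unbiased estimate of the second; hence each $\gamma_j$ is conditionally mean zero given the stage-$h$ state-action pair, and since the target $r^j_{h:\tau-1}+\bar v_{\theta_{h+1:H+1}}(s^j_\tau)$ lies in $[0,2H]$, we have $|\gamma_j|\le 2H$. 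Because $\phi_h^j$ is measurable with respect to $(s_h^j,a_h^j)$ and the trajectories are independent across $j$, the vectors $\phi_h^j\gamma_j$ form a conditionally centered, bounded martingale-difference sequence.

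For a fixed pair $(G,\theta_{h+1:H+1})$ I would then invoke a self-normalized concentration inequality (the standard Abbasi-Yadkori--style bound, or equivalently a covering of the unit ball in the $X_h$ geometry followed by Hoeffding's inequality, \cref{lem:hoeffdings inequality}) to obtain, with probability at least $1-\delta'$,
\[
\Bigl\|\textstyle\sum_{j\in[\nh]}\phi_h^j\gamma_j(G,\theta_{h+1:H+1})\Bigr\|_{X_h^{-1}}
\le c\,H\sqrt{d\log\!\bigl(1+\nh\featurebound^2/(d\lambda)\bigr)+\log(1/\delta')}
\]
for an absolute constant $c$, where the noise scale $H$ enters through $|\gamma_j|\le 2H$ and the $d\log(\cdots)$ term is the usual log-determinant of $X_h$.

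The remaining and most delicate step is to upgrade this pointwise statement to a bound that holds uniformly over all $G\in\bbG$ and all $\theta_{h+1:H+1}\in\cB(\modthetabound)^{H-h+1}$. I would take an $\xi$-net $C_\xi^\bbG$ of $\bbG$ (of cardinality $|C_\xi^\bbG|\le(1+2\usedtobesqrtdoneplusone/\xi)^{dHd_0}$, \cref{def:G cover size}) together with an $\xi$-net of the parameter ball, apply the union bound over the product net with $\delta'=\delta/(3H|C_\xi^\bbG|\cdot|\text{net}|)$, and then control the discretization error. The obstacle is precisely this discretization error: the stopping-time law $F^j_{G,h+1}$ is a product of up to $H$ skip-probabilities $\omega_G(\cdot)$, each depending on $G$ through $\range^G$, so the map $G\mapsto\gamma_j(G,\cdot)$ can have a Lipschitz constant exponential in $H$. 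The linear-interpolation form of $\omega_G$ in \cref{eq:omega} makes each factor Lipschitz in $G$, and \cref{lem:G cover useful results} (in particular the covering bound \cref{eq:omega cover bound}) quantifies how the product changes; this is exactly what yields the error term $\modxi$ (\cref{def:modxi}), whose $(2\sqrt{\nh}\featurebound\modthetabound/(H^{3/2}d))^{H}$ factor reflects the exponential-in-$H$ sensitivity. The $\xi$-Lipschitzness of $\bar v_{\theta_{h+1:H+1}}$ in $\theta$ (via $\featurebound$ and clipping) handles the parameter net analogously.

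Finally I would combine the pieces: choosing $\xi$ exponentially small in $H$ makes $\modxi$ negligible while only inflating $\log|C_\xi^\bbG|=\tilde\Theta(dHd_0)$ by polylogarithmic factors, so that the dominant contribution to the union-bounded deviation is $H\sqrt{\log|C_\xi^\bbG|}=\tilde\Theta(H^{3/2}d)$. Taking $\cE_1$ to be the intersection of the good events over the net and over $h\in[H]$ and collecting constants yields the claimed uniform bound by $\bar\beta=\tilde\cO(H^{3/2}d)$ (\cref{def:bar-beta}) with probability at least $1-\delta/3$.
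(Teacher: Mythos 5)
Your overall skeleton --- a product net over $(G,\theta_{h+1:H+1})$, a self-normalized concentration bound at each net point (using that the noise is conditionally centered and $O(H)$-bounded given $(s_h^j,a_h^j)$), a union bound over the net and over $h$, and a Lipschitz argument to pass from the net to arbitrary $(G,\theta_{h+1:H+1})$ --- is exactly the paper's proof. The gap is in the discretization step, and it is quantitative but real. You claim that $G\mapsto\gamma_j(G,\cdot)$ can have Lipschitz constant exponential in $H$ because $F^j_{G,h+1}$ is a product of up to $H$ skip probabilities, and you invoke $\modxi$ (\cref{def:modxi}) as the resulting discretization error. Both claims are off. Since each $\omega_G(\cdot)\in[0,1]$, differences of such products telescope additively, and \cref{lem:G cover useful results} shows the targets move by only $(H-h+1)\,6\sqrt{2d}H\featurebound\xi/\alpha+\sum_{t}\kappa_{t+1}$, i.e.\ polynomially in $H$. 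The exponential factor $\left(2\sqrt{\nh}\featurebound\modthetabound/(H^{3/2}d)\right)^{H}$ inside $\modxi$ has a different origin entirely: it arises in \cref{lem:G cover q-max-min bound} from covering the \emph{recursively defined} confidence sets $\Theta_{G,h}$, where a perturbation of $G$ propagates through the least-squares maps of all later stages and is amplified by roughly $\sqrt{\nh}\featurebound/\sqrt{\lambda}$ per stage. The present lemma quantifies over the fixed ball $\cB(\modthetabound)^{H-h+1}$ precisely so that this compounding never enters.

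The misattribution has a consequence: your fix of taking $\xi$ exponentially small in $H$ does not cost ``only polylogarithmic factors''; it multiplies $\log|C_\xi^\bbG|$ by $\Theta(H)$ (since $\log|C_\xi^\bbG|\approx dHd_0\log(1/\xi)$ and $\log(1/\xi)=\Theta(H)$), so your final bound becomes $H\sqrt{dH^2d_0\cdot\mathrm{polylog}}=\tilde\cO(H^2d)$, which does not establish the lemma with $\bar\beta=\tilde\cO(H^{3/2}d)$ as defined in \cref{def:bar-beta} (and would propagate a spurious $\sqrt{H}$ into $\beta$ and the final sample complexity). The paper instead takes $\xi$ only polynomially small, $\xi^{-1}=14\sqrt{n}\sqrt{2d}H^2\featurebound\alpha^{-1}$, controls the two off-net discrepancy terms via the projection bound (\cref{lem:projection bound}) by $7\sqrt{2d}H^2\featurebound\xi\alpha^{-1}\sqrt{\nh}$ each (summing to at most $1$ under this choice of $\xi$), and thereby keeps $\log|C_\xi^\bbG\times C_\xi^\Theta|=\tilde\Theta(dHd_0)$, which is what yields $\bar\beta=\tilde\cO(H^{3/2}d)$.
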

\begin{proof} 
    We begin the proof by showing two useful results (namely \cref{eq:target cover bound} and \cref{eq:high probability least squares noise bound}), which will be needed later in the proof.
    
    By \cref{lem:cover number ball}, we know there exists a set $C_\xi \subset \cB(a), a, \xi > 0$ with $|C_\xi| = (1 + 2a/\xi)^d$ such that for any $x \in \cB(a)$ there exists a $y \in C_\xi$ such that $\norm{x - y}_2 \le \xi$.
    Define the set $C_\xi^\Theta = \bigtimes_{h \in [2:H+1]} C_\xi \subset \cB(\modthetabound)^H$ with $|C_\xi^\Theta| = (1 + 2\modthetabound/\xi))^{dH}$.
    Then, for any $\theta_{2:H+1} \in \cB(\modthetabound)^H$, there exists a $\tilde\theta_{2:H+1} \in C_\xi^\Theta$ such that
    \begin{align*}
    \norm{\theta_h-\tilde\theta_h}_2 \le \xi \quad \text{for all } h\in[2:H+1] \, . 
    \end{align*}
    Which implies that for any $h \in [H], t \in [h+1:H+1], s \in \cS_t$ and $\theta_{h+1:H+1}, \theta^\sim_{h+1:H+1}$ as defined above
    \begin{align}
        \left|\bar v_{\theta_t}(s) - \bar v_{\theta^\sim_t}(s) \right|
        &\le \left|\clip_{[0, H]} \max_{a \in \cA} \ip{\phi(s, a), \theta_t} - \clip_{[0, H]} \max_{a \in \cA} \ip{\phi(s, a), \theta^\sim_t} \right| \nonumber \\
        &\le \left|\max_{a \in \cA} \ip{\phi(s, a), \theta_t - \theta^\sim_t} \right| \nonumber \\
        &\le \max_{a \in \cA} \norm{\phi(s, a)}_2 \norm{\theta_t - \theta^\sim_t}_2 \nonumber \\
        &\le \featurebound \xi \, . 
        \label{eq:value func cover bound}
    \end{align}
    Let $\xi > 0$. 
    Combining \cref{eq:value func cover bound} with \cref{lem:G cover useful results}, we get that there exists a set $C_\xi^\bbG \times C_\xi^\Theta \subset \bbG \times \cB(\modthetabound)^H$ with $|C_\xi^\bbG \times C_\xi^\Theta| \le (1 + 2\usedtobesqrtdoneplusone\modthetabound/\xi))^{dH(d_0+1)}$ such that, 
    for any $(G, \theta_{2:H+1}) \in \bbG \times \cB(\modthetabound)^H$, there exists a $(\tilde G, \theta_{2:H+1}^\sim) \in C_\xi^\bbG \times C_\xi^\Theta$ such that,
    for any $h \in [H]$, 
    for any $u \in [h]$ and trajectory $\trajectory = (s_t, a_t, r_t)_{t \in [u, H+1]}$
    it holds that   
    \begin{align}
        &\left|\bigE_{\tau \sim F_{G, \trajectory, h+1}}\left[r_{h:\tau-1} + \bar v_{\theta_{h+1:H+1}} \left(s_{\tau}\right)\right] 
            - \bigE_{\tau \sim F_{\tilde G, \trajectory, h+1}}\left[r_{h:\tau-1} + \bar v_{\theta^\sim_{h+1:H+1}} \left(s_{\tau}\right)\right] \right| \nonumber \\
        &\le (H-h+1)6 \sqrt{2d} H \featurebound \xi/\alpha + \sum_{t=h}^{H} \featurebound \xi \nonumber \\
        &= (H-h+1)7 \sqrt{2d} H \featurebound \xi/\alpha \, .
        \label{eq:target cover bound}
    \end{align}
    \cref{eq:target cover bound} is the first useful result we alluded to at the beginning of the proof.
    
    Now, we show the second useful result, which is a bound under a high probability event.
    For any $(\tilde G, \theta_{2:H+1}^\sim) \in C_\xi^\bbG \times C_\xi^\Theta$ and $h \in [H]$ define the event
    \begin{align*}
        &\cE_1^{\tilde G, \theta^\sim, h}
        = \left\{\norm{\iota_{\tilde G, \theta^\sim, h}}_{X_h^{-1}} \le \sqrt{2 H^2 \log\left(\frac{3H |C_\xi^\bbG \times C_\xi^\Theta|}{\delta}\right) + \log\left(\sqrt{\frac{\det(X_h)}{\det(\lambda I)}}\right)}\right\} \\
        &\text{where  } \iota_{\tilde G, \theta^\sim, h} 
        = \sum_{j \in [\nh]} \phi_h^j \left(\bigE_{\tau \sim F^j_{\tilde G, h+1}}\left[r^j_{h:\tau-1} + \bar v_{\theta^\sim_{h+1:H+1}} \left(s^j_{\tau}\right)\right] 
            -  \bigE_{\trajectoryrand \sim \bP_{\behavepi, s_h^j, a_h^j}} \bigE_{\tau \sim F_{\tilde G, \trajectoryrand, h+1}} \left[R_{h:\tau-1} + \bar v_{\theta_{h+1:H+1}^\sim}(S_\tau)\right]\right) \, .
    \end{align*}
    Notice that $\iota_{\tilde G, \theta^\sim, h}$ is $H$-subgaussian.
    Thus, we can use Theorem 1 from \citep{abbasi2011improved} to get that the event $\cE_1^{\tilde G, \theta^\sim, h}$ occurs with probability at least $1- \delta/(3H |C_\xi^\bbG \times C_\xi^\Theta|)$.
    Define 
    \begin{align*}
        \cE_1 = \bigcap\limits_{(\tilde G, \theta^\sim) \in C_\xi^\bbG \times C_\xi^\Theta, h \in [H]} \cE_1^{\tilde G, \theta^\sim, h} \, .
    \end{align*}
    Then, by applying a union bound over $\tilde G, \theta^\sim, h$ we have that the event $\cE_1$ occurs with probability at least $1 - \delta/3$, and under event $\cE_1$, for all $(\tilde G, \theta^\sim) \in C_\xi^\bbG \times C_\xi^\Theta, h \in [H]$, it holds that
    \begin{align}
        &\norm{\sum_{j \in [\nh]} \phi_h^j \left(\bigE_{\tau \sim F^j_{\tilde G, h+1}}\left[r^j_{h:\tau-1} + \bar v_{\theta^\sim_{h+1:H+1}} \left(s^j_{\tau}\right)\right] 
            -  \bigE_{\trajectoryrand \sim \bP_{\behavepi, s_h^j, a_h^j}} \bigE_{\tau \sim F_{\tilde G, \trajectoryrand, h+1}} \left[R_{h:\tau-1} + \bar v_{\theta_{h+1:H+1}^\sim}(S_\tau)\right]\right)}_{X_h^{-1}} \nonumber \\
        &\le \sqrt{2 H^2 \log\left(\frac{3H (1 + 2\usedtobesqrtdoneplusone\modthetabound/\xi))^{dH(d_0+1)}}{\delta}\right) + \log\left(\sqrt{\frac{\det(X_h)}{\det(\lambda I)}}\right)} \, . 
        \label{eq:high probability least squares noise bound}
    \end{align}
    
    Now with all of the above results in hand, we turn to finally proving \cref{lem:least squares noise cover bound}.
    Let $G \in \bbG$ as in the lemma statement.
    Then, by \cref{eq:target cover bound} we know that there exists a $(\tilde G, \theta_{2:H+1}^\sim) \in C_\xi^\bbG \times C_\xi^\Theta$ such that,
    for any $h \in [H]$, 
    for any $u \in [h]$ and trajectory $\trajectory = (s_t, a_t, r_t)_{t \in [u:H+1]}$, 
    it holds that 
    \begin{align}
        \left|\bigE_{\tau \sim F_{G, \trajectory, h+1}}\left[r_{h:\tau-1} + \bar v_{\theta_{h+1:H+1}} \left(s_{\tau}\right)\right] 
            - \bigE_{\tau \sim F_{\tilde G, \trajectory, h+1}}\left[r_{h:\tau-1} + \bar v_{\theta^\sim_{h+1:H+1}} \left(s_{\tau}\right)\right] \right|
        \le 7 \sqrt{2d} H^2 \featurebound \xi/\alpha \, .
        \label{eq:target cover bound final}
    \end{align}
    Let $\tilde G, \theta_{2:H+1}^\sim$ be as defined above.
    Then, for any $h \in [H]$, by using the triangle inequality we can write
    \begin{align*}
        &\norm{\sum_{j \in [\nh]} \phi_h^j \left(\bigE_{\tau \sim F^j_{G, h+1}}\left[r^j_{h:\tau-1} + \bar v_{\theta_{h+1:H+1}}\left(s^j_{\tau}\right)\right] 
            - \bigE_{\trajectoryrand \sim \bP_{\behavepi, s_h^j, a_h^j}} \bigE_{\tau \sim F_{G, \trajectoryrand, h+1}} \left[R_{h:\tau-1} + \bar v_{\theta_{h+1:H+1}}(S_\tau)\right]\right)}_{X_h^{-1}} \\
        &\le \norm{\sum_{j \in [\nh]} \phi_h^j \left(\bigE_{\tau \sim F^j_{\tilde G, h+1}}\left[r^j_{h:\tau-1} + \bar v_{\theta^\sim_{h+1:H+1}}\left(s^j_{\tau}\right)\right] 
            - \bigE_{\trajectoryrand \sim \bP_{\behavepi, s_h^j, a_h^j}} \bigE_{\tau \sim F_{\tilde G, \trajectoryrand, h+1}} \left[R_{h:\tau-1} + \bar v_{\theta_{h+1:H+1}^\sim}(S_\tau)\right]\right)}_{X_h^{-1}} \\
        &\quad + \norm{\sum_{j \in [\nh]} \phi_h^j \left(\bigE_{\tau \sim F^j_{G, h+1}}\left[r^j_{h:\tau-1} + \bar v_{\theta_{h+1:H+1}} \left(s^j_{\tau}\right)\right] 
            - \bigE_{\tau \sim F^j_{\tilde G, h+1}}\left[r^j_{h:\tau-1} + \bar v_{\theta^\sim_{h+1:H+1}} \left(s^j_{\tau}\right)\right]\right)}_{X_h^{-1}} \\
        &\quad + \norm{\sum_{j \in [\nh]} \phi_h^j \left(\bigE_{\trajectoryrand \sim \bP_{\behavepi, s_h^j, a_h^j}} \sqbraces{\bigE_{\tau \sim F_{\tilde G, \trajectoryrand, h+1}} \left[R_{h:\tau-1} + \bar v_{\theta_{h+1:H+1}^\sim}(S_\tau)\right] 
            - \bigE_{\tau \sim F_{G, \trajectoryrand, h+1}} \left[R_{h:\tau-1} + \bar v_{\theta_{h+1:H+1}}(S_\tau)\right]}\right)}_{X_h^{-1}} \, .
    \end{align*}
    The first term can be bounded by \cref{eq:high probability least squares noise bound}, if we are under event $\cE_1$.
    For the second and third term we make use of \cref{lem:projection bound}, which ensures that for any sequence $(b_j)_{j \in [\nh]}$ such that $|b_j| \le c \in \bR$ the following holds
    \begin{align*}
        \norm{\sum_{j \in [\nh]} \phi_h^j b_j}_{X_h^{-1}} 
        \le c \sqrt{\nh} \, .
    \end{align*}
    For the second term and third term the respective $b_j$ terms can be bounded by using \cref{eq:target cover bound final}, giving us $c = 7 \sqrt{2d} H^2 \featurebound \xi/\alpha$.
    
    Putting the above three bounds together we have that under event $\cE_1$, which occurs with probability at least $1 - \delta/3$, 
    for all $h \in [H]$, 
    for all $G \in \bbG$,
    and $\theta_{h+1:H+1} \in \cB(\modthetabound)^{H-h+1}$,
    it holds that
    \begin{align*}
        &\norm{\sum_{j \in [\nh]} \phi_h^j \left(\bigE_{\tau \sim F^j_{G, h+1}}\left[r^j_{h:\tau-1} + \bar v_{\theta_{h+1:H+1}}\left(s^j_{\tau}\right)\right] 
            - \bigE_{\trajectoryrand \sim \bP_{\behavepi, s_h^j, a_h^j}} \bigE_{\tau \sim F_{G, \trajectoryrand, h+1}} \left[R_{h:\tau-1} + \bar v_{\theta_{h+1:H+1}}(S_\tau)\right]\right)}_{X_h^{-1}} \\
        &\le \sqrt{2 H^2 \log\left(\frac{3H (1 + 2\usedtobesqrtdoneplusone\modthetabound/\xi)^{dH(d_0+1)}}{\delta}\right) + \log\left(\sqrt{\frac{\det(X_h)}{\det(\lambda I)}}\right)} + 14 \sqrt{\nh} \sqrt{2d} H^2 \featurebound \xi/\alpha \\
        &= H \sqrt{2dH(d_0+1)\log\left(1 + 2\usedtobesqrtdoneplusone\modthetabound/\xi\right) + \log(\det(X_h)) - d \log(\lambda) + \log\left(\frac{3H}{\delta}\right)} + 14\sqrt{\nh} \sqrt{2d} H^2 \featurebound \xi/\alpha \\
        &\le H \sqrt{2dH(d_0+1)\log\left(1 + 2\usedtobesqrtdoneplusone\modthetabound/\xi\right) + d \log (\lambda + \nh \featurebound^2/d) - d \log(\lambda) + \log\left(\frac{3H}{\delta}\right)} + 14\sqrt{\nh} \sqrt{2d} H^2 \featurebound \xi/\alpha \, , 
    \end{align*}
    where the last inequality used the Determinant-Trace Inequality (see Lemma 10 in \citep{abbasi2011improved}).
    Setting $\xi^{-1} = 14 \sqrt{n} \sqrt{2d} H^2 \featurebound\alpha^{-1}$,
    we get that the above display is
    \begin{align}
        &\le H \sqrt{2dH(d_0+1)\log\left(1 + 28\sqrt{2d} H^2 \usedtobesqrtdoneplusone \modthetabound \featurebound \alpha^{-1}\right) + d \log (\lambda + \nh \featurebound^2/d) - d \log(\lambda) + \log\left(\frac{3H}{\delta}\right)} \nonumber + 1 \nonumber \\
        &= \bar \beta
        = \tilde \cO \left(H^{3/2} d\right) \, . 
        \label{eq:bar-beta definition}
    \end{align}
    The last equality holds by plugging in parameter values according to \cref{sec:parameter settings}.
\end{proof}

\newpage
\section{Lemmas Related to Covering $\bbG$} \label{sec:useful results}

\begin{lemma} \label{lem:opt prob cond concentrates}
    There is an event $\cE_2$, 
    that occurs with probability at least $1 - \delta/3$, 
    such that under event $\cE_2$, 
    for all $G \in \bbG$,
    and for all $h \in [H]$, 
    it holds that 
    \begin{align*}
        &\abs{\bigE_{(S, A) \sim \mu_h} \left[\max_{\theta \in \Theta_{G, h}} \bar q_{\theta}(S, A) - \min_{\theta \in \Theta_{G, h}} \bar q_{\theta}(S, A)\right]
            - \frac{1}{\nh} \sum_{i \in [\nh]} \left(\max_{\theta \in \Theta_{G, h}} \bar q_{\theta}(s_h^i, a_h^i) - \min_{\theta \in \Theta_{G, h}} \bar q_{\theta}(s_h^i, a_h^i)\right)} \\
        &\le \frac{H}{\sqrt{\nh}} \sqrt{\log\left(\frac{6H|C_\xi^\bbG|}{\delta}\right)} 
            + 2\modxi \, , 
    \end{align*}
    where $|C_\xi^\bbG|, \modxi$, are defined in \cref{def:G cover size,def:modxi}.
\end{lemma}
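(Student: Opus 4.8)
The plan is to prove a uniform-over-$\bbG$ concentration bound by a covering argument, in the same spirit as \cref{lem:expected elliptical potential bound}. For each $G\in\bbG$ and $h\in[H]$ introduce the bounded ``width'' function
\[
g_{G,h}(s,a)=\max_{\theta\in\Theta_{G,h}}\bar q_\theta(s,a)-\min_{\theta\in\Theta_{G,h}}\bar q_\theta(s,a)\in[0,H],
\]
so that the object to control is $|\tfrac1{\nh}\sum_{i\in[\nh]}g_{G,h}(s_h^i,a_h^i)-\bigE_{(S,A)\sim\mu_h}g_{G,h}(S,A)|$, uniformly in $G$ and $h$. The containment $g_{G,h}\in[0,H]$ is inherited from the clipping in \cref{eq:bar-q_theta definition} and is precisely what will let me invoke a Hoeffding-type bound per cover element.

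First I would invoke the covering lemma for $\bbG$ (\cref{lem:G cover useful results}) to obtain a finite set $C_\xi^\bbG$ with $|C_\xi^\bbG|\le(1+2\usedtobesqrtdoneplusone/\xi)^{dHd_0}$ (\cref{def:G cover size}) such that every $G\in\bbG$ has a neighbour $\tilde G\in C_\xi^\bbG$ whose induced width function is uniformly close, $|g_{G,h}(s,a)-g_{\tilde G,h}(s,a)|\le\modxi$ for all $(s,a)$ and all $h$, with $\modxi$ as in \cref{def:modxi}. For each fixed $\tilde G\in C_\xi^\bbG$ and each $h\in[H]$ I would then concentrate the $[0,H]$-valued samples $g_{\tilde G,h}(s_h^i,a_h^i)$, $i\in[\nh]$, via Hoeffding's inequality (\cref{lem:hoeffdings inequality}) and union bound over the $H|C_\xi^\bbG|$ pairs $(\tilde G,h)$; this defines $\cE_2$ and produces the leading term $\tfrac{H}{\sqrt{\nh}}\sqrt{\log(6H|C_\xi^\bbG|/\delta)}$. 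Finally, for an arbitrary $G\in\bbG$ I would transfer the bound from its neighbour $\tilde G$ by paying $\modxi$ twice — once to move from the empirical average of $g_{G,h}$ to that of $g_{\tilde G,h}$, and once for the matching population averages — which yields the additive $2\modxi$ in the statement. I would keep $\xi>0$ free, since the trade-off between $|C_\xi^\bbG|$ and $\modxi$ is balanced later, where this lemma is applied.

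The main obstacle is that $\Theta_{G,h}$, and hence $g_{G,h}$, is itself a function of the data: both the ellipsoid geometry through $X_h=\lambda I+\sum_j\phi_h^j(\phi_h^j)^\top$ and the centres $\hat\Theta_{G,h}$ depend on the very stage-$h$ samples that also appear in the empirical average, and $\hat\Theta_{G,h}$ is defined recursively through $\Theta_{G,h+1},\dots,\Theta_{G,H+1}$. The substantive content of the covering step is therefore the \emph{deterministic} perturbation estimate behind $|g_{G,h}-g_{\tilde G,h}|\le\modxi$: I would track how a $\xi$-perturbation of $G$ perturbs the skipping probabilities $\omega_G$ (the smoothness of \cref{eq:omega} is exactly what makes this controllable), how this propagates through the skipping distributions $F_{G,\trajectoryrand,h+1}$ into the least-squares targets defining $\hat\Theta_{G,h}$, and how the resulting error is amplified stage-by-stage by the recursion, giving the $(2\sqrt{\nh}\featurebound\modthetabound/(H^{3/2}d))^{H}$ factor in $\modxi$. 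The delicate point is to arrange this analysis so that the data-dependent geometry is absorbed entirely into the deterministic $\modxi$ term, while only the single data-independent count $|C_\xi^\bbG|$ enters the Hoeffding union bound; once that estimate (packaged in \cref{lem:G cover useful results}) is in hand, the remaining steps are routine.
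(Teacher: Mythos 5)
Your proposal follows essentially the same route as the paper's proof: cover $\bbG$ by a finite set $C_\xi^\bbG$, apply Hoeffding's inequality with a union bound over the $H|C_\xi^\bbG|$ pairs $(\tilde G,h)$ to define $\cE_2$ and obtain the leading $\frac{H}{\sqrt{\nh}}\sqrt{\log(6H|C_\xi^\bbG|/\delta)}$ term, then transfer to an arbitrary $G\in\bbG$ by paying the deterministic perturbation bound $\modxi$ twice, once for the population mean and once for the empirical mean. The only organizational difference is that the paper packages both the width-closeness estimate and the per-cover-element concentration as the two parts of \cref{lem:G cover q-max-min bound} (the former proved by an inductive argument on the confidence sets $\Theta_{G,h}$, with \cref{lem:G cover useful results} by itself controlling only the skipping targets rather than the width functions you attribute to it), which is precisely the stage-by-stage propagation analysis you describe carrying out.
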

\begin{proof}
    Let $G \in \bbG$ be a feasible solution to \cref{opt:bpse start state}. 
    By the first result in \cref{lem:G cover q-max-min bound},
    there exists a set $C_\xi^\bbG \subset \bbG$ such that, there exists a $\tilde G \in C_\xi^\bbG$ such that, for any $h \in [H], (s, a) \in \cS_h \times \cA$, it holds that
    \begin{align}
        \abs{\left(\max_{\theta \in \Theta_{G, h}} \bar q_{\theta}(s, a) - \min_{\theta \in \Theta_{G, h}} \bar q_{\theta}(s, a)\right)
        - \left(\max_{\theta^\sim \in \Theta_{\tilde G, h}} \bar q_{\theta^\sim}(s, a) - \min_{\theta \in \Theta_{\tilde G, h}} \bar q_{\theta^\sim}(s, a)\right)}
        \le \modxi \, .
        \label{eq:result q-max-min cover bound}
    \end{align}
    Select $\tilde G$ as defined above.
    Let $h \in [H]$.
    Using \cref{eq:result q-max-min cover bound}, we know that
    \begin{align}
        \abs{\bigE_{(S, A) \sim \mu_h} \left[\max_{\theta \in \Theta_{G, h}} \bar q_{\theta}(S, A) - \min_{\theta \in \Theta_{G, h}} \bar q_{\theta}(S, A)\right]
            - \bigE_{(S, A) \sim \mu_h} \left[\max_{\theta \in \Theta_{\tilde G, h}} \bar q_{\theta^\sim}(S, A) - \min_{\theta \in \Theta_{\tilde G, h}} \bar q_{\theta^\sim}(S, A)\right]} 
        \le \modxi \, .
        \label{eq:result q-max-min close in expecation}
    \end{align}
    To bound the second term in the absolute value of \cref{eq:result q-max-min close in expecation} to its empirical mean, we can use the second result in \cref{lem:G cover q-max-min bound}, which gives us that under event $\cE_2$
    \begin{align}
        &\Bigg|\bigE_{(S, A) \sim \mu_h} \left[\max_{\theta^\sim \in \Theta_{\tilde G, h}} \bar q_{\theta^\sim}(S, A) - \min_{\theta^\sim \in \Theta_{\tilde G, h}} \bar q_{\theta^\sim}(S, A)\right] 
            - \frac{1}{\nh} \sum_{i \in [\nh]} \left(\max_{\theta^\sim \in \Theta_{\tilde G, h}} \bar q_{\theta^\sim}(s_h^i, a_h^i) - \min_{\theta^\sim \in \Theta_{\tilde G, h}} \bar q_{\theta^\sim}(s_h^i, a_h^i)\right)\Bigg| \nonumber \\
        &\le\frac{H}{\sqrt{\nh}} \sqrt{\log\left(\frac{6H|C_\xi^\bbG|}{\delta}\right)} \, .
       \label{eq:result q-max-min concentration}
    \end{align}
    We can relate the $\tilde G$ in the second term of the absolute value in \cref{eq:result q-max-min concentration} back to $G$, by once again using \cref{eq:result q-max-min cover bound}, to get that 
    \begin{align}
        &\abs{\frac{1}{\nh} \sum_{i \in [\nh]} \left(\max_{\theta^\sim \in \Theta_{\tilde G, h}} \bar q_{\theta^\sim}(s_h^i, a_h^i) - \min_{\theta^\sim \in \Theta_{\tilde G, h}} \bar q_{\theta^\sim}(s_h^i, a_h^i)\right) 
            - \frac{1}{\nh} \sum_{i \in [\nh]} \left(\max_{\theta \in \Theta_{G, h}} \bar q_{\theta}(s_h^i, a_h^i) - \min_{\theta \in \Theta_{G, h}} \bar q_{\theta}(s_h^i, a_h^i)\right)} \nonumber \\
        &\le \modxi \, . 
        \label{eq:result q-max-min close on average}
    \end{align}
    Putting together \cref{eq:result q-max-min close in expecation,eq:result q-max-min concentration,eq:result q-max-min close on average}, and noting that $h$ was arbitrary, gives the desired result.
\end{proof}

\begin{lemma} \label{lem:G cover q-max-min bound}
    Let $\xi > 0$.
    There exists a set $C_\xi^\bbG \subset \bbG$ such that, for any $G \in \bbG$, there exists a $\tilde G \in C_\xi^\bbG$ such that, for any $h \in [H], (s, a) \in \cS_h \times \cA$, it holds that
    \begin{align*}
        \abs{\left(\max_{\theta \in \Theta_{G, h}} \bar q_{\theta}(s, a) - \min_{\theta \in \Theta_{G, h}} \bar q_{\theta}(s, a)\right)
        - \left(\max_{\theta^\sim \in \Theta_{\tilde G, h}} \bar q_{\theta^\sim}(s, a) - \min_{\theta^\sim \in \Theta_{\tilde G, h}} \bar q_{\theta^\sim}(s, a)\right)}
        \le \modxi \, ,
    \end{align*}
    where $|C_\xi^\bbG|, \modxi$, are defined in \cref{def:G cover size,def:modxi}.
    Furthermore, there is an event $\cE_2$, which occurs with probability at least $1 - \delta/3$, such that under event $\cE_2$, 
    for any $\tilde G \in C_\xi^\bbG$, 
    and $h \in [H]$, 
    it holds that
    \begin{align*}
        &\Bigg|\bigE_{(S, A) \sim \mu_h} \left[\max_{\theta^\sim \in \Theta_{\tilde G, h}} \bar q_{\theta^\sim}(S, A) - \min_{\theta^\sim \in \Theta_{\tilde G, h}} \bar q_{\theta^\sim}(S, A)\right] 
            - \frac{1}{\nh} \sum_{i \in [\nh]} \left(\max_{\theta^\sim \in \Theta_{\tilde G, h}} \bar q_{\theta^\sim}(s_h^i, a_h^i) - \min_{\theta^\sim \in \Theta_{\tilde G, h}} \bar q_{\theta^\sim}(s_h^i, a_h^i)\right)\Bigg| \\
        &\le\frac{H}{\sqrt{\nh}} \sqrt{\log\left(\frac{6H|C_\xi^\bbG|}{\delta}\right)} \, .
    \end{align*}
\end{lemma}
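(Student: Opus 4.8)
The plan is to split the claim into a purely deterministic \emph{sensitivity} part (the perturbation $G\mapsto\tilde G$ changes the prediction width only mildly) and a standard \emph{concentration} part (empirical versus population width at a fixed net point), and to build $C_\xi^\bbG$ once so that it serves both. Since $\bbG=(\cB(\usedtobesqrtdoneplusone))^{[2:H]\times[d_0]}$ is a product of at most $Hd_0$ Euclidean balls of radius $\usedtobesqrtdoneplusone$ in $\bR^d$, I would take $C_\xi^\bbG$ to be the product of the per-coordinate $\xi$-nets supplied by \cref{lem:cover number ball}; this gives $|C_\xi^\bbG|\le(1+2\usedtobesqrtdoneplusone/\xi)^{dHd_0}$ (matching \cref{def:G cover size}), with the property that every $G\in\bbG$ admits $\tilde G\in C_\xi^\bbG$ agreeing with it to within $\xi$ in every coordinate.

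For the first (deterministic) claim I would propagate this coordinatewise closeness through the \emph{recursive} definition of $\Theta_{G,h}$ by backward induction on $h=H+1,\dots,1$. The induction hypothesis is that for every $\theta_{h+1:H+1}\in\bigtimes_{u>h}\Theta_{G,u}$ there is a matching $\theta^\sim_{h+1:H+1}\in\bigtimes_{u>h}\Theta_{\tilde G,u}$ whose clipped value functions agree up to some error $e_{h+1}$ uniformly over states, the base case $e_{H+1}=0$ being immediate from $\Theta_{G,H+1}=\Theta_{\tilde G,H+1}=\{\vec 0\}$. Given such a match, the two least-squares targets in \cref{eq:hat-Theta defintion} differ, per sample $j$, by at most the sum of a \emph{skip-distribution} term—bounded using the smoothness of $\omega_G$ in $G$ via \cref{lem:G cover useful results}—and the \emph{value-function} term $e_{h+1}$. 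Feeding this per-sample gap through \cref{lem:projection bound} gives $\norm{\hat\theta_h-\hat\theta^\sim_h}_{X_h}\le(\mathrm{skip}+e_{h+1})\sqrt{\nh}$, and setting $\theta^\sim_h:=\hat\theta^\sim_h+(\theta_h-\hat\theta_h)$ keeps $\theta^\sim_h\in\Theta_{\tilde G,h}$ while preserving the $X_h$-distance; Cauchy--Schwarz with $\norm{\phi(s,a)}_{X_h^{-1}}\le\featurebound/\sqrt{\lambda}$ then yields the recursion $e_h\le(\featurebound\sqrt{\nh}/\sqrt{\lambda})(\mathrm{skip}+e_{h+1})$. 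Unrolling with $\sqrt{\lambda}=H^{3/2}d/\modthetabound$ produces exactly the amplification factor $(2\sqrt{\nh}\featurebound\modthetabound/(H^{3/2}d))^{H}$ in $\modxi$ (the extra $2^{H}$ absorbing the geometric sum and per-stage constants). Since the $\max$ and the $\min$ over $\Theta_{G,h}$ each move by at most $e_h$ under the match, the width difference is at most $2e_h\le\modxi$ uniformly in $(s,a)$.

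For the second (stochastic) claim I would fix $\tilde G\in C_\xi^\bbG$ and $h\in[H]$ and view $g_{\tilde G,h}(s,a):=\max_{\theta\in\Theta_{\tilde G,h}}\bar q_\theta(s,a)-\min_{\theta\in\Theta_{\tilde G,h}}\bar q_\theta(s,a)$ as a function valued in $[0,H]$ (as $\bar q$ is clipped to $[0,H]$). Applying Hoeffding's inequality (\cref{lem:hoeffdings inequality}) to the $\nh$ i.i.d.\ draws $(s_h^i,a_h^i)\sim\mu_h$ gives, at confidence $1-\delta/(3H|C_\xi^\bbG|)$, a deviation of at most $\frac{H}{\sqrt{\nh}}\sqrt{\log(6H|C_\xi^\bbG|/\delta)}$; letting $\cE_2$ be the intersection of these events over all $\tilde G\in C_\xi^\bbG$ and $h\in[H]$ and union bounding gives $\cE_2$ with probability at least $1-\delta/3$ and the stated inequality.

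The main obstacle is the backward induction of the first part: because each stage's confidence set is defined through least-squares predictors that themselves depend on the \emph{later} confidence sets, a perturbation in $G$ is amplified by a factor of order $\sqrt{\nh}$ at every stage, and the bookkeeping must be arranged so that this blow-up is captured exactly by the $(\,\cdot\,)^{H}$ factor in $\modxi$; making the skip-sensitivity constant from \cref{lem:G cover useful results} combine correctly with the value-function error $e_{h+1}$ at each step is the delicate point. A secondary subtlety in the second part is that $\Theta_{\tilde G,h}$ is itself data-dependent, so the Hoeffding step implicitly treats $g_{\tilde G,h}$ as a fixed bounded function per net point; care is needed to ensure this is legitimate, i.e.\ that the population average is taken against the same realized set.
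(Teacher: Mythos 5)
Your proposal is correct and follows essentially the same route as the paper's proof: the same product $\xi$-net over $\bbG$ of size $(1+2\usedtobesqrtdoneplusone/\xi)^{dHd_0}$, the same backward induction that matches elements of $\hat\Theta_{G,h}$ and $\Theta_{G,h}$ to elements of $\hat\Theta_{\tilde G,h}$ and $\Theta_{\tilde G,h}$ via the projection bound and the shift $\theta^\sim_h=\theta_h-\hat\theta_h+\hat\theta^\sim_h$, with Cauchy--Schwarz and $\sqrt{\lambda}=H^{3/2}d/\modthetabound$ yielding the same geometrically amplified radius (the paper's $c_h^\xi$), and the same per-net-point Hoeffding bound with a union bound over $C_\xi^\bbG\times[H]$ for the concentration part. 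The data-dependence subtlety you flag in the Hoeffding step (the confidence sets $\Theta_{\tilde G,h}$ are built from the same data being averaged) is treated no differently in the paper, which likewise applies Hoeffding to the width function at each fixed net point, so your argument matches the paper's proof in both structure and level of rigor.
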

\begin{proof}
    Let $\xi > 0, \kappa_t \ge 0, \forall t \in [2:H+1]$. 
    By \cref{lem:G cover useful results}, there exists a set $C_\xi^\bbG \subset \bbG$ with $|C_\xi^\bbG| \le (1 + 2\usedtobesqrtdoneplusone/\xi))^{dHd_0}$ such that, for any $G \in \bbG$, there exists a $\tilde G \in C_\xi^\bbG$ such that, 
    for any $h \in [H]$,
    for any $\theta_{h+1:H+1}, \theta^\sim_{h+1:H+1} \in \cB(\modthetabound)^{H-h+1}$, such that for all $t \in [h+1:H+1], s \in \cS_t, \left|\bar v_{\theta_t}(s) - \bar v_{\theta^\sim_t}(s) \right| \le \kappa_t$, 
    and for any $j \in [\nh]$,
    it holds that  
    \begin{align}
        &\left|\bigE_{\tau \sim F^j_{G, h+1}}\left[r_{h:\tau-1} + \bar v_{\theta_{h+1:H+1}} \left(s_{\tau}\right)\right] 
            - \bigE_{\tau \sim F^j_{\tilde G, h+1}}\left[r_{h:\tau-1} + \bar v_{\theta^\sim_{h+1:H+1}} \left(s_{\tau}\right)\right] \right| \nonumber \\
        &\le (H-h+1)6 \sqrt{2d} H \featurebound \xi/\alpha + \sum_{t=h}^{H} \kappa_{t+1} \, .
        \label{eq:targets close under cover}
    \end{align}
    For the remainder of the proof let $G, \tilde G$ be as described above.

    We first show the following intermediate result.
    \begin{lemma}
        For all $h \in [H]$, it holds that:
        \begin{enumerate}
            \item 
                For any $\hat \theta_h \in \hat \Theta_{G, h}, \theta_h \in \Theta_{G, h}$, there exists $\hat \theta_h^\sim \in \hat \Theta_{\tilde G, h}, \theta_h^\sim \in \Theta_{\tilde G, h}$ such that 
                \begin{align}
                    \norm{\hat \theta_h - \hat \theta_h^\sim}_{X_h}
                    \le c_h^\xi, 
                    \quad \norm{\theta_h - \theta_h^\sim}_{X_h}
                    \le c_h^\xi \, . 
                \label{eq:Theta cover inductive hypothesis forward}
                \end{align}
            \item
                For any $\hat \theta_h^\sim \in \hat \Theta_{\tilde G, h}, \theta_h^\sim \in \Theta_{\tilde G, h}$, there exists $\hat \theta_h \in \hat \Theta_{G, h}, \theta_h \in \Theta_{G, h}$ such that 
                \begin{align}
                    \norm{\hat \theta_h - \hat \theta_h^\sim}_{X_h}
                    \le c_h^\xi, 
                    \quad \norm{\theta_h - \theta_h^\sim}_{X_h}
                    \le c_h^\xi \, , 
                \label{eq:Theta cover inductive hypothesis backward}
                \end{align}
        \end{enumerate}
        where, 
        \begin{align}
            c_h^\xi = 6 \sqrt{\nh} \sqrt{2d} H^2 \featurebound \xi \alpha^{-1} \left(1 + \sqrt{\nh} \featurebound \modthetabound / (H^{3/2}d)\right)^{H-h} \, .
            \label{eq:c_h definition}
        \end{align}
    \end{lemma}
    \begin{proof}
        \textbf{Proof of result $1.$:}
        To show \cref{eq:Theta cover inductive hypothesis forward} we will use induction.
        The base case is when $h = H$, for which
        \begin{align*}
             \hat \Theta_{G, H} = \hat \Theta_{\tilde G, H} 
             = \left\{X_H^{-1} \sum_{j \in [\nh]} \phi^j_H r^j_{H} \right\}, 
             \implies \Theta_{G, H} = \Theta_{\tilde G, H} \, .
        \end{align*}
        Thus, for any $\hat \theta_H \in \hat \Theta_{G, H}, \theta_H \in \Theta_{G, H}$, select $\hat \theta_H^\sim = \hat \theta_H \in \hat \Theta_{\tilde G, H}, \theta_H^\sim = \theta_H \in \Theta_{\tilde G, H}$.
        Then
        \begin{align*}
            \norm{\hat \theta_H - \hat \theta_H^\sim}_{X_H}
            \le 0, 
            \quad \norm{\theta_H - \theta_H^\sim}_{X_H}
            \le 0 \, .
        \end{align*}
        Now, we show the inductive step. 
        Let $h \in [H-1]$ be arbitrary. Assume \cref{eq:Theta cover inductive hypothesis forward} holds for any $t \in [h+1:H]$. 
        We prove that \cref{eq:Theta cover inductive hypothesis forward} also holds for $h$.
        Let $\hat \theta_h \in \hat \Theta_{G, h}$ be arbitrary.
        Notice that $\hat \theta_h$ must have the following form.
        \begin{align*}
            \hat \theta_h = X_h^{-1} \sum_{j \in [\nh]} \phi^j_h \bigE_{\tau \sim F^j_{G, h+1}}\left[r^j_{h:\tau-1} + \bar v_{\theta_{h+1:H+1}} \left(s^j_{\tau}\right)\right] \in \hat \Theta_{G, h}, \,\, \text{for some } \theta_{h+1:H+1} \in \Theta_{G, h+1} \times \dots \times \Theta_{G, H+1} \, .
        \end{align*}
        Select $\theta_{h+1:H+1}^\sim \in \Theta_{\tilde G, h+1} \times \dots \times \Theta_{\tilde G, H+1}$ such that for all $t \in [h+1:H+1]$
        \begin{align}
            \norm{\theta_t - \theta_t^\sim}_{X_t}
            \le c_t^\xi \, ,
            \label{eq:future theta cover bound}
        \end{align}
        which exists by the inductive hypothesis (\cref{eq:Theta cover inductive hypothesis forward}) and since $\Theta_{G, H+1} = \Theta_{\tilde G, H+1} = \{\vec 0\}$.
        Define
        \begin{align*}
            \hat \theta_h^\sim = X_h^{-1} \sum_{j \in [\nh]} \phi^j_h \bigE_{\tau \sim F^j_{\tilde G, h+1}}\left[r^j_{h:\tau-1} + \bar v_{\theta_{h+1:H+1}^\sim} \left(s^j_{\tau}\right)\right] \in \hat \Theta_{\tilde G, h} \, . 
        \end{align*}
        Recall that we aim to bound $\norm{\hat \theta_h - \hat \theta_h^\sim}_{X_h}$.
        Plugging in the expressions for $\hat \theta_h, \hat \theta_h^\sim$, as defined above, we get that
        \begin{align*}
            \norm{\hat \theta_h - \hat \theta_h^\sim}_{X_h}
            &= \norm{X_h^{-1} \sum_{j \in [\nh]} \phi^j_h \left(\bigE_{\tau \sim F^j_{G, h+1}}\left[r^j_{h:\tau-1} + \bar v_{\theta_{h+1:H+1}} \left(s^j_{\tau}\right)\right] - \bigE_{\tau \sim F^j_{\tilde G, h+1}}\left[r^j_{h:\tau-1} + \bar v_{\theta_{h+1:H+1}^\sim} \left(s^j_{\tau}\right)\right]\right)}_{X_h} \\
            &= \norm{X_h^{-1} \sum_{j \in [\nh]} \phi^j_h b_j}_{X_h}
            = \norm{\sum_{j \in [\nh]} \phi^j_h b_j}_{X_h^{-1}} \, ,
        \end{align*}
        where in the second equality we let $b_j = \bigE_{\tau \sim F^j_{G, h+1}}\left[r^j_{h:\tau-1} + \bar v_{\theta_{h+1:H+1}} \left(s^j_{\tau}\right)\right] - \bigE_{\tau \sim F^j_{\tilde G, h+1}}\left[r^j_{h:\tau-1} + \bar v_{\theta_{h+1:H+1}^\sim} \left(s^j_{\tau}\right)\right]$.
        To bound the above term we can make use of \cref{lem:projection bound}, which ensures that for any sequence $(b_j)_{j \in [\nh]}$ such that $|b_j| \le a \in \bR$ the following holds
        \begin{align*}
            \norm{\sum_{j \in [\nh]} \phi_h^j b_j}_{X_h^{-1}} 
            \le a \sqrt{\nh} \, .
        \end{align*}
        Thus, we are left to bound $|b_j|$.
        To do so, we can make use of \cref{eq:targets close under cover}, which requires us to bound $\left|\bar v_{\theta_t}(s) - \bar v_{\theta_t^\sim}(s) \right|$ for all $t \in [h+1:H], s \in \cS_t$, which can be done as follows.
        \begin{align}
            \left|\bar v_{\theta_t}(s) - \bar v_{\theta_t^\sim}(s) \right|
            &= \left|\clip_{[0, H]} \max_{a \in \cA} \ip{\phi(s, a), \theta_t} - \clip_{[0, H]} \max_{a \in \cA} \ip{\phi(s, a), \theta_t^\sim} \right| \nonumber \\
            &\le \left|\max_{a \in \cA} \ip{\phi(s, a), \theta_t - \theta_t^\sim} \right| \nonumber \\
            &\le \max_{a \in \cA} \norm{\phi(s, a)}_{X_t^{-1}} \norm{\theta_t - \theta_t^\sim}_{X_t} \nonumber \\
            &\le \frac{\featurebound}{\sqrt{\lambda}} c_t^\xi
            = \featurebound\modthetabound c_t^\xi/(H^{3/2} d) \, .
            \label{eq:event-2 value func cover bound}
        \end{align}
        The second inequality uses the Cauchy-Schwarz inequality.
        The third inequality used \cref{def:feature bound}, that $\lambdamax(X_h^{-1}) \le 1/\lambda$ (by definition of $X_h$ \cref{def:covariance matrix}), and \cref{eq:future theta cover bound}.
        The last equality used that $\sqrt{\lambda} = H^{3/2} d / \modthetabound$ (\cref{def:lambda}).
        Plugging \cref{eq:event-2 value func cover bound} into \cref{eq:targets close under cover} we get that for any $j \in [\nh]$
        \begin{align*}
            \abs{b_j}
            &= \left|\bigE_{\tau \sim F^j_{G, h+1}}\left[r_{h:\tau-1} + \bar v_{\theta_{h+1:H+1}} \left(s_{\tau}\right)\right] 
                - \bigE_{\tau \sim F^j_{\tilde G, h+1}}\left[r_{h:\tau-1} + \bar v_{\theta^\sim_{h+1:H+1}} \left(s_{\tau}\right)\right] \right| \\
            &\le (H-h+1)6 \sqrt{2d} H \featurebound \xi/\alpha + \sum_{t=h}^{H} \featurebound\modthetabound c_{t+1}^\xi/(H^{3/2} d) \, .
        \end{align*}       
        Thus,
        \begin{align*}
            \norm{\hat \theta_h - \hat \theta_h^\sim}_{X_h}
            = \norm{\sum_{j \in [\nh]} \phi_h^j b_j}_{X_h^{-1}} 
            &\le \left(6 \sqrt{2d} H^2 \featurebound \xi \alpha^{-1} +  \frac{\featurebound\modthetabound}{H^{3/2}d} \sum_{t=h}^{H} c_{t+1}^\xi\right) \sqrt{\nh} \\
            &= 6 \sqrt{\nh} \sqrt{2d} H^2 \featurebound \xi \alpha^{-1} +  \sqrt{\nh}\frac{\featurebound\modthetabound}{H^{3/2}d} \sum_{t=h}^{H} c_{t+1}^\xi \\
            &= 6 \sqrt{\nh} \sqrt{2d} H^2 \featurebound \xi \alpha^{-1} \left(1 + \sqrt{\nh}\featurebound\modthetabound/(H^{3/2}d)\right)^{H-h} \\
            &= c_h^\xi \, .
        \end{align*}
        To see why the second last equality is true, let $x = 6 \sqrt{\nh} \sqrt{2d} H^2 \featurebound \xi \alpha^{-1}$ and $y = \sqrt{\nh} \featurebound \modthetabound / (H^{3/2}d)$.
        Then, for any $t \in [h:H], c_t^\xi = x(1+y)^{H-t}$ (by \cref{eq:c_h definition}) and 
        \begin{align*}
            6 \sqrt{\nh} \sqrt{2d} H^2 \featurebound \xi \alpha^{-1} +  \sqrt{\nh}\frac{\featurebound\modthetabound}{H^{3/2}d} \sum_{t=h}^{H} c_{t+1}^\xi
            = x + y \sum_{t=h+1}^H x (1+y)^{H-t} \, .
        \end{align*}
        Notice that the sum can be rewritten as a finite geometric series.
        \begin{align*}
            \sum_{t=h+1}^{H} x (1+y)^{H-t} 
            = x \sum_{k=0}^{H-h-1} (1+y)^k
            = x \frac{(1+y)^{H-h} - 1}{y} \, .
        \end{align*}
        Thus,
        \begin{align*}
            x + y \sum_{t=h+1}^H x (1+y)^{H-t}
            = x + y \cdot x \frac{(1 + y)^{H-h} - 1}{y} 
            = x (1 + y)^{H-h}
            = c_h^\xi \, .
        \end{align*}
        This proves the first result in \cref{eq:Theta cover inductive hypothesis forward}.

        Next, we show the second result in \cref{eq:Theta cover inductive hypothesis forward}.
        Let $\theta_h \in \Theta_{G, h}$.
        By the definition of the set $\Theta_{G, h}$, there exists a $\hat \theta_h \in \hat \Theta_{G, h}$ such that $\norm{\theta_h - \hat \theta_h}_{X_h} \le \beta$. 
        Then, by the first result in \cref{eq:Theta cover inductive hypothesis forward} (which we have shown holds for $h$ above), there exists a $\hat \theta_h^\sim \in \hat \Theta_{\tilde G, h}$, such that $\norm{\hat \theta_h - \hat \theta_h^\sim}_{X_h} \le c_h^\xi$.
        Let $\hat \theta_h, \hat \theta_h^\sim$ be as defined above, and select $\theta_h^\sim = \theta_h - \hat \theta_h + \hat \theta_h^\sim$, which is an element of $\Theta_{\tilde G, h}$ since 
        \begin{align*}
            \norm{\theta_h^\sim - \hat \theta_h^\sim}_{X_h} 
            = \norm{\theta_h - \hat \theta_h + \hat \theta_h^\sim - \hat \theta_h^\sim}_{X_h} 
            = \norm{\theta_h - \hat \theta_h}_{X_h} 
            \le \beta \, .
        \end{align*}
        Then,
        \begin{align*}
            \norm{\theta_h - \theta_h^\sim}_{X_h}
            = \norm{\theta_h - \theta_h - \hat \theta_h + \hat \theta_h^\sim}_{X_h}
            = \norm{\hat \theta_h^\sim - \hat \theta_h}_{X_h}
            \le c_h^\xi \, .
        \end{align*}
        This completes the proof of the second result in \cref{eq:Theta cover inductive hypothesis forward}.
        
        \textbf{Proof of result $2.$:}
        The proof is identical to that of the proof of result $1.$, except swapping the roles of $\hat \theta_h, \theta_h, G$ and $\hat \theta^\sim_h, \theta^\sim_h, \tilde G$. 
    \end{proof}

    Now, with the results of \cref{eq:Theta cover inductive hypothesis forward,eq:Theta cover inductive hypothesis backward} in hand, we return to proving \cref{lem:G cover q-max-min bound}.
    For any $h \in [H]$ let $\theta_h = \argmax_{\theta \in \Theta_{G, h}} \ip{\phi(s, a), \theta}$ then, by \cref{eq:Theta cover inductive hypothesis forward}, there exists a $\theta_h^\sim \in \Theta_{\tilde G, h}$ such that $\norm{\theta_h - \theta_h^\sim}_{X_t} \le c_h^\xi$.
    This gives that, for all $h \in [H], (s, a) \in \cS_h \times \cA$
    \begin{align*}
        \max_{\theta \in \Theta_{G, h}} \ip{\phi(s, a), \theta} - \max_{\theta^\sim \in \Theta_{\tilde G, h}} \ip{\phi(s, a), \theta^\sim}
        &= \ip{\phi(s, a), \theta_h} - \ip{\phi(s, a), \theta_h^\sim} + \ip{\phi(s, a), \theta_h^\sim} - \max_{\theta^\sim \in \Theta_{\tilde G, h}} \ip{\phi(s, a), \theta^\sim} \\
        &\le \norm{\phi(s, a)}_{X_h^{-1}} \norm{\theta_h - \theta_h^\sim}_{X_h} \\
        &\le \frac{\featurebound}{\sqrt{\lambda}} c_h^\xi
        = \featurebound \modthetabound c_h^\xi / (H^{3/2}d) \, .
    \end{align*}
    The second inequality used \cref{def:feature bound}, that $\lambdamax(X_h^{-1}) \le 1/\lambda$ (by definition of $X_h$ \cref{def:covariance matrix}), and \cref{eq:future theta cover bound}.
    The last equality used that $\sqrt{\lambda} = H^{3/2} d / \modthetabound$ (\cref{def:lambda}).
    Now, for the other direction, using similar steps as above, for any $h \in [H]$, let $\theta_h^\sim = \argmax_{\theta^\sim \in \Theta_{\tilde G, h}} \ip{\phi(s, a), \theta^\sim}$ then, by \cref{eq:Theta cover inductive hypothesis backward}, there exists a $\theta_h \in \Theta_{G, h}$ such that $\norm{\theta_h - \theta_h^\sim}_{X_h} \le c_h^\xi$.
    This gives that, for all $h \in [H], (s, a) \in \cS_h \times \cA$   
    \begin{align*}
        \max_{\theta^\sim \in \Theta_{\tilde G, h}} \ip{\phi(s, a), \theta^\sim} - \max_{\theta \in \Theta_{G, h}} \ip{\phi(s, a), \theta}
        &= \ip{\phi(s, a), \theta_h^\sim} - \ip{\phi(s, a), \theta_h} + \ip{\phi(s, a), \theta_h} - \max_{\theta \in \Theta_{G, h}} \ip{\phi(s, a), \theta} \\
        &\le \norm{\phi(s, a)}_{X_h^{-1}} \norm{\theta_h^\sim - \theta_h}_{X_h} \\
        &\le \frac{\featurebound}{\sqrt{\lambda}} c_h^\xi
        = \featurebound \modthetabound c_h^\xi / (H^{3/2}d) \, .
    \end{align*}
    The above two results together imply that, for all $h \in [H], (s, a) \in \cS_h \times \cA$
    \begin{align*}
        \abs{\max_{\theta \in \Theta_{G, h}} \ip{\phi(s, a), \theta} - \max_{\theta^\sim \in \Theta_{\tilde G, h}} \ip{\phi(s, a), \theta^\sim}}
        \le \featurebound \modthetabound c_h^\xi / (H^{3/2}d) \, .
    \end{align*}
    Following the same steps as above for $\min$ we can get that, for all $h \in [H], (s, a) \in \cS_h \times \cA$
    \begin{align*}
        \abs{\min_{\theta \in \Theta_{G, h}} \ip{\phi(s, a), \theta} - \min_{\theta^\sim \in \Theta_{\tilde G, h}} \ip{\phi(s, a), \theta^\sim}}
        \le \featurebound \modthetabound c_h^\xi / (H^{3/2}d) \, .
    \end{align*}
    The above two results together imply that, for all $h \in [H], (s, a) \in \cS_h \times \cA$, %
    \begin{align*}
        \abs{\left(\max_{\theta \in \Theta_{G, h}} \bar q_{\theta}(s, a) - \min_{\theta \in \Theta_{G, h}} \bar q_{\theta}(s, a)\right)
        - \left(\max_{\theta^\sim \in \Theta_{\tilde G, h}} \bar q_{\theta^\sim}(s, a) - \min_{\theta^\sim \in \Theta_{\tilde G, h}} \bar q_{\theta^\sim}(s, a)\right)}
        = \modxi \, ,
    \end{align*}
    where (by recalling \cref{eq:c_h definition}),
    \begin{align}
        \modxi 
        = 12 \sqrt{2d} H^2 \featurebound \xi \alpha^{-1} \left(2\sqrt{\nh}\featurebound\modthetabound / (H^{3/2}d)\right)^{H}
        \ge 2\featurebound \modthetabound c_h^\xi / (H^{3/2}d) \, .
        \label{eq:bar-xi definition}
    \end{align}
    This concludes the proof of the first result in \cref{lem:G cover q-max-min bound}.

    Now we prove the second result in \cref{lem:G cover q-max-min bound}.
    For any $\tilde G \in C_\xi^\bbG, h \in [H]$, define the event 
    \begin{align*}
        &\cE_2^{\tilde G, h} 
        = \Bigg\{ \Bigg|\bigE_{(S, A) \sim \mu_h} \left[\max_{\theta^\sim \in \Theta_{\tilde G, h}} \bar q_{\theta^\sim}(S, A) - \min_{\theta^\sim \in \Theta_{\tilde G, h}} \bar q_{\theta^\sim}(S, A)\right] \\
        &\qquad\qquad - \frac{1}{\nh} \sum_{i \in [\nh]} \left(\max_{\theta^\sim \in \Theta_{\tilde G, h}}\bar q_{\theta^\sim}(s_h^i, a_h^i) - \min_{\theta^\sim \in \Theta_{\tilde G, h}} \bar q_{\theta^\sim}(s_h^i, a_h^i)\right) \Bigg|
            \le \frac{H}{\sqrt{\nh}} \sqrt{\log\left(\frac{6H|C_\xi^\bbG|}{\delta}\right)} \Bigg\} \, .
    \end{align*}
    Then, since $\max_{\theta^\sim \in \Theta_{\tilde G, h}} \bar q_{\theta^\sim}(s, a) - \min_{\theta^\sim \in \Theta_{\tilde G, h}} \bar q_{\theta^\sim}(s, a) \in [0, H]$ for all $(s, a) \in \cS_h \times \cA$, by Hoeffding's inequality (\cref{lem:hoeffdings inequality}), we have that, for any $\tilde G \in C_\xi^\bbG, h \in [H]$, event $\cE_2^{\tilde G, h}$ occurs with probability at least $1 - \delta/(3H|C_\xi^\bbG|)$.
    Let 
    \begin{align}
        \cE_2 = \bigcap_{\tilde G \in C_\xi^\bbG, h \in [H]} \cE_2^{\tilde G, h} \, .
        \label{eq:event 2 definition}
    \end{align}
    Then, by applying a union bound over $\tilde G, h$ we have that the event $\cE_2$ occurs with probability at least $1 - \delta/3$.
\end{proof}

\begin{lemma} \label{lem:G cover useful results}
    Let $\xi > 0, \kappa_t \ge 0, \forall t \in [2:H+1]$. 
    Then, there exists a set $C_\xi^\bbG \subset \bbG$ with $|C_\xi^\bbG| \le (1 + 2\usedtobesqrtdoneplusone/\xi))^{dHd_0}$ such that, for any $G \in \bbG$, there exists a $\tilde G \in C_\xi^\bbG$ such that, 
    for any $h \in [H]$,
    for any $u \in [h]$ and trajectory $\trajectory = (s_t, a_t, r_t)_{t \in [u, H+1]}$,
    and for any $\theta_{h+1:H+1}, \theta^\sim_{h+1:H+1} \in \cB(\modthetabound)^{H-h+1}$ that are close in predictions, that is, such that for all $t \in [h+1:H+1], s \in \cS_t, \left|\bar v_{\theta_t}(s) - \bar v_{\theta^\sim_t}(s) \right| \le \kappa_t$, 
    it holds that  
    \begin{align*}
        &\left|\bigE_{\tau \sim F_{G, \trajectory, h+1}}\left[r_{h:\tau-1} + \bar v_{\theta_{h+1:H+1}} \left(s_{\tau}\right)\right] 
            - \bigE_{\tau \sim F_{\tilde G, \trajectory, h+1}}\left[r_{h:\tau-1} + \bar v_{\theta^\sim_{h+1:H+1}} \left(s_{\tau}\right)\right] \right| \\
        &\le (H-h+1)6 \sqrt{2d} H \featurebound \xi/\alpha + \sum_{t=h}^{H} \kappa_{t+1} \, .
    \end{align*}
\end{lemma}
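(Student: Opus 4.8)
The plan is to build $C_\xi^\bbG$ as a product of Euclidean $\xi$-covers of the ball $\cB(\usedtobesqrtdoneplusone)$, one for each of the $(H-1)d_0$ coordinate vectors $\vartheta_h^i$ comprising a guess $G\in\bbG$. By \cref{lem:cover number ball} each such ball admits an $\xi$-cover of size $(1+2\usedtobesqrtdoneplusone/\xi)^d$, so the product cover has size at most $(1+2\usedtobesqrtdoneplusone/\xi)^{d(H-1)d_0}\le(1+2\usedtobesqrtdoneplusone/\xi)^{dHd_0}$, matching the claimed $|C_\xi^\bbG|$. Given any $G\in\bbG$ I would pick $\tilde G\in C_\xi^\bbG$ whose every component vector is within $\xi$ of the corresponding component of $G$ in $\norm{\cdot}_2$; this single $\tilde G$ then serves uniformly over all stages $h$, and the role of $u\in[h]$ in the statement is only to ensure the trajectory is defined from stage $h$ onward (the quantities involved depend on $(s_t,r_t)$ only for $t\ge h$), so it plays no further part.

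First I would transfer the componentwise closeness to closeness of the range surrogate: since $\norm{\phi(s,a,a')}_2\le 2\featurebound$ and a maximum is $1$-Lipschitz in its arguments, \cref{eq:rangeq-def} gives $|\range^{G}(s)-\range^{\tilde G}(s)|\le 2\featurebound\xi$ for every $s$. Because the clamp defining $\omega$ in \cref{eq:omega} is piecewise linear in $\range^{G}$ with slope of magnitude at most $\sqrt{2d}/\alpha$, this yields the pointwise skip-probability bound $|\omega_G(s)-\omega_{\tilde G}(s)|\le 2\sqrt{2d}\featurebound\xi/\alpha$ at every state, holding for all stages simultaneously.

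The heart of the argument is propagating these per-state perturbations through the skipping expectation without an exponential-in-$H$ blow-up. I would introduce the stage-wise value $W_G^{\theta}(t)=\bigE_{\tau\sim F_{G,\trajectory,t}}[r_{t:\tau-1}+\bar v_{\theta_{h+1:H+1}}(s_\tau)]$, which satisfies the backward recursion $W_G^{\theta}(t)=(1-\omega_G(s_t))\bar v_{\theta_t}(s_t)+\omega_G(s_t)(r_t+W_G^{\theta}(t+1))$ with $W_G^{\theta}(H+1)=\bar v_{\theta_{H+1}}(s_{H+1})$ (as $\omega_G$ vanishes at stage $H+1$). The lemma's target equals $r_h+W_G^{\theta}(h+1)$, and the common $r_h$ cancels in the difference. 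Subtracting the analogous recursion for $(\tilde G,\theta^\sim)$ and regrouping the two skip-probability contributions into a single term, the error $E(t):=|W_G^{\theta}(t)-W_{\tilde G}^{\theta^\sim}(t)|$ obeys $E(t)\le(1-\omega_G(s_t))\kappa_t+\omega_G(s_t)E(t+1)+|\omega_G(s_t)-\omega_{\tilde G}(s_t)|\cdot|(r_t+W_{\tilde G}^{\theta^\sim}(t+1))-\bar v_{\theta^\sim_t}(s_t)|$, using $|\bar v_{\theta_t}(s_t)-\bar v_{\theta^\sim_t}(s_t)|\le\kappa_t$. The crucial observation is that the coefficient $\omega_G(s_t)\le 1$ multiplying $E(t+1)$ means errors merely accumulate rather than compound. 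Since both $r_t+W_{\tilde G}^{\theta^\sim}(t+1)$ and $\bar v_{\theta^\sim_t}(s_t)$ lie in $[0,2H+1]$, the last coefficient is at most $3H$; unrolling backward from $t=H+1$ to $t=h+1$ and bounding each surviving-probability product $\prod_{u}\omega_G(s_u)$ by $1$ gives $E(h+1)\le\sum_{t=h+1}^{H+1}\kappa_t+3H\sum_{u=h+1}^{H}|\omega_G(s_u)-\omega_{\tilde G}(s_u)|$. Substituting the skip-probability bound over the $H-h$ continuation stages produces the term $6(H-h)\sqrt{2d}H\featurebound\xi/\alpha\le 6(H-h+1)\sqrt{2d}H\featurebound\xi/\alpha$, while the $\kappa$-sum is exactly $\sum_{t=h}^{H}\kappa_{t+1}$, yielding the claimed bound.

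I expect the main obstacle to be the bookkeeping in the recursive decomposition of $E(t)$: in particular, choosing the regrouping that merges the two $\omega$-difference contributions into one term with an $O(H)$ coefficient (the naive split charges $O(H)$ to each and loses the target constant), and verifying that the surviving-probability products stay bounded by $1$ so no $H$-fold amplification occurs. The covering and Lipschitz steps are routine by comparison.
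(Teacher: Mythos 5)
Your proposal is correct and follows essentially the same route as the paper's proof: the identical product $\xi$-cover of $\bbG$, the same Lipschitz transfer from the parameter perturbation to $\range^G$ (factor $2\featurebound\xi$) and then to $\omega_G$ (factor $2\sqrt{2d}\featurebound\xi/\alpha$), and the same backward recursion on the stopping-time expectation with additive (non-compounding) error accumulation. The only cosmetic difference is bookkeeping: you merge the two $\omega$-difference contributions into a single term with coefficient at most $3H$, whereas the paper bounds them separately (against $|\bar v_{\theta}|\le H$ and the expectation term $\le 2H$), both yielding the same per-stage constant $6\sqrt{2d}H\featurebound\xi/\alpha$ and hence the claimed bound.
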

\begin{proof}
    Recall that
    \begin{align*}
        \bbG = \left\{(G_h)_{h \in [2:H]} = (\vartheta_h^i)_{h \in [2:H], i \in [d_0]}: \text{ for all } h \in [2:H], i \in [d_0], \vartheta_h^i \in \cB(\usedtobesqrtdoneplusone)\right\} \, .
    \end{align*}
    By \cref{lem:cover number ball}, we know there exists a set $C_\xi \subset \cB(a), a, \xi > 0$ with $|C_\xi| = (1 + 2a/\xi)^d$ such that for any $x \in \cB(a)$ there exists a $y \in C_\xi$ such that $\norm{x - y}_2 \le \xi$.
    Define the set $C_\xi^\bbG = \bigtimes_{h \in [2:H], i \in [d_0]} C_\xi \subset \bbG$ with $|C_\xi^\bbG| \le (1 + 2\usedtobesqrtdoneplusone/\xi))^{dHd_0}$. 
    Then, for any $G=(\vartheta^i_h)_{h\in[2:H],i\in[d_0]} \in \bbG$, there exists a $\tilde G = (\tilde\vartheta^i_h)_{h\in[2:H],i\in[d_0]} \in C_\xi^\bbG$ such that 
    \begin{align*}
        \norm{\vartheta^i_h-\tilde\vartheta^i_h}_2\le\xi \quad \text{for all } h\in[2:H],i\in[d_0] \, . 
    \end{align*}
    Let $G, \tilde G$ be as defined above.
    Then, for all $s\in\cS\setminus\cS_1$ 
    \begin{align*}
        |\range^{G}(s) - \range^{\tilde G}(s)|
        &= \left|\max_{k \in [d_0]} \max_{a,a' \in \cA} \ip{\phi(s, a, a'), \vartheta_{\stage(s)}^k} - \max_{k \in [d_0]} \max_{a,a' \in \cA} \ip{\phi(s, a, a'), \tilde \vartheta_{\stage(s)}^k}\right| \\
        &\le \left|\max_{k \in [d_0]} \max_{a,a' \in \cA} \ip{\phi(s, a, a'), \vartheta_{\stage(s)}^k -  \tilde \vartheta_{\stage(s)}^k}\right| \\
        &\le \max_{k \in [d_0]} \max_{a,a' \in \cA} \norm{\phi(s, a, a')}_2\norm{\vartheta_{\stage(s)}^k -  \tilde \vartheta_{\stage(s)}^k}_2 \\
        &\le 2\featurebound \xi \, , 
    \end{align*}
    and, since we have by definition (\cref{eq:omega}) that $\omega_G$ is a smooth function in terms of $\range^G$, we get that
    \begin{align}
        |\omega_{G}(s) - \omega_{\tilde G}(s)|
        \le \left|2 - \frac{\sqrt{2d} \cdot \range^{G}(s)}{\alpha} - \left(2 - \frac{\sqrt{2d} \cdot \range^{\tilde G}(s)}{\alpha}\right)\right| 
        \le \frac{2 \sqrt{2d} \featurebound \xi}{\alpha} \, . 
        \label{eq:omega cover bound}
    \end{align}
    For all $h \in [H]$, let $\theta_{h+1:H+1}, \theta^\sim_{h+1:H+1} \in \cB(\modthetabound)^{H-h+1}$, such that for all $t \in [h+1:H+1], s \in \cS_t, \left|\bar v_{\theta_t}(s) - \bar v_{\theta^\sim_t}(s) \right| \le \kappa_t$.
    Then, for any $h \in [H], u \in [h]$, and trajectory $\trajectory = (s_t, a_t, r_t)_{t \in [u:H+1]}$,
    it holds that 
    \begin{align}
        &\bigE_{\tau \sim F_{G, \trajectory, h+1}}\left[r_{h:\tau-1} + \bar v_{\theta_{h+1:H+1}} \left(s_{\tau}\right)\right] \nonumber \\
        &= r_h + (1 - \omega_G(s_{h+1})) \bar v_{\theta_{h+1}}(s_{h+1}) + \omega_G(s_{h+1}) \left(r_{h+1} + (1- \omega_G(s_{h+2}))v_{\theta_{h+2}}(s_{h+2})\right) + \dots \nonumber \\
        &= r_h + (1 - \omega_G(s_{h+1})) \bar v_{\theta_{h+1}}(s_{h+1}) + \omega_G(s_{h+1}) \bigE_{\tau \sim F_{G, \trajectory, h+2}}\left[r_{h+1:\tau-1} + \bar v_{\theta_{h+2:H+1}} \left(s_{\tau}\right)\right] \, . 
        \label{eq:recursive target defn G}
    \end{align}
    Using similar steps to above it can be shown that
    \begin{align}
        &\bigE_{\tau \sim F_{\tilde G, \trajectory, h+1}}\left[r_{h:\tau-1} + \bar v_{\theta_{h+1:H+1}^\sim} \left(s_{\tau}\right)\right] \nonumber \\
        &= r_h + (1 - \omega_{\tilde G}(s_{h+1})) \bar v_{\theta_{h+1}}(s_{h+1}) + \omega_{\tilde G}(s_{h+1}) \bigE_{\tau \sim F_{\tilde G, \trajectory, h+2}}\left[r_{h+1:\tau-1} + \bar v_{\theta_{h+2:H+1}^\sim} \left(s_{\tau}\right)\right] \, . 
        \label{eq:recursive target defn tilde-G}
    \end{align}
    We claim that for any $h \in [H], u \in [h]$, and trajectory $\trajectory = (s_t, a_t, r_t)_{t \in [u:H+1]}$     
    \begin{align}
        &\left|\bigE_{\tau \sim F_{G, \trajectory, h+1}}\left[r_{h:\tau-1} + \bar v_{\theta_{h+1:H+1}} \left(s_{\tau}\right)\right] 
            - \bigE_{\tau \sim F_{\tilde G, \trajectory, h+1}}\left[r_{h:\tau-1} + \bar v_{\theta^\sim_{h+1:H+1}} \left(s_{\tau}\right)\right] \right| \nonumber \\
        &\le (H-h+1)6 \sqrt{2d} H \featurebound \xi/\alpha + \sum_{t=h}^{H} \kappa_{t+1} \, .
        \label{eq:target difference induction}
    \end{align}
    To show \cref{eq:target difference induction} we will use induction on $h$.
    The base case is when $h = H$, for which
    \begin{align*}
        &\left|\bigE_{\tau \sim F_{G, \trajectory, H+1}}\left[r_{H:\tau-1} + \bar v_{\theta_{H+1:H+1}} \left(s_{\tau}\right)\right] 
            - \bigE_{\tau \sim F_{\tilde G, \trajectory, H+1}}\left[r_{H:\tau-1} + \bar v_{\theta^\sim_{H+1:H+1}} \left(s_{\tau}\right)\right] \right| \nonumber \\ 
        &= \left|\bigE_{\tau \sim F_{G, \trajectory, H+1}}\left[r_{H:\tau-1}\right] 
            - \bigE_{\tau \sim F_{\tilde G, \trajectory, H+1}}\left[r_{H:\tau-1}\right] \right| 
       = 0 \, ,            
    \end{align*}
    where the first equality holds since $\Theta_{G, H+1} = \Theta_{\tilde G, H+1} = \{\vec 0\}$ (defined in \cref{eq:Theta definition}).
    The second equality holds since for any $u \in [H+1]$, and trajectory $\trajectory = (s_t, a_t, r_t)_{t \in [u:H+1]}$, $F_{G, \trajectory, H+1}(\tau = H+1) = F_{\tilde G, \trajectory, H+1}(\tau = H+1) = 1$.
    
    Now, we show the inductive step.
    Let $h \in [H-1]$ be arbitrary.
    Assume \cref{eq:target difference induction} holds for any $t \in [h+1, H]$.
    We prove that \cref{eq:target difference induction} also holds for $h$.
    Fix any $u \in [H+1]$, and trajectory $\trajectory = (s_t, a_t, r_t)_{t \in [u:H+1]}$.
    To shorten notation let $E_{G, \theta, h} = \bigE_{\tau \sim F_{G, \trajectory, h+1}}\left[r_{h:\tau-1} + \bar v_{\theta_{h+1:H+1}} \left(s_{\tau}\right)\right]$ (and similar for $E_{\tilde G, \theta^\sim, h}$).
    Then using the assumption that for all $u \in [2:H+1], s \in \cS_u, \left|\bar v_{\theta_u}(s) - \bar v_{\theta^\sim_u}(s) \right| \le \kappa_u$ along with \cref{eq:omega cover bound,eq:recursive target defn G,eq:recursive target defn tilde-G} and noting that for any $h \in [H], E_{G, \theta, h}, E_{\tilde G, \theta^\sim, h} \in [0, 2H], \bar v_{\theta_{h+1: H+1}}, \bar v_{\theta^\sim_{h+1: H+1}} \in [0, H], \omega_G, \omega_{\tilde G} \in [0, 1]$, we have that
    \begin{align}
        &\left|\bigE_{\tau \sim F_{G, \trajectory, h+1}}\left[r_{h:\tau-1} + \bar v_{\theta_{h+1:H+1}} \left(s_{\tau}\right)\right] 
            - \bigE_{\tau \sim F_{\tilde G, \trajectory, h+1}}\left[r_{h:\tau-1} + \bar v_{\theta^\sim_{h+1:H+1}} \left(s_{\tau}\right)\right] \right| \nonumber \\
        &= \bigg|(1 - \omega_G(s_{h+1})) \bar v_{\theta_{h+1}}(s_{h+1}) 
            - (1 - \omega_{\tilde G}(s_{h+1})) \bar v_{\theta^\sim_{h+1}}(s_{h+1}) \nonumber 
            + \omega_G(s_{h+1}) E_{G, \theta, h+1} 
            - \omega_{\tilde G}(s_{h+1}) E_{\tilde G, \theta^\sim, h+1}\bigg| \nonumber \\
        &\le \left|(1 - \omega_G(s_{h+1}) - (1 - \omega_{\tilde G}(s_{h+1}))\right| \left|\bar v_{\theta_{h+1}}(s_{h+1})\right| 
            + \left|1 - \omega_{\tilde G}(s_{h+1})\right| \left|\bar v_{\theta_{h+1}}(s_{h+1}) - \bar v_{\theta^\sim_{h+1}}(s_{h+1})\right| \nonumber \\
        &\qquad + \left|\omega_G(s_{h+1}) - \omega_{\tilde G}(s_{h+1})\right| \left|E_{G, \theta, h}\right| 
            + \left|\omega_{\tilde G}(s_{h+1})\right| \left|E_{G, \theta, h+1} - E_{\tilde G, \theta^\sim, h+1}\right| \nonumber \\
        &\le 2\sqrt{2d} H \featurebound \xi / \alpha + \kappa_{h+1} + 4\sqrt{2d} H \featurebound \xi / \alpha + \left|E_{G, \theta, h+1} - E_{\tilde G, \theta^\sim, h+1}\right| \nonumber \\
        &\le 6 \sqrt{2d} H \featurebound \xi / \alpha + \kappa_{h+1} + (H-h)6 \sqrt{2d} H \featurebound \xi / \alpha + \sum_{t=h+1}^{H+1} \kappa_{t+1} \nonumber \\
        &\le (H-h+1)6 \sqrt{2d} H \featurebound \xi/\alpha + \sum_{t=h}^{H} \kappa_{t+1} \nonumber \, ,
    \end{align}
    where the second last inequality holds by the inductive hypothesis.
\end{proof}

\newpage
\section{Other Useful Results and Definitions} \label{ss:other useful results}

\begin{definition}\label{def:nearopt}
A finite set $G\subset \R^d$ is the basis of a near-optimal design for a set $\Theta\subseteq\R^d$, if there exists a probability distribution $\rho$ over elements of $G$, such that
for any $\theta\in\Theta$,
\begin{align}
&\ip{v,\theta} = 0 \quad\text{for all } v\in\kernel(V(G, \rho)), \text{ and}\label{eq:v-max-rank}\\
&\norm{\theta}_{V(G, \rho)^\dag}^2\le 2d, \label{eq:near-opt-2d}\\
&\text{where } V(G, \rho)=\sum_{x\in G} \rho(x) xx^\top\,,\label{eq:design-matrix}
\end{align}
where for some matrix $X$, $X^\dag$ denotes the Moore-Penrose inverse of some, and 
$\kernel(X)$ its kernel (or null space).
\end{definition}

\begin{lemma}[Hoeffding's Inequality (Theorem 2 in \citep{hoeffding1994probability})] \label{lem:hoeffdings inequality}
    Let $(X_i)_{i \in \bN}$ be independent random variables such that $X_i \in [a, b]$ for some $a, b \in \bR$, and let $S_n = \frac{1}{n} \sum_{i=1}^n X_i$. 
    Then, with probability at least $1 - \zeta$ it holds that
    \begin{align*}
        \abs{\bigE S_n - S_n}
        \le \frac{(b-a)}{\sqrt{n}} \sqrt{\log\left(\frac{2}{\zeta}\right)} \, .
    \end{align*}
\end{lemma}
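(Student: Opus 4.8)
The plan is to prove this standard concentration bound via the Cram\'er--Chernoff (exponential Markov) method, and to observe at the end that the constant appearing in the statement is in fact slightly looser than the sharp Hoeffding constant, so there is room to spare. First I would center the variables: write $Y_i = X_i - \bigE X_i$, so that each $Y_i$ is supported on an interval of length $\ell = b-a$ with $\bigE Y_i = 0$, and $S_n - \bigE S_n = \frac1n \sum_{i=1}^n Y_i$. To control the upper tail, for any $t > 0$ I would apply Markov's inequality to the nonnegative random variable $\exp\!\big(t \sum_i Y_i\big)$, giving $\Pr\!\big(\sum_i Y_i \ge n\epsilon\big) \le e^{-tn\epsilon}\, \bigE \exp\!\big(t\sum_i Y_i\big)$, and by independence the expectation factorizes as $\prod_{i=1}^n \bigE e^{t Y_i}$.

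The key ingredient is Hoeffding's lemma: for a centered random variable $Y$ supported on an interval of length $\ell$, one has $\bigE e^{sY} \le \exp(s^2 \ell^2 / 8)$ for all $s \in \bR$. I would prove this by the usual convexity argument: bound $e^{sy}$ on the interval by the chord joining its endpoint values, take expectations to reduce the problem to the logarithm of a two-point moment generating function $\psi(s)$, and then check that $\psi(0)=\psi'(0)=0$ while $\psi''(s) \le \ell^2/4$ uniformly in $s$, so that a second-order Taylor expansion yields $\psi(s) \le s^2\ell^2/8$. Applying this bound with $s = t$ to each factor gives $\bigE e^{tY_i} \le e^{t^2 \ell^2/8}$, hence $\Pr\!\big(\sum_i Y_i \ge n\epsilon\big) \le \exp\!\big(-tn\epsilon + n t^2\ell^2/8\big)$.

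Optimizing the exponent over $t > 0$ (the minimizer is $t = 4\epsilon/\ell^2$) yields the sharp subgaussian tail $\Pr(S_n - \bigE S_n \ge \epsilon) \le \exp\!\big(-2n\epsilon^2/(b-a)^2\big)$. Applying the identical argument to $-Y_i$ bounds the lower tail by the same quantity, and a union bound over the two tails gives $\Pr(|S_n - \bigE S_n| \ge \epsilon) \le 2\exp\!\big(-2n\epsilon^2/(b-a)^2\big)$. Finally I would invert this bound: setting the right-hand side equal to $\zeta$ and solving for $\epsilon$ shows that with probability at least $1-\zeta$ one has $|S_n - \bigE S_n| \le \frac{b-a}{\sqrt{2n}}\sqrt{\log(2/\zeta)}$, which is strictly smaller than the claimed bound $\frac{b-a}{\sqrt{n}}\sqrt{\log(2/\zeta)}$ by a factor of $\sqrt2$. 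Since the event in the statement is therefore implied, the lemma follows.

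The only genuinely nontrivial step is Hoeffding's lemma (the subgaussian moment-generating-function bound), where the convexity and Taylor argument must actually be carried out; every other step is routine bookkeeping with Markov's inequality, independence, the single-variable optimization over $t$, and the final inversion. Because the paper only invokes the weaker stated constant, one could instead cite the MGF bound as standard, but for completeness I would include the short convexity proof.
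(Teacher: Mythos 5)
Your proof is correct and is essentially the paper's own approach: the paper offers no proof, merely citing Hoeffding's theorem, and your Chernoff-method argument (centering, Markov applied to the exponential, Hoeffding's MGF lemma $\bigE e^{sY}\le e^{s^2\ell^2/8}$, optimizing at $t=4\epsilon/\ell^2$, and a two-sided union bound) is precisely the classical proof behind that citation. Your final inversion giving $|S_n-\bigE S_n|\le \frac{b-a}{\sqrt{2n}}\sqrt{\log(2/\zeta)}$ is also accurate, so you in fact establish the lemma with a factor-$\sqrt{2}$ margin over the stated constant.
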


\begin{lemma}[Covering number of the Euclidean ball] \label{lem:cover number ball}
    Let $a > 0, \eps > 0, d \ge 1$, and $\cB_d(a)=\{x \in \bR^d: \norm{x}_2 \le a\}$ denote the $d$-dimensional Euclidean ball of radius $a$ centered at the origin.
    The covering number of $\cB_d(a)$ is upper bounded by $\left(1 + \frac{2a}{\eps}\right)^d$. 
\end{lemma}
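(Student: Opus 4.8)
The plan is to establish the bound by the standard volumetric packing argument, since the covering number here is the minimal number of $\eps$-balls needed to cover $\cB_d(a)$. First I would construct a \emph{maximal} $\eps$-separated subset $\{x_1,\dots,x_N\}\subseteq\cB_d(a)$, that is, a set whose pairwise distances all exceed $\eps$ and to which no further point of $\cB_d(a)$ can be adjoined without destroying this separation. Such a maximal set exists and is finite because $\cB_d(a)$ is compact. The first key observation is that maximality forces this set to be an $\eps$-cover of $\cB_d(a)$: if some $y\in\cB_d(a)$ satisfied $\norm{y-x_i}_2>\eps$ for every $i$, then $\{x_1,\dots,x_N,y\}$ would still be $\eps$-separated, contradicting maximality. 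Consequently $N$ is an upper bound on the covering number, and it remains only to bound $N$.

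Next I would bound $N$ by comparing Lebesgue volumes. Since the centers $x_i$ are pairwise more than $\eps$ apart, the balls $B(x_i,\eps/2)$ are pairwise disjoint. Each is contained in the enlarged ball $\cB_d(a+\eps/2)$, because $\norm{x_i}_2\le a$ and every point of $B(x_i,\eps/2)$ lies within $\eps/2$ of $x_i$ by the triangle inequality. Writing $\omega_d$ for the volume of the unit ball in $\bR^d$ and using that a ball of radius $r$ has volume $\omega_d r^d$, disjointness together with containment yields
\[
N\,\omega_d\,(\eps/2)^d \;\le\; \omega_d\,(a+\eps/2)^d.
\]
Cancelling the common factor $\omega_d(\eps/2)^d$ then gives
\[
N \;\le\; \left(\frac{a+\eps/2}{\eps/2}\right)^d \;=\; \left(1+\frac{2a}{\eps}\right)^d,
\]
which is exactly the claimed bound.

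Because the argument is entirely elementary, I do not expect any genuine obstacle. The only two steps that warrant a line of care are the maximality-implies-covering claim and the containment $B(x_i,\eps/2)\subseteq\cB_d(a+\eps/2)$; both are immediate consequences of the triangle inequality, and the volume-ratio computation then closes the proof. One could equivalently phrase the same reasoning purely in terms of a greedy construction of the cover, but the packing formulation makes the disjointness needed for the volume comparison most transparent.
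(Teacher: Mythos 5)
Your proof is correct and is essentially the same argument as the paper's: the paper simply cites Corollary 4.2.13 of Vershynin (adapted from radius $1$ to radius $a$), and that corollary's proof is precisely your volumetric argument — a maximal $\eps$-separated set is an $\eps$-cover, and disjointness of the radius-$\eps/2$ balls inside $\cB_d(a+\eps/2)$ gives the bound $\left(1+\frac{2a}{\eps}\right)^d$.
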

\begin{proof}
    Same as the proof of Corollary 4.2.13 in \citep{vershynin2018high} with $\cB(1)$ replaced with $\cB(a)$.
\end{proof}

\begin{lemma}[Performance Difference Lemma (Lemma 3.2 in \citep{cai2020provably})] \label{lem:performance difference lemma}
    For any policies $\pi, \bar \pi$, it holds that  
    \begin{align*}
        v^{\pi}(s_1) - v^{\bar \pi}(s_1)
        = \sum_{h=1}^H \bigE_{(S_h, A_h) \sim \bPmarg{h}_{\pi, s_1}} \left(q^{\bar \pi}(S_h, A_h) - v^{\bar \pi}(S)\right) \, .
    \end{align*}
\end{lemma}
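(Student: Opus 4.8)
The plan is to prove this by a telescoping argument that uses $v^{\bar\pi}$ as a potential evaluated along a trajectory generated by $\pi$. Starting from the definition $v^{\pi}(\startstate) = \bigE_{\trajectoryrand \sim \bP_{\pi, \startstate}} R_{1:H}$, I would introduce the telescoping sum $\sum_{h=1}^H (v^{\bar\pi}(S_{h+1}) - v^{\bar\pi}(S_h))$, which collapses to $v^{\bar\pi}(S_{H+1}) - v^{\bar\pi}(S_1)$. Since $S_1 = \startstate$ deterministically and $v^{\bar\pi}(\termstate) = 0$ by \cref{eq:v and q at term state}, this equals $-v^{\bar\pi}(\startstate)$. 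Taking the expectation over $\trajectoryrand \sim \bP_{\pi, \startstate}$ and rearranging then gives the intermediate identity $v^{\pi}(\startstate) - v^{\bar\pi}(\startstate) = \bigE_{\trajectoryrand \sim \bP_{\pi, \startstate}} \sum_{h=1}^H (R_h + v^{\bar\pi}(S_{h+1}) - v^{\bar\pi}(S_h))$.

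The second step is to identify each summand with the advantage $q^{\bar\pi}(S_h, A_h) - v^{\bar\pi}(S_h)$. Here I would condition on $(S_h, A_h)$ and use the Markov structure of $\bP_{\pi, \startstate}$ described in \cref{sec:setting}: irrespective of $\pi$, the reward satisfies $R_h \sim \cR(S_h, A_h)$ and the successor $S_{h+1} \sim P(S_h, A_h)$, so that $\bigE[R_h + v^{\bar\pi}(S_{h+1}) \mid S_h, A_h] = \bigE_{R \sim \cR(S_h, A_h),\, S' \sim P(S_h, A_h)}[R + v^{\bar\pi}(S')]$. This inner expectation is exactly the one-step Bellman decomposition of $q^{\bar\pi}(S_h, A_h)$, obtained by splitting the definition $q^{\bar\pi}(s,a) = \bigE_{\trajectoryrand \sim \bP_{\bar\pi, s, a}} R_{\stage(s):H}$ into the immediate reward plus a continuation whose conditional expectation given $S_{h+1}$ is $v^{\bar\pi}(S_{h+1})$. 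Applying the tower rule with the stage-$h$ marginal $\bPmarg{h}_{\pi, \startstate}$ of $(S_h, A_h)$, and summing over $h \in [H]$, yields the stated identity.

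The main thing to get right is the conditional-expectation step: I must justify that the joint law of $(R_h, S_{h+1})$ given $(S_h, A_h)$ does not depend on the policy used to reach stage $h$, which is precisely the Markov property built into the generative description of $\bP_{\pi, s}$. The remainder is bookkeeping — tracking that the reward sum runs to $H$ while the MDP has $H+1$ stages, and that the boundary terms vanish because $S_1$ is deterministic and the terminal value is zero. I would also note the harmless typo in the statement: $v^{\bar\pi}(S)$ should read $v^{\bar\pi}(S_h)$.
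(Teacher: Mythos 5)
Your proof is correct, and it is the standard telescoping argument for the performance difference lemma. Note that the paper itself offers no proof of this statement — it imports the lemma by citation from Lemma 3.2 of \citet{cai2020provably}, whose proof is precisely the Kakade--Langford-style decomposition you give: telescope $v^{\bar\pi}$ along a $\pi$-trajectory, use the Markov property to identify $\bigE[R_h + v^{\bar\pi}(S_{h+1}) \mid S_h, A_h] = q^{\bar\pi}(S_h,A_h)$, and apply the tower rule with the stage-$h$ marginal. Your handling of the boundary terms ($S_1$ deterministic, $v^{\bar\pi}(\termstate)=0$ via \cref{eq:v and q at term state}) and your reading of $v^{\bar\pi}(S)$ as a typo for $v^{\bar\pi}(S_h)$ are both right.
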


\begin{lemma}[Elliptical Potential Lemma (Lemma 19.4 in \citep{lattimore2020bandit})] \label{lem:elliptical potential}
    Let $V_0 \in \bR^{d \times d}$ be positive definite and $a_1, \dots, a_n \in \bR^d$ be a sequence of vectors with $\norm{a_t}_2 \le L \le \infty$ for all $t \in [n], V_t = V_0 + \sum_{s \le t} a_s a_s^\T$. then,
    \begin{align*}
        \sum_{t=1}^n \min \left\{ 1, \norm{a_t}_{V_{t-1}^{-1}}^2 \right\} 
        \le 2 \log \left( \frac{\det V_n}{\det V_0} \right) 
        \le 2d \log \left( \frac{\tr V_0 + n L^2}{d \det(V_0)^{1/d}} \right)
        \le 2d \log \left( \frac{d \lambda + n L^2}{d \lambda} \right) \, .
    \end{align*}
\end{lemma}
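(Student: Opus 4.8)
The plan is to convert the additive sum of elliptical potentials into a multiplicative quantity governed by the determinants $\det V_t$, and then bound that determinant ratio by elementary means. First I would record the rank-one determinant recursion: since $V_t = V_{t-1} + a_t a_t^\top$ with $V_{t-1}$ positive definite (positive-definiteness of $V_0$ is preserved under adding PSD rank-one terms), the matrix determinant lemma gives $\det V_t = \det V_{t-1}\,(1 + \norm{a_t}_{V_{t-1}^{-1}}^2)$. Telescoping this product from $t=1$ to $n$ yields $\det V_n/\det V_0 = \prod_{t=1}^n (1 + \norm{a_t}_{V_{t-1}^{-1}}^2)$, so that $\log(\det V_n / \det V_0) = \sum_{t=1}^n \log(1 + \norm{a_t}_{V_{t-1}^{-1}}^2)$.

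The first inequality then follows from the elementary scalar bound $\min\{1, x\} \le 2 \log(1+x)$, valid for all $x \ge 0$. I would verify this in two regimes: for $x \in [0,1]$ the function $2\log(1+x) - x$ vanishes at $x=0$ and has derivative $2/(1+x) - 1 \ge 0$, while for $x > 1$ we have $2\log(1+x) > 2\log 2 > 1$. Applying this with $x = \norm{a_t}_{V_{t-1}^{-1}}^2$ and summing gives $\sum_{t=1}^n \min\{1, \norm{a_t}_{V_{t-1}^{-1}}^2\} \le 2\sum_{t=1}^n \log(1 + \norm{a_t}_{V_{t-1}^{-1}}^2) = 2\log(\det V_n / \det V_0)$, which is the first claimed inequality.

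For the second inequality I would bound $\det V_n$ using the AM--GM inequality on its (positive) eigenvalues: $\det V_n = \prod_{i=1}^d \lambda_i(V_n) \le (\tr V_n / d)^d$, and then control the trace by $\tr V_n = \tr V_0 + \sum_{t=1}^n \norm{a_t}_2^2 \le \tr V_0 + nL^2$. Substituting, $2\log(\det V_n/\det V_0) \le 2d\log((\tr V_0 + nL^2)/d) - 2\log\det V_0$; rewriting $2\log\det V_0 = 2d\log(\det V_0)^{1/d}$ and combining the logarithms produces exactly $2d\log\!\left(\frac{\tr V_0 + nL^2}{d(\det V_0)^{1/d}}\right)$. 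Finally, the third inequality is the specialization $V_0 = \lambda I$, for which $\tr V_0 = d\lambda$ and $(\det V_0)^{1/d} = \lambda$, so the second bound collapses to $2d\log\!\left(\frac{d\lambda + nL^2}{d\lambda}\right)$.

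There is no genuinely hard step here; the result is standard. The only points requiring care are the rank-one determinant update (which needs $V_{t-1}$ invertible, guaranteed as noted above) and the elementary inequality $\min\{1,x\}\le 2\log(1+x)$, whose two-regime verification is the one place a constant could be mishandled. Everything else is a mechanical assembly of the determinant telescoping and the AM--GM/trace bound, so I expect the write-up to be short.
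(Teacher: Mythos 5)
Your proposal is correct and is essentially the standard argument from the cited source (Lemma 19.4 of \citealp{lattimore2020bandit}): the paper itself gives no proof, importing the result by citation, and the reference's proof uses exactly your ingredients — the rank-one determinant update telescoped into $\log(\det V_n/\det V_0)$, the scalar bound $\min\{1,x\}\le 2\log(1+x)$, and AM--GM on the eigenvalues with the trace bound $\tr V_n \le \tr V_0 + nL^2$. Your observation that the final inequality is the specialization $V_0=\lambda I$ is also the intended reading of the statement, so nothing is missing.
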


\begin{lemma}[Projection Bound (Lemma 8 in \citep{zanette2020learning})] \label{lem:projection bound}
    Let $(a_i)_{i \in [n]}$ be any sequence of vectors in $\bR^d$ and $(b_i)_{i\ in [n]}$ be any sequence of scalars such that $|b_i| \le c$.
    For any $\lambda \ge 0$ and $k \in \bN$ we have
    \begin{align*}
        \norm{\sum_{i=1}^n a_i b_i}^2_{(\sum_{i=1}^n a_i a_i^\top + \lambda I)^{-1}}
        \le n c^2 \, .
    \end{align*}
\end{lemma}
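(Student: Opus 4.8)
The plan is to reduce the claim to the Cauchy--Schwarz inequality via the variational (dual-norm) characterisation of the matrix-weighted norm, which converts the awkward matrix inverse into a clean optimisation. Write $V=\sum_{i=1}^n a_i a_i^\top + \lambda I$ and $w=\sum_{i=1}^n a_i b_i$, so that the quantity to be bounded is $\norm{w}_{V^{-1}}^2 = w^\top V^{-1} w$. Assuming $\lambda>0$ (so that $V$ is positive definite and invertible; the boundary case $\lambda=0$ will follow by a limiting argument), I would use that for a positive definite $V$ the dual of $\norm{\cdot}_V$ is exactly $\norm{\cdot}_{V^{-1}}$, i.e. $\norm{w}_{V^{-1}} = \sup_{x\,:\,\norm{x}_V\le 1} \ip{x, w}$, which follows from the substitution $x=V^{-1/2}y$ with $\norm{y}_2\le 1$.

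First I would expand the inner product over the summation defining $w$, writing $\ip{x,w}=\sum_{i=1}^n b_i \ip{x, a_i}$, and apply Cauchy--Schwarz across the index $i$ to get $\ip{x,w} \le \big(\sum_{i=1}^n b_i^2\big)^{1/2}\big(\sum_{i=1}^n \ip{x,a_i}^2\big)^{1/2}$. The first factor is controlled by the hypothesis $|b_i|\le c$, giving $\sum_i b_i^2 \le n c^2$. For the second factor I would use the positive-semidefinite comparison $\sum_{i=1}^n a_i a_i^\top \preceq V$, which holds because $V - \sum_i a_i a_i^\top = \lambda I \succeq 0$, so that $\sum_i \ip{x,a_i}^2 = x^\top\big(\sum_i a_i a_i^\top\big)x \le x^\top V x = \norm{x}_V^2 \le 1$ on the feasible set. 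Combining, $\norm{w}_{V^{-1}} \le c\sqrt{n}$, and squaring yields the claim $\norm{w}_{V^{-1}}^2 \le n c^2$.

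The only genuine subtlety -- and the place I would be most careful -- is the positive-semidefinite / invertibility step rather than any heavy computation. For $\lambda=0$ the matrix $V$ may be singular, so the inverse in the statement must be read as a pseudoinverse; I would handle this either by replacing $V^{-1}$ with $V^\dagger$ and noting $w$ lies in the range of $\sum_i a_i a_i^\top$, or simply by proving the bound for every $\lambda>0$ and letting $\lambda\downarrow 0$. An equivalent and arguably cleaner route avoiding the dual norm is to stack the vectors into $A=[a_1,\dots,a_n]\in\bR^{d\times n}$ and $b=(b_i)_{i\in[n]}$, observe $w=Ab$ and $\sum_i a_i a_i^\top = AA^\top$, and show via the singular value decomposition of $A$ that $A^\top (AA^\top+\lambda I)^{-1} A \preceq I$, its eigenvalues being $\sigma_j^2/(\sigma_j^2+\lambda)\le 1$; then $\norm{w}_{V^{-1}}^2 = b^\top A^\top(AA^\top+\lambda I)^{-1}A\,b \le \norm{b}_2^2 = \sum_i b_i^2 \le n c^2$. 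This spectral picture also explains the name ``projection bound'': at $\lambda=0$ the matrix $A^\top(AA^\top)^{\dagger}A$ is precisely the orthogonal projection onto the row space of $A$, whose operator norm is $1$.
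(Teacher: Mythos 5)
Your proposal is correct. Note that the paper does not actually prove this lemma at all: it is imported verbatim as Lemma 8 of \citep{zanette2020learning}, so there is no in-paper argument to compare against. Both of your routes are valid and self-contained. The dual-norm route is sound: for $\lambda>0$ the identity $\norm{w}_{V^{-1}}=\sup_{\norm{x}_V\le 1}\ip{x,w}$ holds, Cauchy--Schwarz over the index $i$ gives $\ip{x,w}\le \bigl(\sum_i b_i^2\bigr)^{1/2}\bigl(\sum_i \ip{x,a_i}^2\bigr)^{1/2}$, and the comparison $\sum_i a_i a_i^\top \preceq V$ controls the second factor by $1$ on the feasible set. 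Your second route, writing $w=Ab$ and showing $A^\top(AA^\top+\lambda I)^{-1}A\preceq I$ via the singular values $\sigma_j^2/(\sigma_j^2+\lambda)\le 1$, is essentially the standard argument behind the cited lemma, and it explains the name. Your care about $\lambda=0$ is also warranted and handled correctly: the statement as written allows $\lambda=0$, where the inverse may fail to exist; either the pseudoinverse reading (using that $w$ lies in the range of $AA^\top$, so $A^\top(AA^\top)^\dagger A$ acts as an orthogonal projection of operator norm at most $1$) or the limit $\lambda\downarrow 0$ (again using $w\in\mathrm{range}(AA^\top)$, so the quadratic form converges) closes that case. The only cosmetic remark is that the stray ``$k\in\bN$'' in the statement is vestigial and plays no role, so you were right to ignore it.
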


\newpage

\end{document}